\DeclareMathAlphabet\mathbfcal{OMS}{cmsy}{b}{n}
\DeclareMathOperator*{\argmax}{arg\,max}
\newcommand{\inprod}[2]{\langle #1, #2\rangle}
\newtheorem{theorem}{Theorem}
\theoremstyle{definition}
\theoremstyle{remark}
\newcommand{\bfbeta}{\boldsymbol{\beta}}
\newcommand{\bfalpha}{\boldsymbol{\alpha}}
\newcommand{\secref}[1]{Sec.~\ref{#1}}
\newcommand{\figref}[1]{Fig.~\ref{#1}}
\newcommand\numberthis{\addtocounter{equation}{1}\tag{\theequation}}
\newcommand\nm[1]{}
\newcommand{\boldnu}{\boldsymbol \nu}
\newcommand{\boldpsi}{\boldsymbol \psi}
\title{
{Learning and Adaptation\\ for Millimeter-Wave
Beam Tracking and Training:\\ a Dual Timescale Variational Framework}
}
\author{Muddassar Hussain, \textit{Member, IEEE} and Nicol\`{o} Michelusi, \textit{Senior Member, IEEE}
\thanks{This work was supported in part by the National Science Foundation under grants CNS-1642982 and CNS-2129015.}	
	\thanks{M. Hussain is with the School of Electrical and Computer Engineering, Purdue University, West Lafayette, IN, USA; email: hussai13@purdue.edu. N. Michelusi is with the School of Electrical, Computer and Energy Engineering, Arizona State University, AZ, USA; email: nicolo.michelusi@asu.edu.}%
	\thanks{A preliminary version of this paper appeared at IEEE Globecom 2021 \cite{gcom}.}
	\vspace{-8mm}
}
\begin{document}
\setstcolor{red}
\setulcolor{red}
\setul{red}{2pt}

\maketitle

\begin{abstract}
Millimeter-wave vehicular networks incur enormous beam-training overhead to enable narrow-beam communications. This paper proposes a learning and adaptation framework in which the dynamics of the communication beams are learned and then exploited to design adaptive {beam-tracking and} training with low overhead: on a long-timescale, a deep recurrent variational autoencoder (DR-VAE) uses noisy beam-training feedback to learn a probabilistic model of beam dynamics {and enable predictive beam-tracking}; on a short-timescale, an adaptive beam-training procedure is formulated as a partially observable (PO-) Markov decision process (MDP) and optimized via \emph{point-based value iteration} (PBVI) by leveraging beam-training feedback and a probabilistic prediction of the strongest beam pair provided by the DR-VAE. In turn, beam-training feedback is used to refine the DR-VAE via stochastic gradient ascent in a continuous process of learning and adaptation. The proposed DR-VAE learning framework learns accurate beam dynamics: it reduces the Kullback-Leibler divergence between the ground truth and the learned model of beam dynamics by ${\sim}95\%$ over the Baum-Welch algorithm and a naive learning approach that neglects feedback errors. Numerical results on a line-of-sight scenario with multipath and 3D beamforming reveal that the proposed dual timescale approach yields near-optimal spectral efficiency, and improves it by $130\%$ over a policy that  scans exhaustively over the dominant beam pairs, and by $20\%$ over a state-of-the-art POMDP policy. Finally, a low-complexity policy is proposed by reducing the POMDP to an error-robust MDP, and is shown to perform well in regimes with infrequent feedback errors.\end{abstract}
\vspace{-5mm}
\section{Introduction} 
Millimeter-wave (mm-wave)  has emerged as the most promising technology to enable multi-gigabit rates in vehicular communications, thanks to large bandwidth availability~\cite{choi2016millimeter}. By operating at carrier frequencies ranging from $30$ up to hundreds of GHz, mm-wave communications suffer from high isotropic path loss, overcome
 by using large antenna arrays with beamforming \cite{rappaport_mmwave_book}. Yet,  highly directional communications are susceptible to beam misalignment due to the mobility of the user equipment (UE) or the surrounding propagation environment.
 Traditional beam-alignment schemes such as exhaustive search \cite{giordani2017millimeter} suffer from severe overhead, increased communication delay, and degraded spectral efficiency.

To achieve efficient design, adaptive beam-training schemes have been proposed \cite{va2016beam,scalabrin2018beam,TVT2020,javdi,TWC2019}, that leverage information on the UE's mobility and beam-training feedback.
In our recent work \cite{TVT2020}, we showed that statistical knowledge of the UE's mobility
is key to reduce the beam-training overhead and improve spectral efficiency, even in highly mobile V2X scenarios.  Yet, the design in \cite{TVT2020} relies on accurate statistical knowledge of the beam dynamics, which may need to be estimated from noisy measurements.
This observation begs the critical question: \emph{How can beam dynamics be estimated and leveraged to optimize beam-training and data communications?}

To address this challenge, in this paper we consider a mm-wave vehicular communication scenario, where a UE moves along a road
according to an \emph{unknown} mobility model and is served by a roadside base station (BS). Both UE and BS employ large antenna arrays with 3D beamforming to enable directional communication. The mobility of the UE and the surrounding environment induce dynamics in the \emph{strongest beam pair} that maximize the beamforming gain; these dynamics could be exploited to enable efficient beam-tracking and training. To learn and exploit these unknown dynamics,
 we propose a dual timescale learning and adaptation framework:
 in the long-timescale (of the order of several hundred frames), 
 the BS uses beam-training feedback to learn the \emph{strongest beam pair dynamics} and enable predictive beam-tracking,
 using a deep recurrent variational autoencoder (DR-VAE) \cite{rvae};
in the short-timescale (one frame duration),
 adaptive beam-training schemes leverage the probabilistic prediction of the strongest beam pair provided by the DR-VAE and
  beam-training feedback to minimize the overhead and maximize the frame spectral efficiency. In turn, beam-training measurements are used to refine the DR-VAE via stochastic gradient ascent, in a continuous process of learning and adaptation.

We formulate the decision-making process over the short-timescale as a partially observable (PO-) Markov decision process (MDP) and propose a \emph{point-based value iteration} (PBVI) method
that leverages provable structural properties
to design an approximately optimal policy, which provides the rule to select beam-training actions based on the belief (probability distribution over the optimal BS-UE beam pair, given the history of actions and beam-training measurements). 
To trade computational complexity with accuracy, we propose an MDP-based policy that operates under the assumption of error-free beam-training feedback, and an error-robust version that combines MDP-based actions with POMDP-based belief updates:
  it is shown that the error-robust MDP-based policy can be optimized
   at a fraction ($\sim$1/5) of the time while achieving spectral efficiency
close to the POMDP-based policy in regimes with infrequent feedback errors.

Numerical evaluations using 3D analog beamforming at both BS and UE reveal that the DR-VAE coupled with PBVI-based adaptation
reduce the 
average Kullback-Leibler (KL) divergence between a ground truth Markovian and the learned models
by $\sim$95\% over the Baum-Welch algorithms \cite{baum_welch}
and a naive approach that ignores errors in the beam-training measurements,
and improve spectral efficiency by $\sim$10\%.
 Finally, we simulate a setting with 2D UE mobility and 3D analog beamforming {for two scenarios, one with line-of-sight (LOS) channels, and the other with additional non-LOS (NLOS) multipath}:
the PBVI policy 
 improves the spectral efficiency over a state-of-the-art \emph{short-timescale single shot} (STSS) POMDP policy \cite{stss} by $16\%$ and $20\%$ in the two scenarios,
 and outperforms a scheme that scans exhaustively over the dominant beam pairs (EXOS) by $85\%$ and $130\%$, respectively. Similarly, the error-robust MDP policy improves the spectral efficiency by $8\%$ and $7\%$ over the STSS policy and  by $70\%$ and $100\%$ over EXOS, respectively.

{\bf Related Work}: 
Beam-alignment has been a topic of intense research in the last decade
and can be categorized as beam sweeping~\cite{michelusi2018optimal}, estimation of angles of arrival (AoA) and departure (AoD)~\cite{marzi}, and \mbox{contextual-information-aided} schemes~\cite{inverse_finger,ten_comp,jefflideep,va2018online,radar,sub6}. 
Despite their simplicity, these schemes do not incorporate mobility dynamics, leading to large beam-training overhead in high mobility scenarios~\cite{choi2016millimeter}. 
Moreover, beam-training may still be required to compensate for noise and inaccuracies in  contextual information,
or due to privacy concerns (e.g., sharing GPS coordinates as in \cite{inverse_finger,ten_comp,jefflideep,va2018online}).
 
 Recent works \cite{stss,alkhateeb2018deep,va2018online,second-best,javdi,TWC2019,nitindeep,jefflideep} proposed adaptive and machine learning-based solutions exploiting side-information and/or beam-training feedback.
  For instance,~\cite{alkhateeb2018deep} uses the received sounding signal from multiple surrounding BSs to predict the optimal beam via deep learning. In \cite{nitindeep}, a convolutional neural network is trained based on simulated channels and then used to predict beams via  compressive sensing.
  Reinforcement learning-based beam-alignment schemes have been proposed in \cite{stss,va2018online, second-best,javdi,TWC2019}.
  In our previous works \cite{TWC2019,second-best}, we used  the beam-training feedback to design adaptive beam-training policies under the assumption of error-free and erroneous beam-training feedback, respectively.
   In the aforementioned works, mobility is not leveraged in the design, incurring large overhead in high mobility scenarios~\cite{choi2016millimeter}.
 
 To overcome this limitation, in our recent work~\cite{TVT2020} we showed that, by exploiting the beam dynamics via POMDP, the spectral efficiency of V2X communication is greatly improved over conventional schemes, such as exhaustive search.
Similarly, the paper \cite{stss}
exploits beam dynamics to make beam predictions, and proposes a POMDP-based scheme (evaluated numerically in \secref{sec:numres}) to maximize the expected spectral efficiency.
Yet, these papers
assume a priori statistical knowledge of beam dynamics, which need to be learned
from noisy beam-training measurements in practice.
Without statistical knowledge of beam dynamics, \cite{TVT2020} is not amenable to
concurrent learning and optimization of the POMDP policy, due to large optimization cost. In contrast to~\cite{TVT2020}, 
herein, we decouple the beam-training design 
from the estimation of beam dynamics, by proposing a dual timescale approach in which a stochastic model of beam dynamics is learned on the long-timescale to enable predictive beam-tracking, interleaved with the execution of a beam-training policy in the short-timescale. The beam-training procedure is optimized in the short-timescale (single frame, agnostic to beam dynamics),
using beam predictions provided by the long-timescale learning module.
We note three advantages of this approach over~\cite{TVT2020,stss}:
1) since learning the model of beam dynamics is decoupled from the beam-training policy optimization,
learning and adaptation can be done concurrently (vs offline model learning required in \cite{TVT2020});
2) the maximization of the frame spectral efficiency  (vs average long-term in \cite{TVT2020}),
favors accurate detection of the optimal BS-UE's beam pair, 
which in turn improves the ability to predict optimal beam association for the next frames,
and indirectly maximizes spectral efficiency in the long-timescale;
3) by enabling adaptive beam-training  of \emph{variable} duration with \emph{multiple} beam-training rounds,
vs a \emph{single-shot} beam-training phase of \emph{fixed} duration of \cite{stss}, we achieve superior performance, as
shown numerically in \secref{sec:numres}.

\textbf{Contributions:} in a nutshell, we propose a dual timescale approach 
in which the dynamics of the strongest beam pair are learned over the long-timescale, to enable predictive beam-tracking
and efficient beam-training over the short-timescale:
\begin{enumerate}[leftmargin=*]
\item We propose a DR-VAE-based learning framework to learn the dynamics of the strongest beam pair, trained via stochastic gradient ascent from beam-training feedback;
\item We formulate a POMDP framework to design beam-training in the short-timescale,
with the goal of maximizing the expected frame spectral efficiency,
by leveraging predictions provided by the DR-VAE and beam-training feedback. We propose a linear time PBVI algorithm to find an approximately optimal policy, and leverage structural properties to reduce its complexity;
\item 
For the special case of error-free feedback, we prove that it is optimal to scan the most likely beam pairs, leading to a low-complexity value iteration algorithm. We design an error-robust version that combines MDP-based actions with POMDP-based belief updates,
and demonstrate near-optimal performance in regimes with infrequent  errors.
\end{enumerate}
\noindent
The rest of this paper is organized as follows:
\secref{sec:sys_model} presents the system model; 
\secref{sec:POMDP} describes the POMDP formulation;
 \secref{sec:vae} presents the DR-VAE learning framework;
 \secref{sec:numres} provides numerical results, followed by final remarks in \secref{sec:conc}.

 \textbf{Notation:} $\mathbb C$: set of complex numbers; $\mathcal{CN}$: complex Gaussian distribution;
 $\mathcal E(m)$: exponential distribution with mean $m$;
 $\mathcal G$: standard Gumbel distribution;
 $\mathbb E[\cdot]$: expectation operator;
 $\|
 \cdot\|_2$: L2 norm; $(\cdot)^{\rm H}$: complex conjugate transpose; $\odot$: Hadamard (element-wise) product of two vectors; $\mathbb I[\cdot]$: indicator function;
 $\inprod{\bfbeta }{\bfalpha}=\sum_{i=1}^n\bfbeta(i)\bfalpha(i)$: inner product of
 $\bfalpha,\bfbeta\in\mathbb R^n$;
$\mathbf e_i$: standard basis column vector with $\mathbf e_i(i){=}1$,
$\mathbf e_i(j){=}0,\forall j{\neq}i$;
$\mathcal B$: probability simplex (of suitable dimension);
{for a finite discrete set $\mathcal I$ and a function $g{:}\mathcal I{\mapsto}\mathbb R$,
 $\argmax_{i\in\mathcal I}g(i)$ returns an element 
 $i^*{\in}\mathcal I$ such that $g(i^*){=}\max_{i\in\mathcal I}g(i)$, with ties  broken arbitrarily.}
\vspace{-2mm}
\section{System Model}
\label{sec:sys_model}
We consider a mobile mm-wave vehicular system: 
 a UE is moving along the road, covered by multiple roadside base stations (BSs),
 and is served by one BS at a time;
  if it exits the coverage area of its serving BS, handover is performed to the next serving BS. In this paper, we restrict the design to the UE and its serving BS, as depicted in \figref{fig:sys}.
The BS and UE both use 3D beamforming with large antenna arrays (e.g, uniform planar arrays), with $M_{\rm tx}$ and $M_{\rm rx}$ antennas, respectively {(the main system parameters are summarized in Table
\ref{table1} of \secref{sec:numres})}.

We consider a frame-based system,
with frames of duration $T_{\rm fr}$ divided into $K$ slots, each of duration $T_{\rm s}{\triangleq}T_{\rm fr}/K$. 
To enable beamforming, each frame is split into a beam-training (BT) phase of variable number of slots, followed by data communication (DC) for the remainder of the frame, as shown in \figref{fig:two_timescale}.
The mobility of the UE and of the surrounding propagation environment induce dynamics in the \emph{optimal} beam pair that maximize the beamforming gain at the transmitter and receiver (see example of \figref{fig:sys}).
We assume that the frame duration does not exceed 
the \emph{beam coherence time} $T_{b}$ \cite{beam_coherence_time},
i.e., $T_{\rm fr}\leq T_{b}$, so that the optimal beam pair remain constant during the entire frame duration but may change across frames. {For example,
using the analysis of \cite{beam_coherence_time}
for a line of sight (LOS) scenario
with beams of $11.25^\circ$ beamwidth,
 operating at a carrier frequency of 30[GHz],
we find that $T_b {\simeq}400$[ms]
 for a  UE traveling at 108[km/h] at 50[m] from the BS, perpendicular to the beam direction, so that the choice of $T_{\rm fr}{=}20$[ms] (used also in our numerical evaluations) satisfies this assumption}.

\begin{figure}[t]
	\centering
	\includegraphics[trim = 0 0 0 0,clip,width=.85\columnwidth]{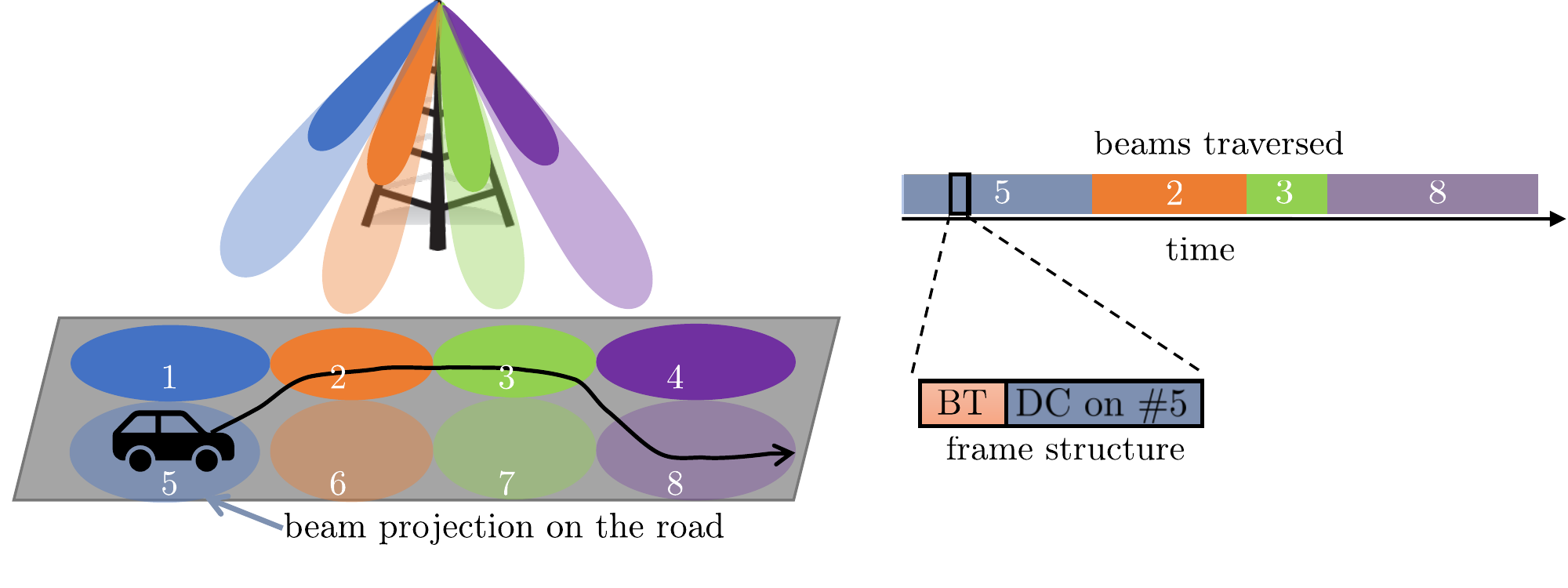}
	\vspace{-2mm}
	\caption{Example of a mm-wave {system with one BS and one omni-directional UE (multi-antenna case considered in the paper): as the UE moves across the coverage region, it traverses several communication beams (top right); to enable beamforming, each frame
	is split into a beam-training (BT) phase followed by data communication (DC) for the rest of the frame (bottom right).}}\label{fig:sys}
\end{figure}

The goal of this paper is to learn these beam-dynamics to enable predictive beam-tracking and design adaptive BT techniques with improved spectral efficiency.
To this end, we propose a \emph{dual timescale learning and adaptation framework}, depicted as a block diagram in \figref{fig:two_timescale}:
in the long-timescale (the time interval during which the UE stays within the BS' coverage area, of the order of several hundred frames)
a model of beam dynamics is learned, to enable predictive beam-tracking; in the short-timescale (one frame duration),
an adaptive BT policy is designed to maximize the expected frame spectral efficiency, by exploiting 
a probabilistic beam prediction (prior belief) provided by the beam dynamics
 learning module and BT feedback. Specifically:
\begin{enumerate}[leftmargin=*]
\item Modeling the dynamics of the strongest beam pair index (SBPI, the index of the beamforming vectors that maximize the {expected} beamforming gain, {averaged over small-scale conditions}) as a Markov process, the BS leverages a learned SBPI's transition model to
provide probabilistic predictions over SBPIs (prior belief)
 at the start of each frame;
\item The prior belief and BT feedback
are leveraged to optimize the
beam-training process on the short-timescale to maximize the spectral efficiency via POMDP (\secref{sec:POMDP});
\item The Markov SBPI model is  continuously learned via a DR-VAE-based framework (\secref{sec:vae}), based on BT feedback collected in the observation buffer.
\end{enumerate}
Next, we describe the 
model features and abstractions used.
\vspace{-4mm}
 \subsection{Signal and Channel Models}
 \vspace{-1mm}
 \label{sec:sig}
 Let $\mathbf x_{t,k}{\in}\mathbb C^{L_{\rm sy}\times 1}$ be the {sequence of} 
 $L_{\rm sy}$ symbols transmitted by the BS
 in slot $k{\in}\mathcal K{\triangleq} \{0,1,\cdots,K{-}1 \}$ of
 frame $t \in \mathbb N$,
   with $\mathbb E[\Vert \mathbf x_{t,k}\Vert_2^2]{=}L_{\rm sy}$. 
The corresponding signal received at the UE,
$\mathbf y_{t,k}\in\mathbb C^{L_{\rm sy}\times 1}$, is expressed as
\begin{align}
\label{eq:signal_model}
\mathbf y_{t,k} = \sqrt{P_{t,k}}\cdot(\mathbf f_{t,k}^{\rm H} \mathbf{H}_{t,k} \mathbf{c}_{t,k})\cdot{\mathbf x_{t,k}} +\mathbf w_{t,k},
\end{align}
where: $P_{t,k}$ is the average transmit power of the BS; 
{$\mathbf H_{t,k}{\in}\mathbb C^{M_{\rm rx}\times M_{\rm tx}}$ is the channel matrix;}
{$\mathbf w_{t,k}{\in}\mathbb C^{L_{\rm sy}\times 1}$} is the AWGN with variance $\sigma_w^2$, $\mathbf w_{t,k}{\sim} \mathcal{CN}(\boldsymbol 0, \sigma_w^2 \mathbf I)$;
 $\mathbf{c}_{t,k} {\in} \mathcal C$ and $\mathbf{f}_{t,k} {\in}\mathcal F$  are unit-norm beamforming vectors at the BS and UE,
  taking values from pre-designed analog beamforming codebooks $\mathcal C{\subset}  \mathbb{C}^{M_{\rm tx}\times 1}$ and $\mathcal F{\subset} \mathbb{C}^{M_{\rm rx}\times 1}$, respectively.
  We index 
  all possible beamforming vector pairs of the BS and UE with the \emph{beam pair index} (BPI)
$j{\in}\mathcal S{\triangleq}\{1,\cdots,|\mathcal C||\mathcal F|\}$, so that
$(\mathbf c^{(j)},\mathbf f^{(j)}){\in}\mathcal C {\times}\mathcal F$ is the $j$th beam pair.

\begin{figure}[t]
	\centering
	\includegraphics[trim = 0 10 0 0 ,clip,width=0.7\columnwidth]{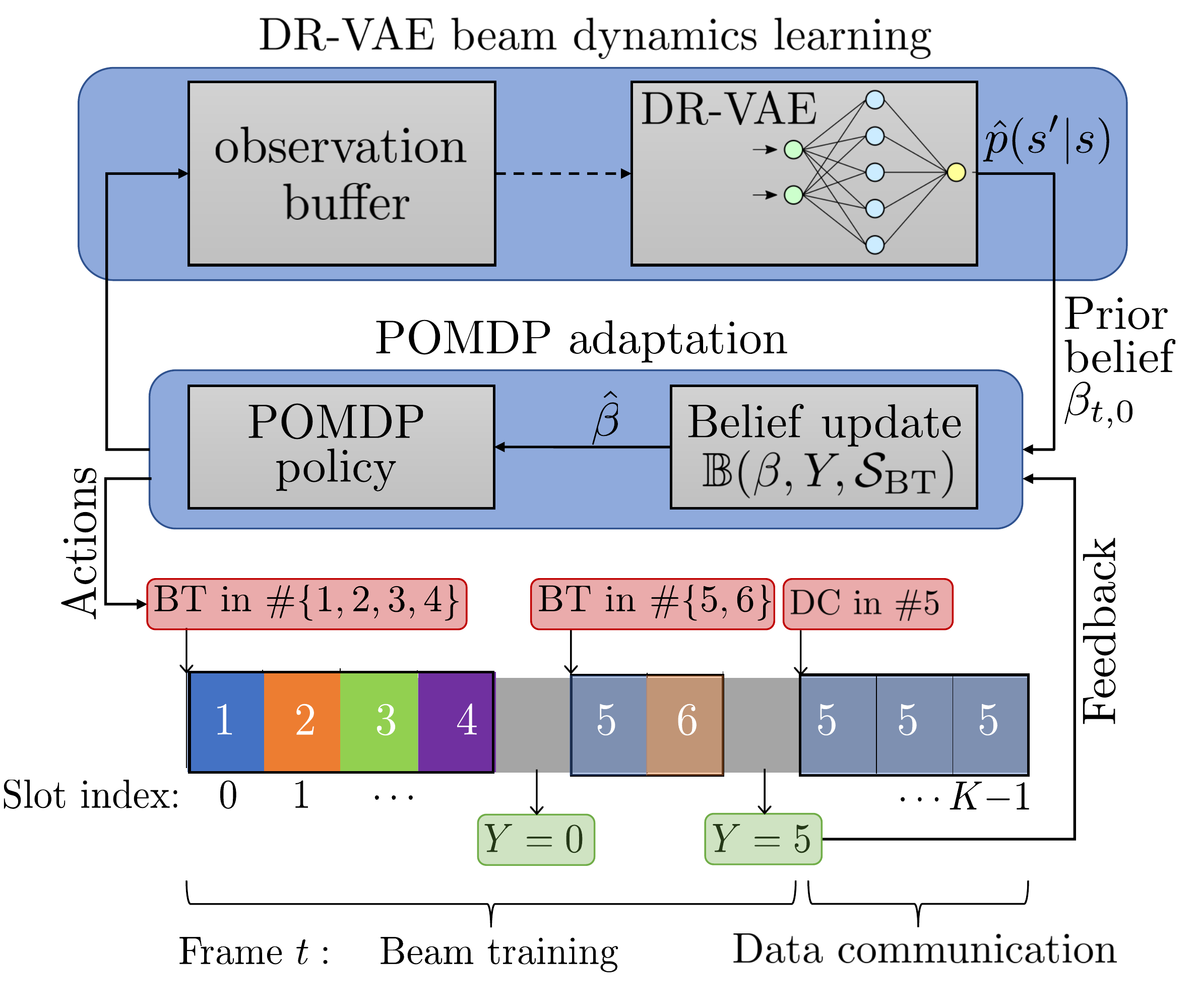} 
	\vspace{-2mm}
	\caption{
	Dual timescale framework (short- and long-timescale
	interactions shown with solid and dashed arrows, respectively) and example of POMDP policy, {based on \figref{fig:sys}, with the UE in the strongest beam pair index (BPI)  $5$:
	the BS executes BT on the set of BPIs $\{1,2,3,4\}$ over 4 consecutive slots, receives the feedback signal $Y{=}0$ (indicating the misalignment condition), then
	executes BT on the set of BPIs $\{5,6\}$ over two consecutive slots, receives the feedback signal $Y{=}5$ (indicating alignment on BPI 5); finally, it terminates BT and executes DC on the detected BPI $5$, until the end of the frame.}
	}\label{fig:two_timescale}
\end{figure}

  In this paper, we adopt {a multipath channel model with a LOS path \cite{TWC2019,blockage,alkhateeb_hybrid}, expressed as
\begin{align}
\label{eq:channel}
&\mathbf{H}_{t,k}{=}
\sum_{\ell=0}^{N_P}
\sqrt{M_{\rm tx} M_{\rm rx}} \cdot
h_{t,k}^{(\ell)}\cdot\Big(\mathbf{d}_{\rm rx}(\theta_t^{(\ell)})\cdot \mathbf{d}_{\rm tx}(\phi_t^{(\ell)})^{\rm H}\Big),
\end{align}
where $h_{t,k}^{(0)}{\sim}\mathcal{CN}(0,1/{\rm PL}_t)$ is the complex gain of the LOS path, with pathloss ${\rm PL}_t{=}(4 \pi d_t)^2/\lambda_c^2$,
 function of the UE-BS' distance $d_t$ and the carrier wavelength $\lambda_c$;
 $\theta_t^{(0)}$ and $\phi_t^{(0)}$ are the AoA and AoD of the LOS path (including both azimuth and elevation coordinates, since we use a 3D beamforming architecture), respectively;
 $\mathbf d_{\rm tx}(\cdot)$ and $\mathbf d_{\rm rx}(\cdot)$ are the unit norm  array response vectors of the BS' and UE's antenna arrays, respectively.
 The components indexed by $\ell{\geq}1$ denote $N_P$
 additional non-LOS (NLOS) paths, each with their own fading coefficient
 $h_{t,k}^{(\ell)}{\sim}\mathcal{CN}(0,\sigma_{\ell,t}^{2})$, AoA  $\theta_t^{(\ell)}$ and AoD $\phi_t^{(\ell)}$.
}


 {As observed in \cite{channel_model},} $d_t$, 
  {$\theta_t^{(\ell)}$, $\phi_t^{(\ell)}$ and $\sigma_{\ell,t}^{2}$
  are large-scale channel propagation features, hence
  remain (approximately)}
   constant during the frame duration, as reflected by the assumption $T_{\rm fr}{\leq}T_{b}$ discussed earlier.
 \label{page:longterm_bf}
 On the other hand, {the small-scale fading coefficients}  $h_{t,k}^{(\ell)}$ are fast-fading components, and are thus modeled as i.i.d over the slots within the frame. {In principle, the beamforming vectors $\mathbf f_{t,k}$ and $\mathbf c_{t,k}$ should be chosen to maximize the beamforming gain based on the instantaneous channel realization $\mathbf{H}_{t,k}$,
by solving $\max_{j\in\mathcal S}|\mathbf f^{(j)\rm H} \mathbf{H}_{t,k} \mathbf{c}^{(j)}|^2$. However, as also noted in \cite{channel_model}, doing so requires CSI at both transmitter and receiver, which may be unfeasible in large antenna systems. An alternative approach used in this paper is known as \emph{long-term beamforming} \cite{channel_model,4217735}, in which the beamforming vectors are optimized to the large-scale features but not the small-scale ones: the goal is to maximize the 
\emph{expected} beamforming gain $G_t$, averaged over the small-scale fading coefficients with the large-scale features kept constant, expressed as a function of the beamforming vectors $(\mathbf c,\mathbf f)$ as
 \begin{align*}
 \label{eq:Gtot}
 \numberthis
 G_t(\mathbf c,\mathbf f)
\triangleq
G_{t}^{\rm LOS}(\mathbf c,\mathbf f)+G_{t}^{\rm NLOS}(\mathbf c,\mathbf f),
 \end{align*} 
 where
 \begin{align*}
 \label{eq:GLOS}
 \numberthis
 G_{t}^{\rm LOS}(\mathbf c,\mathbf f)
 &\triangleq\frac{M_{\rm tx}M_{\rm rx}}{{\rm PL}_t}
 |\mathbf{d}_{\rm tx}(\phi_t^{(0)})^{\rm H}\mathbf{c}|^2
 |\mathbf{d}_{\rm rx}(\theta_t^{(0)})^{\rm H}\mathbf{f}|^2,\!\!
 \\
G_{t}^{\rm NLOS}(\mathbf c,\mathbf f)
& \triangleq
\!M_{\rm tx}M_{\rm rx}\!\!
\sum_{\ell=1}^{N_P}
\sigma_{\ell,t}^2
 |\mathbf{d}_{\rm tx}(\phi_t^{(\ell)})^{\rm H}\mathbf{c}|^2
 |\mathbf{d}_{\rm rx}(\theta_t^{(\ell)})^{\rm H}\mathbf{f}|^2
 \end{align*}
 are the expected gains along the
 LOS and NLOS paths.
 The maximum expected beamforming gain is then
 $G_t^{\max}{=}\max_{j\in\mathcal S}G_t(\mathbf c^{(j)}{,}\mathbf f^{(j)})$,
 maximized by the optimal BPI.}
 
 {Channel measurements \cite{channel_model} have shown that,
 even in \emph{dense urban environments},
 the mm-wave channel typically exhibits 1-3 paths, with the LOS one containing most of the signal energy, implying that
 $\sum_{\ell=1}^{N_P}\sigma_{\ell,t}^2{\ll}
1/{\rm PL}_t$. 
 Based on this fact, in this paper we assume that the maximum \emph{expected} beamforming gain is achieved along the LOS path, 
 \begin{align*}
 \label{Gmax}
G_t^{\max}
\approx
\max_{j\in\mathcal S} 
G_{t}^{\rm LOS}(\mathbf c^{(j)},\mathbf f^{(j)}),
\numberthis
 \end{align*}
 and is thus maximized by the strongest BPI (SBPI)
 \begin{align*}
 \label{Stdef}
S_t
=\argmax_{j\in\mathcal S} 
G_{t}^{\rm LOS}(\mathbf c^{(j)},\mathbf f^{(j)}),
\numberthis
 \end{align*}
 so that $(\mathbf c^{(S_t)},\mathbf f^{(S_t)})$ should be used at the BS and UE to maximize the expected beamforming gain in frame $t$.}
\vspace{-5mm}
\subsection{Sectored antenna {with binary SNR} model}
\label{secmodel}
We now approximate the beamforming gain using a \emph{sectored antenna model}~\cite{va2016beam, TWC2019, TVT2020}, which provides an analytically tractable yet valuable approximation of the actual beam pattern, as demonstrated numerically in \secref{sec:numres}. 
To develop this model, note that there is a relationship between the UE's position and the pathloss, AoA and AoD of the LOS path, denoted as
${\rm PL}(x)$,
$\theta(x)$ and $\phi(x)$ for a UE in position $x\in\mathcal X$ within the coverage area $\mathcal X$ of the BS.
From \eqref{eq:GLOS}, the expected beamforming gain along the LOS path is then expressed as
$$
G^{\rm LOS}(\mathbf c,\mathbf f,x)\triangleq\frac{M_{\rm tx}M_{\rm rx}}{{\rm PL}(x)}
 |\mathbf{d}_{\rm tx}(\phi(x))^{\rm H}\mathbf{c}|^2
 |\mathbf{d}_{\rm rx}(\theta(x))^{\rm H}\mathbf{f}|^2,
$$
for a UE in position $x$,
and the SBPI as
\begin{align}
\label{sbpi}
s^*(x) \triangleq \arg \max_{j \in {\mathcal S}} \
G^{\rm LOS}(\mathbf c^{(j)},\mathbf f^{(j)},x).
 \end{align}
With this definition, we partition the coverage area $\mathcal X$ into {irregularly shaped} sectors $\{\mathcal X_j\}_{j{\in}\mathcal S}$, with $\mathcal X_j{\triangleq}\{x{\in}\mathcal X{:}s^*(x){=}j\}$ representing
the set of positions in which the $j$th BPI
maximizes the expected beamforming gain (i.e., it is the SBPI).\footnote{{Note that $\mathcal X_j$ is empty, if the $j$th BPI does not maximize the expected beamforming gain within the coverage area of the BS (for instance, for beam pairs pointing away from each other). Since there are $|\mathcal S|$ BPIs, there are at most $|\mathcal S|$ such sectors, each one associated to a certain SBPI.}}

{Let $X_t$ be the position of the UE in frame $t$, so that $S_t{=}s^*(X_t)$ is the corresponding SBPI, and consider the $j$th BPI.}
 We denote the condition $X_t{\in}\mathcal X_j$ (or equivalently, $S_t{=}j$) as the \emph{beam-alignment condition} along BPI $j$: {in fact, when $S_t{=}j$, the $j$th
 BS' and UE's beamforming vector pair maximize
 the expected beamforming gain along the LOS path, as expressed by \eqref{Gmax} and \eqref{Stdef}}; we let
 ${\rm SNR}_{\rm BA}$ be the {expected SNR under the beam-alignment condition,
averaged over the small-scale fading coefficients.}
{This SNR requirement may be achieved via power control at the BS, to provide uniform signal quality over the coverage area. In fact, from the signal model \eqref{eq:signal_model} and using \eqref{eq:Gtot}, the expected SNR (averaged over  small-scale fading) in position $X_t{\in}\mathcal X_j$ under the $j$th BPI is expressed as
\begin{align}
\label{eq:expected_snr}
\!\!\!\mathbb E[{\rm SNR}_{t,k}]{=}
    \frac{P_{t,k}}{\sigma_w^2}G_t(\mathbf c^{(j)},\mathbf f^{(j)})
    {\geq}
    \frac{P_{t,k}}{\sigma_w^2}G^{\rm LOS}(\mathbf c^{(j)},\mathbf f^{(j)},X_t),\!\!\!
\end{align}
so that the condition $\mathbb E[{\rm SNR}_{t,k}]\geq {\rm SNR}_{\rm BA}$ over $\mathcal X_j$ under the $j$th BPI is guaranteed by
 choosing the BS' transmit power as
\begin{align}
\label{txpower}
P_{t,k}
=
\frac{\sigma_w^2\cdot{\rm SNR}_{\rm BA}}{\min_{x\in\mathcal X_j}
G^{\rm LOS}(\mathbf c^{(j)},\mathbf f^{(j)},x)}\triangleq P^{(j)}.
\end{align}}
Conversely, we denote $X_t{\notin}\mathcal X_j$ (i.e, $S_t{\neq}j$) as the \emph{beam-misalignment condition} along BPI $j$.
In this case, 
the expected beamforming gain is not maximized, yielding smaller expected SNR.
Since the beam-misalignment condition implies $G_t(\mathbf c^{(j)},\mathbf f^{(j)}){<} G_t^{\max}$,
we upper bound the {expected} SNR as $\frac{P^{(j)}}{\sigma_w^2}G_t(\mathbf c^{(j)},\mathbf f^{(j)}){\leq}
\rho{\cdot}{\rm SNR}_{\rm BA}$,
where $\rho{<}1$ is the misalignment to alignment gain ratio,
{which accounts for the effect of sidelobes and NLOS multipath.}

We have thus defined a binary SNR model, in which
the {expected} SNR is only a function of
the beam-alignment condition under the current choice of the BPI $j$:
under beam-alignment ($j{=}S_t$), the {expected} SNR is (at least) ${\rm SNR}_{\rm BA}$;
under beam-misalignment ($j{\neq}S_t$), the {expected} SNR is (at most) $\rho{\cdot}{\rm SNR}_{\rm BA}$; in this latter case, data communication is in outage since $\rho{\ll}1$.
{Note that it is desirable for ${\rm SNR}_{\rm BA}$ to be as large as possible, to maximize spectral efficiency; conversely, $\rho$ should be as small as possible to minimize errors in the detection of the SBPI (which 
explains the use of an upper bound on the expected SNR under beam-misalignment, as a worst case condition); additionally, the  \emph{power ratio} $\max_{j\in\mathcal S} P^{(j)}/\min_{j\in\mathcal S} P^{(j)}$ should be as small as possible to
minimize the peak-to-average power ratio at  the BS. These goals may be achieved via careful planning at deployment phase, including BS placement and beam design. We refer to Table \ref{table1} in \secref{sec:numres} for
an example of numerical values.}
\vspace{-5mm}
\subsection{Beam Training (BT) and Data Communication (DC)}
\label{BTDC}
We now introduce the BT and DC operations.
{The goal of the BT phase (of variable duration) is to detect the SBPI $S_t$, so that data communication can be performed with maximum expected beamforming gain (averaged over the small-scale fading) for the remainder of the frame.
 Both operations are executed within the frame duration, and the process is repeated in each frame, based on a POMDP  policy described in \secref{sec:POMDP}.}

\textbf{BT phase:}
During the BT phase, the BS selects and executes a sequence of BT actions.
A {single} BT action specifies a set of BPIs ${\mathcal S}_{\rm BT} {\subseteq} \mathcal S$ to be scanned, {selected by the POMDP controller (\secref{sec:POMDP}}); it is then  executed as follows:
\begin{enumerate}[leftmargin=*]
\item A sequence of beacon signals $\mathbf x$ is sent
{over the set of  BPIs ${\mathcal S}_{\rm BT}$, using one slot for each BPI in ${\mathcal S}_{\rm BT}$ (the scanning order is unimportant).}
Let $k_j$ be the slot during which BPI $j{\in}{\mathcal S}_{\rm BT}$ is scanned: in this slot,
the BS transmits with beamforming vector $\mathbf c^{(j)}$ and transmit power $P^{(j)}$, while the UE receives synchronously with combining vector $\mathbf f^{(j)}$;
the UE then processes the received signal $\mathbf{y}_{t,k_j}$ (see \eqref{eq:signal_model}) using a matched filter,
to estimate the SNR along BPI $j$ as
 \begin{align}
 \Gamma_{t}^{(j)}\triangleq\frac{|\mathbf{x}^{\rm H} \mathbf{y}_{t,k_j}|^2}{\sigma_w^2 \Vert\mathbf{x}\Vert_2^2}.
 \label{eq:Gamma}
\end{align}
\item After collecting the sequence $\{\Gamma_{t}^{(j)},\forall j{\in}{\mathcal S}_{\rm BT}\}$,
the UE 
transmits a discrete feedback signal $Y$ back to the BS in the next slot.
To generate $Y$, the UE first detects the strongest BPI $J{\triangleq} \argmax_{{j}\in{\mathcal S}_{\rm BT}}\Gamma_{t}^{(j)}$.
If $\Gamma_{t}^{(J)}{>}\eta$ (an SNR threshold $\eta$, {designed to trade off false-alarm and misdetection probabilities, as discussed next}),
then $Y{=}J$ indicates the index of the  SBPI detected; otherwise ($\Gamma_{t}^{(J)}{\leq}\eta$),
{$Y{=}0$} indicates detection of the beam-misalignment condition
over ${\mathcal S}_{\rm BT}$.
\end{enumerate}
{Since the BT action over the BPI set ${\mathcal S}_{\rm BT}$ contains
$|{\mathcal S}_{\rm BT}|$ BPIs, it takes $|{\mathcal S}_{\rm BT}|+1$ slots (including the
sequential beacons and feedback signal transmissions).}
Note that
the BT action and the feedback signal may be coordinated using low frequency control channels, with small overhead:
${\mathcal S}_{\rm BT}{\subseteq}\mathcal S$ can be encoded using  $|\mathcal S|$ bits, and
the feedback signal $Y{\in}\mathcal S{\cup}\{0\}$ can be encoded using at most $\log_2(|\mathcal S|+1)$ bits.

{We now discuss the feedback distribution} $\mathbb P_Y(y|s,{\mathcal S}_{\rm BT})\triangleq\mathbb P(Y=y|S_t=s,{\mathcal S}_{\rm BT})$, denoting  the probability of generating the feedback signal $Y=y$, given the ground truth SBPI $S_t$ and the 
BT action over the BPI set ${\mathcal S}_{\rm BT}$.
Its expression is provided in closed-form
 in \cite{TVT2020}.
Consider the case $s\in{\mathcal S}_{\rm BT}$ first, i.e., the ground truth SBPI is scanned during the BT action: we define
$$p_{\rm corr}\triangleq \mathbb P_Y(s|s,{\mathcal S}_{\rm BT}),\ p_{\rm md}\triangleq \mathbb P_Y(0|s,{\mathcal S}_{\rm BT})$$
as the  probabilities of correctly detecting the SBPI ($Y{=}s$),
and of incorrectly detecting beam-misalignment ($Y{=}0$). Then, the SBPI is incorrectly detected ($Y{\neq}s$ and $Y{\neq}0$)
with probability
 $$
 \mathbb P_Y(y|s,{\mathcal S}_{\rm BT})=\frac{1-p_{\rm corr}-p_{\rm md}}{|{\mathcal S}_{\rm BT}|-1},\ \forall y\in{\mathcal S}_{\rm BT}\setminus\{s\},
 $$
 i.e., detection errors are uniform over the remaining BPIs ${\mathcal S}_{\rm BT}{\setminus}\{s\}$ (this property follows  from the binary SNR model).
On the other hand, if $s{\notin}{\mathcal S}_{\rm BT}$, i.e., the ground truth SBPI is not scanned during the BT action, we
let $p_{\rm fa}$ be the false-alarm probability  that beam-alignment is detected, so that
\begin{align*}
\begin{cases}
\mathbb P_Y(y|s,{\mathcal S}_{\rm BT})=\frac{p_{\rm fa}}{|{\mathcal S}_{\rm BT}|},\ \forall y\in{\mathcal S}_{\rm BT},\\
  \mathbb P_Y(0|s,{\mathcal S}_{\rm BT})=1-p_{\rm fa},
  \end{cases}
\end{align*}
i.e., false-alarm errors are uniform across the set of BPIs ${\mathcal S}_{\rm BT}$ scanned during BT
(as a result of the binary SNR model).

{The threshold $\eta$ trades off false-alarm and misdetection probabilities: if $S_t{\notin}{\mathcal S}_{\rm BT}$, then 
 beam-misalignment is \emph{correctly} detected ($Y{=}0$)
if and only if $\Gamma_{t}^{(j)}{<}\eta,\forall j{\in}{\mathcal S}_{\rm BT}$, 
yielding
\begin{align*}
&p_{\rm fa}=
1-\mathbb P(\Gamma_{t}^{(j)}<\eta,\forall j\in{\mathcal S}_{\rm BT}|S_t\notin{\mathcal S}_{\rm BT})
\\
&=
1-
\Big(1-e^{-\frac{\eta}{1+\rho\cdot{\rm SNR}_{\rm BA}L_{\rm sy}}}\Big)
^{|{\mathcal S}_{\rm BT}|}, \numberthis
\label{eq:fa}
\end{align*}
where, using the binary SNR model of \secref{secmodel}, we used the fact that $\Gamma_{t}^{(j)}{\sim}{\mathcal E}(1{+}\rho{\cdot}{\rm SNR}_{\rm BA} L_{\rm sy}),\forall j{\neq} S_t$, independently across $j$.
On the other hand, if
$S_t\in{\mathcal S}_{\rm BT}$ (the SBPI $S_t$ is scanned), then 
the misalignment condition $Y{=}0$ is \emph{incorrectly} detected
if and only if $\Gamma_{t}^{(j)}{<}\eta,\forall j\in{\mathcal S}_{\rm BT}$, 
yielding
\begin{align*}
\label{eq:md}
\numberthis
&p_{\rm md}
=
\mathbb P(\Gamma_{t}^{(j)}<\eta,\forall j\in{\mathcal S}_{\rm BT}|S_t\in{\mathcal S}_{\rm BT})
\\
&=
\Big(1-e^{-\frac{\eta}{1+{\rm SNR}_{\rm BA}L_{\rm sy}}}\Big)\cdot
\Big(1-e^{-\frac{\eta}{1+\rho\cdot{\rm SNR}_{\rm BA}L_{\rm sy}}}\Big)
^{|{\mathcal S}_{\rm BT}|-1},
\end{align*}
where we used
 $\Gamma_{t}^{(s)}{\sim}{\mathcal E}(1{+}{\rm SNR}_{\rm BA} L_{\rm sy})$ along the SBPI $s{=}S_t$.
In this paper, we set $\eta$
so that $p_{\rm fa}{=}p_{\rm md}$, determined via the bisection method, by leveraging the facts that 
$p_{\rm fa}$ and $p_{\rm md}$
are \emph{increasing} and \emph{decreasing} functions of $\eta$, respectively. However, the following results hold for a generic choice of~$\eta$.
\label{page:eta}
}\\
\indent \textbf{DC phase:} 
At the start of the DC phase,
{in slot $k_{\rm DC}$ within the frame,}
the BS selects
 a BPI  $s_{\rm DC}$  (both $k_{\rm DC}$ and $s_{\rm DC}$ are selected by the POMDP controller of \secref{sec:POMDP}).
Data communication then occurs 
{for the remaining $K-k_{\rm DC}$ slots}
until the end of the frame,
with a fixed rate $R$
using the selected BPI:
the BS transmits with beamforming vector $\mathbf c^{(s_{\rm DC})}$
and transmit power $P^{(s_{\rm DC})}$ given by \eqref{txpower},
while the UE receives with combining vector $\mathbf f^{(s_{\rm DC})}$.
If $S_t \neq s_{\rm DC}$, then the
beam-misalignment condition occurs, and  communication is in outage; otherwise
(beam-alignment condition $S_t{=}s_{\rm DC}$),
outage occurs if
 {$R{>}W_{\rm tot}\log_2 (1+{\rm SNR}_{t,k})$,}
 where 
 {${\rm SNR}_{t,k}$
is the instantaneous SNR, function of the small-scale fading conditions,}
 $W_{\rm tot}$ is the bandwidth.
 Owing to the binary SNR model of \secref{secmodel}, it follows that
 ${\rm SNR}_{t,k}\sim\mathcal E({\rm SNR}_{\rm BA})$ under the beam-alignment condition,
 yielding
 the success probability
 with respect to the small-scale fading conditions
{\begin{align}
\label{psucc}
\mathbb P_{\rm succ}{=}
\mathbb P\Bigr(
{\rm SNR}_{t,k}{>}2^{R/W_{\rm tot}}{-}1
\Bigr)
{=}e^{{-}\frac{2^{R/W_{\rm tot}}{-}1}{{\rm SNR}_{\rm BA}}}.\end{align}}
{We define the expected frame spectral efficiency, averaged over the small-scale fading, as a function of $k_{\rm DC}$, $s_{\rm DC}$
 and the \emph{unknown} ground truth SBPI
 $S_t{=}s$,
 as
\begin{align}
\label{frth}
{\rm SE}_{\rm fr}(k_{\rm DC},s_{\rm DC};s)=
{\rm SE}_{\rm BA}{\cdot}
\Big(1{-}\frac{k_{\rm DC}}{K}\Big)\cdot \mathbb I[s= s_{\rm DC}],
\end{align}
where
${\rm SE}_{\rm BA}{\triangleq}\frac{\mathbb P_{\rm succ}R}{W_{\rm tot}}$ is the expected spectral efficiency under the maximum beamforming condition,
 $1{-}\frac{k_{\rm DC}}{K}$ is the loss due to the BT overhead,
 and $\mathbb I[s{=}s_{\rm DC}]$ 
 captures the outage condition under beam-misalignment.}
\vspace{-4mm}
\subsection{SBPI dynamics}
 \label{sec:mobility}
 The mobility of the UE
 induces temporally correlated dynamics on the SBPI $S_t$, which we exploit to reduce the BT overhead via predictive beam-tracking.
 {In principle, the sequence $\{S_t,t{\geq}0\}$ may not satisfy the Markov property (for instance, a vehicle moving at constant speed along a road would spend a deterministic amount of time on each SBPI traversed along the path). In this paper, for analytical and computational tractability,}
we model $\{S_t,t{\geq}0\}$ as Markovian.
 We will demonstrate numerically in \secref{sec:numres} that this assumption yields a good approximation of non-Markovian dynamics.
 {In addition, we assume that the associated Markov chain is time-homogeneous, so that the transition model does not change with time: in fact, long-term mobility patterns typically vary slowly compared to the timescales of the communication scenario under consideration.}
 Let 
 $p(s^{\prime}|s)\triangleq\mathbb P(S_{t+1}{=}s^{\prime}|S_t {=} s), \forall s{\in}\mathcal S, \forall s^{\prime}{\in}\mathcal S \cup \{ \bar s\}$
 be the one-frame transition probability from the current SBPI $s$ to the next SBPI $s^{\prime}$;
the additional state $\bar s$
 indicates that the UE exited the BS coverage area and can no longer be served by it.
 
 In practice, $p(\cdot|\cdot)$
 is unknown and needs to be estimated via BT feedback -- a departure from \cite{stss} and our previous work  \cite{TVT2020}, which assumed  prior knowledge of $p(\cdot|\cdot)$. 
A \emph{naive} approach
estimates
$p(\cdot|\cdot)$ based on
 the sequence
 $\{(\hat s_t,\hat s_{t+1}),t{\in}\mathcal T\}$
 of SBPIs' transitions detected via BT (e.g., exhaustive search) at time frames $t{\in}\mathcal T$, as
  \begin{align}
  \label{eq:naive}
   \hat p(s^{\prime}|s) = \frac{ \sum_{t \in \mathcal T} \mathbb I[\hat s_t = s, \hat s_{t+1} = s^{\prime}]}{\sum_{t \in \mathcal T} \mathbb I[\hat s_t = s]},\ \forall s,s^{\prime}\in\mathcal S.
   \end{align}
Yet, detection errors caused by noise, beam imperfections,
   and {NLOS multipath}
may   degrade the estimation quality, hence the performance of adaptive BT schemes based on it.
 The design of an estimation procedure robust to these errors
is developed in \secref{sec:vae} using a novel technique based on a deep recurrent variational autoencoder (DR-VAE) architecture.
\vspace{-3mm}
\section{Short Timescale: adaptive BT via PBVI}
\label{sec:POMDP}
This section introduces the POMDP adaptation and proposes an efficient PBVI algorithm to determine an approximately optimal BT policy.
Since the short-timescale adaptation is the same for all frames, we do not express the dependence on the frame index $t$, whenever unambiguous.
{The goal is to design a policy $\pi$
 that dictates the sequence of BT/DC actions, so as to
 to maximize the expected frame spectral efficiency:
\begin{align}
\label{pomdpgoal}
\!\!\textbf{[MAX-SE]:}\ \max_{\pi}
\mathbb E_{\pi}\Big[{\rm SE}_{\rm fr}(k_{\rm DC},s_{\rm DC};S)\Big|S{\sim}\bfbeta_{0} \Big],\!
\end{align}
where $\bfbeta_{0}(s){=}\mathbb P(S{=}s)$ is
the probability mass function of the SBPI $S$ at the start of the frame, provided by the DR-VAE (\secref{sec:vae});
the expectation is with respect to $S{\sim}\bfbeta_{0}$, the sequence of actions and observations
dictated by policy $\pi$.}
  {Since the state $S$ is unknown, we formulate this optimization as a POMDP, with} individual components defined as follows.
 
\textbf{Time horizon}:
the slots within the frame $\mathcal K{=}\{0,{\cdots},K{-}1\}$.

\textbf{State:} the SBPI $S{\in}\mathcal S$,
taking value from the state space $\mathcal S$; $S$
 remains constant during the frame duration, but may change from one frame to the next according to $p(\cdot|\cdot)$ (see \secref{sec:mobility}).


\textbf{Action and reward models:}
In slot $k$ {within the frame}, the BS selects whether to perform BT or DC, {and on which BPIs}.
If the DC action is selected {on the BPI $s_{\rm DC}{\in}\mathcal S$,
then data communication occurs over the BPI $s_{\rm DC}$ for the remainder of the frame, as described in \secref{BTDC}},
{and the decision period within the current frame terminates}.
{With the goal stated in \textbf{[MAX-SE]},} the reward under the DC action is {${\rm SE}_{\rm fr}(k,s_{\rm DC};S)$ as in \eqref{frth}.}
{On the other hand, if the BT action is selected over the set of BPIs ${\mathcal S}_{\rm BT}$, then the BS proceeds with a round of BT, {which occupies $|{\mathcal S}_{\rm BT}|{+}1$ slots
for the sequential beacon and feedback signal transmissions} (see \secref{BTDC}). The next BT or DC decision is then taken in slot $k{+}|{\mathcal S}_{\rm BT}|{+}1$ of the current frame.}
Since a BT action must
{terminate before the end of the frame}, the BT action space in slot $k$ is
$\mathcal A_{{\rm BT},k}\equiv \{{\mathcal S}_{\rm BT}{\subseteq}\mathcal S{:}|{\mathcal S}_{\rm BT}|\leq K{-}1{-}k\}$.
 {Consistently with
 the optimization problem \textbf{[MAX-SE]}, a BT action accrues no reward
 since it does not generate data.}

\textbf{Observation model:}
After executing a BT action over the BPI set ${\mathcal S}_{\rm BT}$, the BS observes the feedback signal
$Y$ from the observation space
$ \mathcal Y \triangleq {\mathcal S}_{\rm BT} \cup \{ 0\}$ as described in \secref{BTDC}.
{We assume that the DC action does not generate observations, since these may require additional feedback signaling and involve upper layer protocols, such as acknowledgements of data packets. However, if available, these may be included in the belief update at the end of the frame.}


\textbf{Belief Update:}   
Since $S{\in}\mathcal S$ is unknown, we use the belief $\bfbeta_k$ as a POMDP state in slot $k$, i.e.,
the probability distribution over $S$, given the history of actions and observations 
up to slot $k$ (not included),
 taking value from the $|\mathcal S|$-dimensional probability simplex $\mathcal B$  (initially, $\bfbeta_{0}$ is the prior belief provided by the DR-VAE). 
 {In fact, $\bfbeta_k$ is a sufficient statistic for decision-making in POMDPs \cite{krishnamurthy2016partially}.}
 If a BT action over the BPI set ${\mathcal S}_{\rm BT}$ is selected in slot $k$,
 {it is executed in the following $|{\mathcal S}_{\rm BT}|{+}1$  slots} and it generates the feedback signal $Y{=}y$;
 the new belief
is then updated
after the action termination (in slot $k{+}|{\mathcal S}_{\rm BT}|{+}1$)
using Bayes' rule as
\begin{align}
\label{eq:POMDP_belief_update}
\bfbeta_{k{+}|{\mathcal S}_{\rm BT}|{+}1}(s) =
 \frac{\bfbeta_{k}(s) \mathbb P_Y(y|s,{\mathcal S}_{\rm BT})}{\sum_{j \in \mathcal S}\bfbeta_{k}(j) \mathbb P_Y(y|j,{\mathcal S}_{\rm BT})},\ \forall s{\in}\mathcal S,
\end{align}
where $\mathbb P_Y(y|j,{\mathcal S}_{\rm BT})$ is the feedback distribution given in \secref{BTDC}.
We write the belief update compactly as $\bfbeta_{k+|{\mathcal S}_{\rm BT}|+1}{=}\mathbb B(\bfbeta_k,y,{\mathcal S}_{\rm BT})$.
On the other hand, if a DC action is selected in slot $k$, it is executed until the end of the frame
and generates no feedback, so that $\bfbeta_{K}{=}\bfbeta_{k}$.
At the end of frame $t$, with $\bfbeta_{K}^{(t)}$ computed with the procedure described above, the prior belief for the next frame is computed based on the transition model $\hat p(\cdot |\cdot )$
learned by the DR-VAE,
as
\begin{align}
\label{eq:prior_up}
\bfbeta_{0}^{(t+1)}(s^{\prime}) = \frac{\sum_{s \in \mathcal S} \hat p(s^{\prime}|s) \bfbeta_{K}^{(t)}(s)}{\sum_{(s,s^{\prime\prime}) \in \mathcal S^2} \hat p(s^{\prime\prime}|s) \bfbeta_{K}^{(t)}(s)}, \forall s^{\prime} \in \mathcal S,
\end{align}
so that the POMDP policy may be carried out in the next frame using $\bfbeta_{0}^{(t+1)}$ as a prior belief, and so on.

Accurate estimation of $p(\cdot |\cdot )$ is critical to achieve good performance: errors in $\hat p(\cdot |\cdot )$ may lead to inaccurate predictions of the SBPI, resulting in increased BT overhead and decreased spectral efficiency, as shown numerically in \secref{sec:numres}.
To address this challenge, in \secref{sec:vae} we propose a state-of-the-art estimation of the SBPI's dynamics based on DR-VAE.

\label{page:suff_pol}
\textbf{Policy:}
a mapping from the current belief $\bfbeta_k$ to a DC or BT action, denoted as $\pi_k$ in slot $k$.
{In fact, mapping beliefs to actions is sufficient to achieve optimality in POMDPs \cite{krishnamurthy2016partially}.}

With the POMDP thus defined, its optimization
may be carried out via value iteration \cite{Pineau2006PointbasedAF}.
 Define the optimal value function in slot
$K$ under belief $\bfbeta$ as $V_K^{*}(\bfbeta){=}0$ (since
no further data can be transmitted, once the frame terminates); then,
the optimal value function in slot
 $k{=}K-1,\dots,0$ under belief $\bfbeta$ 
 can be computed recursively as
\begin{align}
\label{eq:val_iter}
&V_k^*(\bfbeta){=}
\max\Big\{\overbrace{{\rm SE}_{\rm BA}{\cdot}\Big(1-\frac{k}{K}\Big)\cdot\max_{s_{\rm DC}\in\mathcal S}\bfbeta(s_{\rm DC})}^{\triangleq V_{k}^{(\mathrm{DC})}(\bfbeta)},
\\&
\max_{{\mathcal S}_{\rm BT}\in\mathcal A_{{\rm BT},k}}
\!
\overbrace{
\sum_{s,y}
\bfbeta(s)
\mathbb P_Y(y|s,{\mathcal S}_{\rm BT})
V_{k{+}|{\mathcal S}_{\rm BT}|{+}1}^*(\mathbb B(\bfbeta,y,{\mathcal S}_{\rm BT}))}^{\triangleq
V_{k}^{(\mathrm{BT})}(\bfbeta,{\mathcal S}_{\rm BT})}
\!\!\Big\}\!,\!\!
\nonumber
\end{align}
yielding the optimal decision between DC or BT action (maximizer of the outer $\max$),
the optimal DC action (maximizer of the first inner $\max$, resulting in the value function $V_{k}^{(\mathrm{DC})}(\bfbeta)$)
 and the optimal BT set (maximizer of the second inner $\max$,
 where $V_{k}^{(\mathrm{BT})}(\bfbeta,{\mathcal S}_{\rm BT})$ is the value function associated to the BT set ${\mathcal S}_{\rm BT}$). In fact, if a DC action is selected, then the reward is 
${\rm SE}_{\rm BA}{\cdot}(1-k/K)$ until the end of the frame, as long as beam-alignment is achieved,
whose chances are maximized by selecting the most likely BPI ($\argmax_{s_{\rm DC}\in\mathcal S}\bfbeta(s_{\rm DC})$).
Conversely, if the BT action over the BPI set ${\mathcal S}_{\rm BT}$ is selected,
no reward is collected; since its duration is $|{\mathcal S}_{\rm BT}|{+}1$,
the future value function is taken at time $k{+}|{\mathcal S}_{\rm BT}|{+}1$, and the belief is updated
via the map $\mathbb B$ of \eqref{eq:POMDP_belief_update},
based on the feedback signal; 
maximization of $V_{k}^{(\mathrm{BT})}(\bfbeta,{\mathcal S}_{\rm BT})$ yields the optimal BT action.

Since the value function is {piecewise linear convex} \cite{Pineau2006PointbasedAF}, it can be expressed using a finite set of $|\mathcal S|$-dimensional hyperplanes from a properly defined set
 $\mathcal Q_{k}\subset \mathbb R^{|\mathcal S|}$:\footnote{With a slight abuse of notation, we refer to the vectors $\bfalpha\in\mathcal Q_k$ as \emph{hyperplanes} throughout this manuscript, meaning that
 they are vectors of coefficients that generate the hyperplanes $\inprod{\bfbeta}{\bfalpha},\bfbeta\in\mathbb R^{|\mathcal S|}$.
 }
$$V_k^*(\bfbeta) = \max_{\bfalpha \in \mathcal Q_{k}} \inprod{\bfbeta }{\bfalpha},$$
with  $\mathcal Q_{k}$ computed recursively as
$\mathcal Q_K{=}\{\mathbf 0\}$ 
and,
for 
 $k{<}K$,
\begin{align}
\label{eq:Q_val_iter}
&\mathcal Q_{k}{\equiv}
\cup_{i=1}^{|\mathcal S|}\!\!\Big\{{\rm SE}_{\rm BA}{\cdot}\Big(1-\frac{k}{K}\Big)\cdot\mathbf e_i \Big\}
\bigcup
 \cup_{{\mathcal S}_{\rm BT}\in\mathcal A_{{\rm BT},k}}
\\&
 \Big\{
\sum_{y \in \mathcal Y} \mathbb P_Y(y|\cdot,{\mathcal S}_{\rm BT}) \odot \bfalpha^{(y)}{:}
[\bfalpha^{(y)}]_{y \in \mathcal Y} {\in} \mathcal Q_{k+|{\mathcal S}_{\rm BT}|+1}^{|\mathcal Y|}
  \Big\}.\nonumber
 \end{align}
 In \eqref{eq:Q_val_iter}, the hyperplane ${\rm SE}_{\rm BA}(1{-}k/K)\mathbf e_i$ corresponds to DC action $i{\in} \mathcal S$;
 $\sum_{y \in \mathcal Y}  \mathbb P_Y(y|\cdot,{\mathcal S}_{\rm BT}){\odot}\bfalpha^{(y)}$ corresponds to BT action over the BPI set ${\mathcal S}_{\rm BT}{\in}\mathcal A_{{\rm BT},k}$, where $\bfalpha^{(y)}$ is the hyperplane corresponding to the future value function in slot $k{+}|{\mathcal S}_{\rm BT}|{+}1$, reached after observing $y$.
 The union of such hyperplanes correspond to all possible current and future
actions that may be taken after observing $y$.
  
In exact value iteration, $\mathcal Q_{k}$ grows 
doubly exponentially with iteration $k$,
 an intractable problem for any reasonably sized task.
  To address this challenge:
 1) we present structural properties in Theorem \ref{strucPOMDP}, which enable a compact belief representation, reduced computational complexity and memory requirements;
 2) in \secref{PBVI}, we present a
 PBVI
  approximate solver for POMDPs \cite{Pineau2006PointbasedAF}, that exploits these structural properties to enable an efficient implementation.
 
 To introduce the structural properties, let $\mathcal P(\bfalpha)$ be the set containing $\bfalpha$ and all permutations of its elements,
so that $|\mathcal P(\bfalpha)|{\leq}|\mathcal S|!$ for $\bfalpha{\in}\mathbb R^{|\mathcal S|}$.
Let $\mathrm{sort}(\mathbf a)$ be the vector obtained by sorting the elements of $\mathbf a$ in non-increasing order, 
so that $\tilde{\mathbf a}(1){\geq} \tilde{\mathbf a}(2){\dots}\tilde{\mathbf a}(|\mathcal S|)$ for $\tilde{\mathbf a}{=}\mathrm{sort}(\mathbf a)$.
Let $\mathcal Q_{k}^{\rm sort}\equiv\{\mathrm{sort}(\bfalpha):\bfalpha{\in}\mathcal Q_k\}$ be the set containing only the sorted elements of $\mathcal Q_k$.
 \begin{theorem}
 \label{strucPOMDP}
 We have the following properties:
\begin{description}
\item[P1:] If $\bfalpha\in\mathcal Q_k$, then $\mathcal P(\bfalpha)\subseteq\mathcal Q_k$;
\item[P2:] 
$V_k^*(\bfbeta'){=}V_k^*(\bfbeta){=}\max\limits_{\bfalpha \in\mathcal Q_{k}^{\rm sort}} \inprod{\mathrm{sort}(\bfbeta)}{\bfalpha},\ \forall \bfbeta'{\in}\mathcal P(\bfbeta),\bfbeta{\in}\mathcal B.$
\end{description}
 \end{theorem}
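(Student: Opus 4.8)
The plan is to exploit a \emph{permutation symmetry} of the POMDP: relabelling the BPIs by any permutation $\sigma$ of $\mathcal S$, extended to $\mathcal S\cup\{0\}$ by $\sigma(0)=0$, leaves the problem invariant. The only ingredients needed are three equivariance facts, all of which follow from the model of \secref{BTDC}: (i) since $p_{\rm corr},p_{\rm md},p_{\rm fa}$ depend only on $|{\mathcal S}_{\rm BT}|$, and the feedback law $\mathbb P_Y(y|s,{\mathcal S}_{\rm BT})$ depends on $(y,s,{\mathcal S}_{\rm BT})$ only through the predicates $s\in{\mathcal S}_{\rm BT}$, $y=s$, $y=0$, $y\in{\mathcal S}_{\rm BT}$ and the cardinality $|{\mathcal S}_{\rm BT}|$, we have $\mathbb P_Y(\sigma(y)|\sigma(s),\sigma({\mathcal S}_{\rm BT}))=\mathbb P_Y(y|s,{\mathcal S}_{\rm BT})$ for every $\sigma$; (ii) the BT action set $\mathcal A_{{\rm BT},k}$ is permutation-closed, since it only constrains $|{\mathcal S}_{\rm BT}|$; (iii) the DC reward ${\rm SE}_{\rm fr}(k,s_{\rm DC};s)={\rm SE}_{\rm BA}(1-k/K)\mathbb I[s=s_{\rm DC}]$ is invariant under simultaneously relabelling $s$ and $s_{\rm DC}$.

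For \textbf{P1} I would argue by backward induction on $k$ using the recursion \eqref{eq:Q_val_iter}, writing $\sigma(\bfalpha)$ for the coordinate-permuted vector $\sigma(\bfalpha)(s)=\bfalpha(\sigma^{-1}(s))$. The base case $\mathcal Q_K=\{\mathbf 0\}$ is trivially permutation-closed. Assume $\mathcal Q_{k'}$ is permutation-closed for all $k'>k$, take $\bfalpha\in\mathcal Q_k$ and a permutation $\sigma$. If $\bfalpha={\rm SE}_{\rm BA}(1-k/K)\mathbf e_i$, then $\sigma(\bfalpha)={\rm SE}_{\rm BA}(1-k/K)\mathbf e_{\sigma(i)}\in\mathcal Q_k$ by definition. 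If instead $\bfalpha=\sum_{y\in\mathcal Y}\mathbb P_Y(y|\cdot,{\mathcal S}_{\rm BT})\odot\bfalpha^{(y)}$ with ${\mathcal S}_{\rm BT}\in\mathcal A_{{\rm BT},k}$, $\mathcal Y={\mathcal S}_{\rm BT}\cup\{0\}$ and $[\bfalpha^{(y)}]_{y\in\mathcal Y}\in\mathcal Q_{k+|{\mathcal S}_{\rm BT}|+1}^{|\mathcal Y|}$, I take the relabelled BT set ${\mathcal S}_{\rm BT}'=\sigma({\mathcal S}_{\rm BT})$ (valid since $|{\mathcal S}_{\rm BT}'|=|{\mathcal S}_{\rm BT}|\le K-1-k$) and the family $\{\sigma(\bfalpha^{(\sigma^{-1}(y'))})\}_{y'\in\mathcal Y'}$ with $\mathcal Y'=\sigma(\mathcal Y)$, each element lying in $\mathcal Q_{k+|{\mathcal S}_{\rm BT}|+1}$ by the inductive hypothesis. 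A coordinatewise computation, substituting $y=\sigma^{-1}(y')$ and invoking the equivariance of $\mathbb P_Y$, then gives $\sum_{y'\in\mathcal Y'}\mathbb P_Y(y'|\cdot,{\mathcal S}_{\rm BT}')\odot\sigma(\bfalpha^{(\sigma^{-1}(y'))})=\sigma(\bfalpha)$, so $\sigma(\bfalpha)\in\mathcal Q_k$; hence $\mathcal P(\bfalpha)\subseteq\mathcal Q_k$, proving P1.

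For \textbf{P2}, given P1, I would partition $\mathcal Q_k$ into orbits under coordinate permutation; by P1 each orbit equals $\mathcal P(\bfalpha_0)$ for any representative $\bfalpha_0$, all members of an orbit share the same sorted vector, and $\{\mathrm{sort}(\bfalpha_0):\text{orbits}\}$ is precisely $\mathcal Q_k^{\rm sort}$. Within an orbit the rearrangement inequality gives $\max_{\bfalpha\in\mathcal P(\bfalpha_0)}\inprod{\bfbeta}{\bfalpha}=\max_{\sigma}\sum_s\bfbeta(s)\bfalpha_0(\sigma^{-1}(s))=\inprod{\mathrm{sort}(\bfbeta)}{\mathrm{sort}(\bfalpha_0)}$; taking the maximum over orbits yields $V_k^*(\bfbeta)=\max_{\bfalpha\in\mathcal Q_k}\inprod{\bfbeta}{\bfalpha}=\max_{\bfalpha\in\mathcal Q_k^{\rm sort}}\inprod{\mathrm{sort}(\bfbeta)}{\bfalpha}$, which is the second equality of P2. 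The first equality $V_k^*(\bfbeta')=V_k^*(\bfbeta)$ for $\bfbeta'\in\mathcal P(\bfbeta)$ then follows immediately, since the right-hand side depends on $\bfbeta$ only through $\mathrm{sort}(\bfbeta)$ and $\mathrm{sort}(\bfbeta')=\mathrm{sort}(\bfbeta)$.

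The main obstacle, and the only genuinely non-routine step, is the verification of the feedback equivariance $\mathbb P_Y(\sigma(y)|\sigma(s),\sigma({\mathcal S}_{\rm BT}))=\mathbb P_Y(y|s,{\mathcal S}_{\rm BT})$, which must be checked case by case: for $s\in{\mathcal S}_{\rm BT}$ in the sub-cases $y=s$, $y=0$, and $y\in{\mathcal S}_{\rm BT}\setminus\{s\}$, and for $s\notin{\mathcal S}_{\rm BT}$ in the sub-cases $y\in{\mathcal S}_{\rm BT}$ and $y=0$, carefully tracking the convention $\sigma(0)=0$ and the dependence of $p_{\rm corr},p_{\rm md},p_{\rm fa}$ on $|{\mathcal S}_{\rm BT}|$ only. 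Everything downstream — the backward induction of P1 and the orbit decomposition plus rearrangement inequality of P2 — is then mechanical.
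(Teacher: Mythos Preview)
Your proposal is correct and follows essentially the same route as the paper: backward induction on $k$ for P1 (separating the DC and BT hyperplanes and, in the BT case, relabelling the training set and the future hyperplanes via the permutation, then appealing to the symmetry of $\mathbb P_Y$), followed by an orbit decomposition plus the rearrangement inequality for P2. The paper's proof is terser---it invokes ``symmetry in the observation model'' in one line where you spell out the case-by-case equivariance of $\mathbb P_Y$---but the structure and the key ideas are identical.
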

 \begin{proof}
See Appendix A.
 \end{proof}
\vspace{-3mm}
P1 states that $\mathcal Q_k$ contains all the permutations of its hyperplanes, and directly follows from the
{binary SNR model of \secref{secmodel}, which induces}
symmetries in the observation, reward and action models. 
P2 {is a direct consequence of P1, and} states that permutations of a belief yield the same value function: {intuitively, two beliefs that are permutations of each other carry the same amount of uncertainty on the current SBPI, hence yield the same value.}
{The practical implications of P1 and P2 are
 in terms of reduced computational complexity and memory requirements of the POMDP optimization: thanks to P2,}
 the value iteration algorithm can be restricted to the set of sorted beliefs $\mathcal B^{\rm sort}{=}\{\mathrm{sort}(\bfbeta){:}\bfbeta{\in}\mathcal B\}$, since the value of all other beliefs can be inferred from this restricted set;
thanks to P1, it is sufficient to store the set of sorted hyperplanes $\mathcal Q_{k}^{\rm sort}$,
since $\mathcal Q_k$ can be generated through all the permutations of the elements in $\mathcal Q_{k}^{\rm sort}$,
representing a computational and memory saving by a factor ${\sim}|\mathcal S|!$.
P1 and P2 will be exploited in the next section to develop a PBVI solver.
\vspace{-2mm}
\subsection{Point-Based Value Iteration (PBVI)}
\label{PBVI}
The key idea behind PBVI is to
restrict the value iteration step in \eqref{eq:val_iter}
 to a finite set of belief points $ \tilde{\mathcal B}{\subset}\mathcal B$, chosen as representative of the entire belief-space $\mathcal B$.
 To this end,  PBVI constructs recursively a $|\tilde{\mathcal B}|$-dimensional set of hyperplanes $\tilde{\mathcal Q}_k$,
 each with its associated action.
 From the structural properties of Theorem \ref{strucPOMDP}, we can restrict
 the belief and hyperplane sets to contain only sorted elements without loss of optimality, and we label them as
  $ \tilde{\mathcal B}^{\rm sort}{\subset}\mathcal B^{\rm sort}$
and $\tilde{\mathcal Q}_{k}^{\rm sort}$. 
These sorted sets virtually represent much larger  ones given by all permutations of their elements.
With the set of hyperplanes defined,
the value function
 can then be approximated 
 by restricting the maximization in Theorem \ref{strucPOMDP}.P2 to the set $\tilde{\mathcal Q}_{k}^{\rm sort}$, yielding
\begin{align}
\label{approxVfun}
\tilde V_k(\bfbeta)=
\max_{\bfalpha \in\tilde{\mathcal Q}_{k}^{\rm sort}} \inprod{\mathrm{sort}(\bfbeta)}{\bfalpha},\ \forall \bfbeta\in\mathcal B;
\end{align}
{the (approximately) optimal action under the belief $\bfbeta$ is then obtained
using a two-steps procedure:}
\begin{enumerate}[leftmargin=*]
    \item {The optimal action for the \emph{sorted} belief
$\mathrm{sort}(\bfbeta)$ is found as the one associated with the optimizing hyperplane in \eqref{approxVfun};}
\item {this action then needs to be mapped back to the \emph{unsorted} belief $\bfbeta$; to do so,
since $\bfbeta$ 
is obtained by a suitable permutation of the elements of $\mathrm{sort}(\bfbeta)$, the same permutation is applied to the action computed in the first step, which yields the optimal action for the unsorted belief $\bfbeta$.}
\end{enumerate}
\begin{algorithm2e}[t]
\DontPrintSemicolon
\SetNoFillComment
\SetKwFunction{Union}{Union}\SetKwFunction{FindCompress}{FindCompress}
\SetKwInOut{Input}{input}\SetKwInOut{Output}{output}
\caption{PBVI optimization algorithm}
\label{alg:PBVI_main}
\Input{Belief set $\tilde{\mathcal B}^{\rm sort}$}
\textbf{init:} $\tilde{\mathcal Q}_{K}^{\rm sort}= \{ \boldsymbol 0\} $\;
\For{$k=K-1,\cdots,0$}{
\For{$\forall \bfbeta \in \tilde{\mathcal B}^{\rm sort}$}{
$\tilde V_{k}^{(\mathrm{DC})}(\bfbeta)=\inprod{\bfbeta}{{\rm SE}_{\rm BA}{\cdot}\left(1-\frac{k}{K}\right)\mathbf e_1}$ (value of DC action)\;
\For{${\mathcal S}_{\rm BT}\in\mathcal A_{{\rm BT},k}$ and $y\in{\mathcal S}_{\rm BT}\cup\{0\}$}{
$\tilde V_{k+|{\mathcal S}_{\rm BT}|+1}(\mathbb B(\bfbeta,y,{\mathcal S}_{\rm BT}))
=\max_{\bfalpha\in\tilde{\mathcal Q}_{k+|{\mathcal S}_{\rm BT}|+1}^{\rm sort}}\inprod{\mathrm{sort}(\mathbb B(\bfbeta,y,{\mathcal S}_{\rm BT}))}{\bfalpha}$
and maximizer $\bfalpha_{{\mathcal S}_{\rm BT}}^{(y)}$\;
$
\tilde V_{k}^{(\mathrm{BT})}(\bfbeta,{\mathcal S}_{\rm BT})=\sum_{s,y} \bfbeta(s)$
$
{\times}{\mathbb P_Y(y|s,{\mathcal S}_{\rm BT})}
\tilde V_{k+|{\mathcal S}_{\rm BT}|+1}(\mathbb B(\bfbeta,y,{\mathcal S}_{\rm BT}))$
}
$\!\!\!{\mathcal S}_{{\rm BT},k}^*{=}\!\!\argmax\limits_{{\mathcal S}_{\rm BT}\in\mathcal A_{{\rm BT},k}}\!\!\tilde V_k^{(\mathrm{BT})}\!\!(\bfbeta,{\mathcal S}_{\rm BT})$ (best BT~action)\;
 \If{$\tilde V_{k}^{(\mathrm{DC})}(\bfbeta)\geq\tilde V_{k}^{(\mathrm{BT})}(\bfbeta,{\mathcal S}_{{\rm BT},k}^*)$ \label{line:add_hyp_DC}}  
{
$\tilde{\mathcal Q}_{k}^{\rm sort}\ni {\rm SE}_{\rm BA}{\cdot}\left(1-\frac{k}{K}\right)\mathbf e_1$ (DC is optimal)\; 
}
\label{line:add_hyp_BT}\Else(BT over the set ${\mathcal S}_{{\rm BT},k}^*$ is optimal){
$\tilde{\mathcal Q}_{k}^{\rm sort}{\ni}
\mathrm{sort}(\sum_{y{\in}\mathcal Y} \mathbb P_Y(y|\cdot,{\mathcal S}_{{\rm BT},k}^*){\odot}\bfalpha_{{\mathcal S}_{{\rm BT},k}^*}^{(y)})$
 \label{line:hypend}
}
}
}
\Return {$\{\tilde{\mathcal Q}_{k}^{\rm sort}{:}k{\in}\mathcal K\}$ and associated optimal actions.}
\end{algorithm2e}
  The set of hyperplanes and corresponding actions
 is determined with Algorithm \ref{alg:PBVI_main}, 
 similar to  \eqref{eq:Q_val_iter} with two key differences: 1) 
 since we restrict PBVI to the set $\tilde{\mathcal B}^{\rm sort}$,
only the hyperplanes that maximize the value function are preserved;
  2) we exploit the structural properties of Theorem \ref{strucPOMDP} for an efficient implementation, as detailed next.
 Starting from $\tilde{\mathcal Q}_{K}^{\rm sort}{=}\{\boldsymbol 0\}$ (step 1),
 the algorithm proceeds backward in time (step 2)
 to compute $\tilde{\mathcal Q}_{k}^{\rm sort}$ from previously computed $\tilde{\mathcal Q}_{j}^{\rm sort},j{>}k$ (steps 3-12),
 by iterating through all belief points $\bfbeta{\in}\tilde {\mathcal B}^{\rm sort}$ (step 3).
 To determine the optimal hyperplane associated to a certain $\bfbeta$, 
 the associated value function $\tilde V_k(\bfbeta)$ is computed similarly to \eqref{eq:val_iter}:
 in step 4, the value $\tilde V_{k}^{(\mathrm{DC})}(\bfbeta)$ of a DC action is computed (since $\bfbeta$ is sorted, 
 $\inprod{\bfbeta}{\mathbf e_1}{=}
 \max_{s}\bfbeta(s)$, yielding $V_{k}^{(\mathrm{DC})}(\bfbeta)$ as in \eqref{eq:val_iter});
 then, the value of each BT action over the BPI set ${\mathcal S}_{\rm BT}$ is computed by
 1) calculating the future value function
 as in \eqref{approxVfun}, for each possible observation $y$ (step 6);
 and 2) taking the expectation under the belief $\bfbeta$ to compute $\tilde V_{k}^{(\mathrm{BT})}(\bfbeta,{\mathcal S}_{\rm BT})$ (step 7);
 in step 8, the optimal BT action is found by maximizing 
 $\tilde V_{k}^{(\mathrm{BT})}(\bfbeta,{\mathcal S}_{\rm BT})$.
 With the values of the optimal DC action ($\tilde V^{(\mathrm{DC})}$) and optimal BT action ($\tilde V^{(\mathrm{BT})}$) thus determined, 
 if $\tilde V^{(\mathrm{DC})}{\geq}\tilde V^{(\mathrm{BT})}$, then the DC action is optimal, and the associated hyperplane is stored in
 $\tilde{\mathcal Q}_{k}^{\rm sort}$ (step 10).
 Otherwise, the BT action over the BPI set ${\mathcal S}_{{\rm BT},k}^*$ is optimal, and the 
 associated hyperplane is computed in step 12, similar to \eqref{eq:Q_val_iter}, sorted and then stored in  $\tilde{\mathcal Q}_{k}^{\rm sort}$.
 The algorithm continues until $k{=}0$.
 Since at most one hyperplane is added to
  $\tilde{\mathcal Q}_{k}^{\rm sort}$ for each $\bfbeta{\in}\tilde {\mathcal B}^{\rm sort}$, it follows $|\tilde{\mathcal Q}_{k}^{\rm sort}|{\leq}|\tilde{\mathcal B}^{\rm sort}|$, yielding a linear-time value iteration algorithm.

With the set of hyperplanes computed, the policy is executed in each frame as in Algorithm~\ref{alg:PBVI_adapt}, starting from slot $0$:
in slot $k$, given the belief $\bfbeta_{k}$, the optimal action is
{determined using the two-steps procedure described after \eqref{approxVfun}} (step 1);
if a DC action is selected (the one that maximizes the belief, steps 2-3), it is executed until the end of the frame and the reward is accrued;
otherwise, the optimal BT action is executed, the feedback $Y$ is collected, the belief is updated via Bayes' rule, and the process is repeated in slot $k{+}|{\mathcal S}_{\rm BT}|{+}1$ (steps 5-6).
\vspace{-5mm}
\subsection{Low-Complexity Error-Robust MDP-based Policy Design}
\label{sec:MDP}
Although PBVI finds an approximately optimal policy, it may incur high computational cost, especially with high dimensional belief spaces. To overcome it, in this section we propose an MDP policy based on the assumption of error-free feedback.
This case can be cast as a special case with
$p_{\rm corr}{=}1$, $p_{\rm md}{=}p_{\rm fa}{=}0$,
yielding 
$\mathbb P_Y(y|s,{\mathcal S}_{\rm BT}){=}\mathbb I[y{=}s],\ \forall s\in{\mathcal S}_{\rm BT}$ and
$\mathbb P_Y(y|s,{\mathcal S}_{\rm BT}){=}\mathbb I[y=0],\forall s\notin{\mathcal S}_{\rm BT}$.
Without loss  of generality, we assume $\bfbeta_0{\in}\mathcal B^{\rm sort}$ (i.e., it is sorted, see Theorem~\ref{strucPOMDP}).
  The belief update  \eqref{eq:POMDP_belief_update}
under the BT action over the BPI set ${\mathcal S}_{\rm BT}$ after observing $Y{\in}{\mathcal S}_{\rm BT}{\cup}\{0\}$
 is then specialized as 
$\bfbeta_{k+|{\mathcal S}_{\rm BT}|+1}(s) {=}\frac{\bfbeta_{k}(s)}{\sum_{j \notin {\mathcal S}_{\rm BT}}\bfbeta_{k}(j)}\mathbb I[s{\notin}{\mathcal S}_{\rm BT}]$ for $Y{=}0$ and
$\bfbeta_{k{+}|{\mathcal S}_{\rm BT}|{+}1}(s){=}\mathbb I[s=Y]$ for $Y\in{\mathcal S}_{\rm BT}$, 
i.e., the SBPI is revealed correctly when $s\in{\mathcal S}_{\rm BT}$.
Under such belief updates, it can be shown by induction that $\bfbeta_{k}$ can be expressed as a function
of its support $\mathcal U_{k}$ and the prior belief $\bfbeta_{0}$ as
\begin{align}
\label{bk}
    \bfbeta_{k}(s) = \frac{\bfbeta_{0}(s)}{\sum_{j\in \mathcal U_{k}} \bfbeta_{0}(j)} \mathbb I[s\in \mathcal U_{k}], \forall s \in \mathcal S,
\end{align}
with support updated recursively after observing $Y$ as
\begin{align}
\label{eq:supp_update}
\mathcal U_{k+|{\mathcal S}_{\rm BT}|+1}= \begin{cases}
\mathcal U_{k}\setminus {\mathcal S}_{\rm BT}, & Y = 0,\\
\{Y\}, & Y\in{\mathcal S}_{\rm BT}.
\end{cases}
\end{align}
Hence, given $\bfbeta_{0}$, the support $\mathcal U_{k}$ is a sufficient statistic for decision-making.
Using the structure of the belief and the error-free observation model, the value iteration algorithm,
expressed as a function of $\mathcal U$, then specializes as
\begin{align*}
& 
V_k^*(\mathcal U)=
  \max \Bigg\{
  \overbrace{{\rm SE}_{\rm BA}{\cdot}\Big(1-\frac{k}{K}\Big)\frac{\max_{s\in\mathcal U}\bfbeta_0(s)}{\sum_{s\in\mathcal U}\bfbeta_0(s)}}^{\triangleq V_{k}^{(\mathrm{DC})}(\mathcal U)}
,
\max_{{\mathcal S}_{\rm BT}\in\mathcal A_{{\rm BT},k}}
\\&
\overbrace{
\frac{
\sum\limits_{
s\in\mathcal U\cap{\mathcal S}_{\rm BT}}
\!\!\!\!\!\!
\bfbeta_0(s)
V_{k{+}|{\mathcal S}_{\rm BT}|{+}1}^*\!(\{s\})
{+}\!\!\!\!\!\!\!\!\sum\limits_{s\in\mathcal U\setminus{\mathcal S}_{\rm BT}}\!\!\!\!
\!\!\!\!\bfbeta_0(s)V_{k{+}|{\mathcal S}_{\rm BT}|{+}1}^*\!(\mathcal U{\setminus}{\mathcal S}_{\rm BT})
}{\sum_{s\in\mathcal U}\bfbeta_0(s)}}^{\triangleq V_{k}^{(\mathrm{BT})}(\mathcal U,{\mathcal S}_{\rm BT})}\!\!\Bigg\}\!.
\nonumber
  \end{align*}
  \begin{algorithm2e}[t]
\DontPrintSemicolon
\SetNoFillComment
\SetKwFunction{Union}{Union}\SetKwFunction{FindCompress}{FindCompress}
\SetKwInOut{Input}{input}\SetKwInOut{Output}{output}
\caption{PBVI-based adaptation algorithm}
\label{alg:PBVI_adapt}
\Input{$\{\tilde{\mathcal Q}_{k}^{\rm sort}: k \in \mathcal K\}$ and associated optimal actions; initial belief $\bfbeta_{0}$; initialize $k=0$;}
In slot $k$, given $\bfbeta_{k}$: solve \eqref{approxVfun};
 {determine optimal action (two-steps procedure described after \eqref{approxVfun});}\!\!\!\!\!\;
\If{DC action $s_{\rm DC}$ selected}{execute the DC action
$s_{\rm DC}=\argmax_{s\in\mathcal S}\bfbeta_k(s)$; accrue reward
${\rm SE}_{\rm BA}{\cdot}\left(1-\frac{k}{K}\right)\mathbb I[S_t=s_{\rm DC}]$;
{\bf terminate.}
}
\Else(\emph{BT action over the BPI set ${\mathcal S}_{\rm BT}$ selected})
{
Execute the action; observe $Y\sim \mathbb P_Y(\cdot|S,{\mathcal S}_{\rm BT})$ at the end of slot $k{+}|{\mathcal S}_{\rm BT}|$;\;
Update the belief $\bfbeta_{k{+}|{\mathcal S}_{\rm BT}|{+}1}{=}\mathbb B(\bfbeta_{k},Y,{\mathcal S}_{\rm BT})$ as in \eqref{eq:POMDP_belief_update}; continue from step 1 with $k\gets k{+}|{\mathcal S}_{\rm BT}|{+}1$.
}
\end{algorithm2e}

We now present structural results showing that, among other auxiliary properties,
the optimal BT action should scan the most likely BPIs in $\mathcal U$.
We will then use these properties to further simplify the value iteration algorithm.
Note that this result may not hold for the general POMDP model since the uncertainty cannot be completely removed after BT.
 \begin{theorem}
 \label{thm:MDP}
Let $\mathcal U$ be the current support in slot $k$. Then,
 \begin{description}
\item[P1:] If $\mathcal U\equiv\{s\}$, the DC action $s_{\rm DC}{=}s$ is optimal, with value
  $V_{k}^*(\{s\})={\rm SE}_{\rm BA}{\cdot}\left(1-\frac{k}{K}\right)$;
  \item[P2:] $V_k^*(\mathcal U)\geq V_{k+1}^*(\mathcal U),\ \forall k,\forall\mathcal U$ (monotonicity);
  \item[P3:] $\argmax_{{\mathcal S}_{\rm BT}\in\mathcal A_{{\rm BT},k}}V_{k}^{(\mathrm{BT})}(\mathcal U,{\mathcal S}_{\rm BT})\subseteq\mathcal U$;
\item[P4:] 
The optimal BT set scans the most likely BPIs from  $\mathcal U$.
 \end{description}
 \end{theorem}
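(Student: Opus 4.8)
The plan is to establish P1--P4 by \emph{backward induction on the slot index} $k$, in the stated order (each property is used in the next), working only from the error-free support recursion for $V_k^*(\mathcal U)$ derived above. P1 is immediate by downward induction: $V_k^{(\mathrm{DC})}(\{s\}){=}{\rm SE}_{\rm BA}(1{-}k/K)$ because $\max_{s'\in\{s\}}\bfbeta_0(s'){=}\sum_{s'\in\{s\}}\bfbeta_0(s')$, whereas every BT action on $\{s\}$ returns the singleton $\{s\}$ with probability one after $|{\mathcal S}_{\rm BT}|{+}1{\geq}1$ slots and hence has value $V_{k+|{\mathcal S}_{\rm BT}|+1}^*(\{s\}){=}{\rm SE}_{\rm BA}(1{-}(k{+}|{\mathcal S}_{\rm BT}|{+}1)/K)$, strictly smaller by the inductive hypothesis; so DC with $s_{\rm DC}{=}s$ is optimal with the claimed value. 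P2 is again a downward induction: the base case $k{=}K{-}1$ follows from $V_{K-1}^*(\mathcal U){\geq}V_{K-1}^{(\mathrm{DC})}(\mathcal U){\geq}0{=}V_K^*(\mathcal U)$; for the step, every action feasible at slot $k{+}1$ is feasible at slot $k$ (since $\mathcal A_{{\rm BT},k+1}{\subseteq}\mathcal A_{{\rm BT},k}$), $V_k^{(\mathrm{DC})}(\mathcal U){\geq}V_{k+1}^{(\mathrm{DC})}(\mathcal U)$ because $1{-}k/K{\geq}1{-}(k{+}1)/K$, and for a fixed ${\mathcal S}_{\rm BT}$ the future-value terms sit at slot $k{+}|{\mathcal S}_{\rm BT}|{+}1$ versus $k{+}1{+}|{\mathcal S}_{\rm BT}|{+}1$, where the inductive hypothesis applies; a max over actions yields $V_k^*{\geq}V_{k+1}^*$, and iterating, $V_k^*(\mathcal U){\geq}V_{k'}^*(\mathcal U)$ for $k{\leq}k'$. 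P3 then follows by a trimming argument: if ${\mathcal S}_{\rm BT}$ maximizes $V_k^{(\mathrm{BT})}(\mathcal U,\cdot)$, then ${\mathcal S}_{\rm BT}{\cap}\mathcal U$ induces the same partition $\mathcal U{\cap}{\mathcal S}_{\rm BT},\,\mathcal U{\setminus}{\mathcal S}_{\rm BT}$ of the support but has shorter duration, so by P2 its future-value terms are pointwise at least as large, whence $V_k^{(\mathrm{BT})}(\mathcal U,{\mathcal S}_{\rm BT}{\cap}\mathcal U){\geq}V_k^{(\mathrm{BT})}(\mathcal U,{\mathcal S}_{\rm BT})$ and an optimal BT set may be taken inside $\mathcal U$ (consistent with the tie-breaking convention).

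For P4 I would introduce the unnormalized value $\hat V_j(\mathcal U){\triangleq}\big(\sum_{s\in\mathcal U}\bfbeta_0(s)\big)V_j^*(\mathcal U)$ and write $\bfbeta_0(\mathcal A){\triangleq}\sum_{s\in\mathcal A}\bfbeta_0(s)$. Using P1 ($V_j^*(\{s\}){=}c_j{\triangleq}{\rm SE}_{\rm BA}(1{-}j/K)$ for every singleton) and P3 (restrict to ${\mathcal S}_{\rm BT}{\subseteq}\mathcal U$, with duration $L{=}|{\mathcal S}_{\rm BT}|{+}1$), the value iteration becomes
\[
\hat V_j(\mathcal U)=\max\Big\{c_j\max_{s\in\mathcal U}\bfbeta_0(s),\ \max_{{\mathcal S}_{\rm BT}\subseteq\mathcal U}\big[c_{j+L}\bfbeta_0({\mathcal S}_{\rm BT})+\hat V_{j+L}(\mathcal U{\setminus}{\mathcal S}_{\rm BT})\big]\Big\},
\]
and $\big(\sum_{s\in\mathcal U}\bfbeta_0(s)\big)V_k^{(\mathrm{BT})}(\mathcal U,{\mathcal S}_{\rm BT})=c_{k+L}\bfbeta_0({\mathcal S}_{\rm BT})+\hat V_{k+L}(\mathcal U{\setminus}{\mathcal S}_{\rm BT})$. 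The heart of the proof is a Lipschitz lemma: for every slot $j$, every $\mathcal C$, and all $a,b\notin\mathcal C$ with $\bfbeta_0(b){\geq}\bfbeta_0(a)$,
\[
\hat V_j(\mathcal C{\cup}\{b\})-\hat V_j(\mathcal C{\cup}\{a\})\ \leq\ c_j\big(\bfbeta_0(b)-\bfbeta_0(a)\big),
\]
proved by downward induction on $j$ (the base case $j{=}K$ is trivial since $\hat V_K{\equiv}0$ and $c_K{=}0$): if DC is optimal on $\mathcal C{\cup}\{b\}$, the bound reduces to the $1$-Lipschitz, nondecreasing property of $x\mapsto\max\{x,\max_{s\in\mathcal C}\bfbeta_0(s)\}$ combined with the DC lower bound on $\hat V_j(\mathcal C{\cup}\{a\})$; if an optimal BT set scans $b$, swapping $b$ for $a$ in it gives a feasible action for $\mathcal C{\cup}\{a\}$ with identical residual support and with $\bfbeta_0$ of the scanned set smaller by $\bfbeta_0(b){-}\bfbeta_0(a)$, yielding the bound with the slack $c_{j+L}{\leq}c_j$; if an optimal BT set does not scan $b$, reusing it for $\mathcal C{\cup}\{a\}$ reduces the gap to a $\hat V_{j+L}$-gap between two residual supports differing by that same single swap, where the inductive bound applies.

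Given the lemma, fix $|{\mathcal S}_{\rm BT}|{=}m$ with ${\mathcal S}_{\rm BT}{\subseteq}\mathcal U$ \emph{not} consisting of $m$ most likely BPIs of $\mathcal U$; pick $a{\in}{\mathcal S}_{\rm BT}$ and $b{\in}\mathcal U{\setminus}{\mathcal S}_{\rm BT}$ with $\bfbeta_0(b){\geq}\bfbeta_0(a)$, and form $\tilde{\mathcal S}_{\rm BT}$ by swapping them. Then $\bfbeta_0(\tilde{\mathcal S}_{\rm BT}){-}\bfbeta_0({\mathcal S}_{\rm BT}){=}\bfbeta_0(b){-}\bfbeta_0(a)$, while $\mathcal U{\setminus}{\mathcal S}_{\rm BT}$ and $\mathcal U{\setminus}\tilde{\mathcal S}_{\rm BT}$ differ by exactly that swap; the identity for $V_k^{(\mathrm{BT})}$ plus the lemma at slot $k{+}L$ give $V_k^{(\mathrm{BT})}(\mathcal U,\tilde{\mathcal S}_{\rm BT}){\geq}V_k^{(\mathrm{BT})}(\mathcal U,{\mathcal S}_{\rm BT})$. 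Iterating such swaps, a set of the $m$ most likely BPIs of $\mathcal U$ maximizes $V_k^{(\mathrm{BT})}(\mathcal U,\cdot)$ over all size-$m$ subsets; since the overall optimal BT action optimizes over $m$ as well (P3), an optimal BT set scans the most likely BPIs of $\mathcal U$, which is P4.

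\textbf{Main obstacle.} P1--P3 are routine backward inductions once the support recursion is in place; the delicate part is the Lipschitz lemma underlying P4. Its difficulty is that the inductive constant must survive a \emph{shift of the evaluation slot} caused by BT sets of different cardinalities --- the slack $c_{j+L}{\leq}c_j$ is precisely what absorbs this --- and one must verify that all three cases (DC optimal; BT scanning $b$ optimal; BT not scanning $b$ optimal) transfer cleanly, with matching residual supports, to the modified support.
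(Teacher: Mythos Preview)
Your proof is correct. For P1--P3 it is essentially identical to the paper's argument (same backward induction, same trimming for P3). For P4, however, you take a genuinely different route. The paper proves P1--P4 by a \emph{joint} backward induction: to establish P4 at slot $k$, it swaps the least-likely element of ${\mathcal S}_{\rm BT}$ with the most-likely element of the complement, then compares the two residual values $V_{k+n+1}^*(\mathcal U^{(0)})$ and $V_{k+n+1}^*(\mathcal U^{(1)})$ by casing on whether DC or BT is optimal for $\mathcal U^{(1)}$; in the BT case it invokes the induction hypothesis P4 at slot $k{+}n{+}1$ to know the optimal future BT set has the ``most-likely'' form, which makes the residual supports coincide after one further step. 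You instead isolate a standalone Lipschitz inequality on the unnormalized value $\hat V_j$, proved by its own backward induction without ever assuming P4 at later slots, and then obtain P4 at every $k$ by repeated swaps. What this buys you is a decoupling: your lemma is a structural fact about $\hat V_j$ that does not depend on the ``scan most likely'' conclusion, and once proved it yields P4 uniformly; the paper's version is shorter per step but needs the joint induction and the specific ordering of the $s_i$'s to make the one-step-ahead comparison tractable. Both are valid; yours is a bit more modular, the paper's a bit more direct.
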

\begin{proof}
See Appendix B.
\end{proof}
\vspace{-2mm}
The implications of the above theorem are twofold: 1) once the SBPI is detected, it is optimal to switch to DC until the end of the frame (P1); 2) the belief support $\mathcal U_k$ takes the form $\mathcal U_k{=}\{u_k,u_{k}{+}1,\dots,|\mathcal S|\}$, for some $u_k{\in}\mathcal S$ so that the index $u_k$ is a 
sufficient statistic to represent $\mathcal U_k$ (note that we assume $\bfbeta_0$ is already sorted). 
This can be seen by induction: initially,
$\mathcal U_0{=}\mathcal S$, so that $u_0{=}1$;
now, assume $\mathcal U_k=\{u_k,u_{k}+1,\dots,|\mathcal S|\}$ for some $k\geq 0$:\footnote{Since $\bfbeta_0$ is sorted, using \eqref{bk} with $\mathcal U_k{=}\{u_k,u_{k}{+}1,\cdots,|\mathcal S|\}$, it follows that
$\bfbeta_k(u_k){\geq}\bfbeta_k(u_k{+}1){\geq}{\cdots}\bfbeta_k(|\mathcal S|)$
and $\bfbeta_k(s){=}0,\forall s<u_k$.}
if a DC action is taken, then the most likely BPI is selected ($u_k$), and DC occurs until the end of the frame;
otherwise, if the BT action
${\mathcal S}_{\rm BT}$ is selected, then P3-P4 dictate 
to scan the most likely BPIs, so that
$ {\mathcal S}_{\rm BT}{=}\{u_k,u_{k}{+}1,\dots,u_{k}{+}n-1\}$,
for some $n$ (the determination of $n$ is discussed after \eqref{MDPsimple}).
If, after  execution of the BT action over the BPI set ${\mathcal S}_{\rm BT}$, the SBPI $s$ is detected, then the 
system switches to DC until the end of the frame, with value $V_{k+n+1}^*(\{s\})$ (P1); otherwise, the new support becomes 
$\mathcal U_{k+n+1}{\equiv}\mathcal U_k{\setminus}\hat {\mathcal S}{\equiv}\{u_k{+}n,u_{k}{+}n{+}1,\dots,|\mathcal S|\}$,
hence $u_{k+n+1}=u_k{+}n$, which proves the induction.
Using the sufficient statistic $u{\in}\mathcal S$, the value function simplifies to
\begin{align}
\label{MDPsimple}
  &V_k^*(u) =
  \max \Bigg\{
 {\rm SE}_{\rm BA}{\cdot}\Big(1-\frac{k}{K}\Big)\frac{\bfbeta_0(u)}{\sum_{s=u}^{|\mathcal S|}\bfbeta_0(s)}
,\\&
\max_{n\in\mathcal N_k(u)}
\overbrace{\frac{\sum_{s=u}^{u+n-1}\bfbeta_0(s)}{\sum_{s=u}^{|\mathcal S|}\bfbeta_0(s)}{\rm SE}_{\rm BA}{\cdot}\Big(1-\frac{k+n+1}{K}\Big)}^{\triangleq V_{k}^{(\mathrm{BT})}(u,n)}
\nonumber
\\&
\qquad\qquad\overbrace{+\frac{\sum_{s=u+n}^{|\mathcal S|}\bfbeta_0(s)}{\sum_{s=u}^{|\mathcal S|}\bfbeta_0(s)}V_{k+n+1}^*(u+n)}^{V_{k}^{(\mathrm{BT})}(u,n)\text{ \ (continued)}}\qquad\ \ \ \Bigg\},\nonumber
  \end{align}
  where $\mathcal N_k(u)=\{1,\dots,\min\{K-k-1,|\mathcal S|-u+1\}\}$ is the set of feasible BT set sizes.
  Note that the maximizer
  $n^*{=}\argmax_{n\in\mathcal N_k(u)}V_{k}^{(\mathrm{BT})}(u,n)$
 yields the optimal number of BPIs that should be scanned during BT, so that the
 optimal BT set is ${\mathcal S}_{\rm BT}^*{=}\{u_k,u_{k}+1,\dots,u_{k}+n^*-1\}$.
 
 Since the MDP-based policy neglects BT feedback errors, it may perform poorly when they occur. 
We incorporate error robustness with an error-robust MDP (ER-MDP) that executes the
policy based on the MDP formulation defined above, but updates the belief based on the POMDP model, which takes into account the statistics of BT errors.
In other words, the MDP policy defines the number $n^*$ of BPIs to scan; the $n^*$ most likely BPIs are then scanned during BT,
the feedback $Y$ is collected, the belief is updated as in \eqref{eq:POMDP_belief_update}, and so on.

Both POMDP- and (ER-)MDP-based policies require the prior belief at the start of each frame to select actions. Next, we
 propose a DR-VAE-based learning framework to learn
 the transition model $p(\cdot|\cdot)$ used in \eqref{eq:prior_up} to update the prior belief.
 \vspace{-3mm}
\section{Long Timescale: Learning Beam Dynamics via Deep Recurrent 
Variational Autoencoder}
\label{sec:vae}
We now present the learning module, aiming to learn the SBPI transition model {from $N{\geq}1$ \emph{episodes}. Each episode corresponds to a certain UE served by the BS,} and is expressed as
a sequence of BT actions
and associated BT feedback collected over the episode duration, {corresponding to the time interval during which the UE stays within the coverage area of the BS}.
{Consider a single episode $N{=}1$ (later generalized to $N{\geq}1$),} occupying 
$T{+}1$ frames.
Let $M_t$ be the total number of BT actions in frame $t$ (a random variable, due to the decision-making process),
$\mathcal S_{{\rm BT,}m}^{(t)}$
be the $m$th BT action of frame $t$,
generated by following an arbitrary policy $\pi$,
and  $Y_m^{(t)}$ be its associated observation, {generated based on the model of \secref{BTDC}}.
Let
$\mathcal H_{m}^{(t)}$, with
$m{=}1,{\dots},M_t$ and $t{=}0,{\dots},T$,
be
the history of BT actions and observations collected up to the $m$th BT action and observation of frame $t$. Recursively, 
$\mathcal H_{m}^{(0)}{=}(\mathcal S_{{\rm BT,}i}^{(0)},Y_i^{(0)})_{i=1,\dots,m}$
and
$\mathcal H_{m}^{(t)}=[
\mathcal H_{M_{t-1}}^{(t-1)},
(\mathcal S_{{\rm BT,}i}^{(t)},Y_i^{(t)})_{i=0,\dots,m}]$.

 Let 
$ {\mathcal P}_{\boldpsi}(s_{0:T}){\triangleq} \bfbeta_{0}^{(0)}(s_0)\prod_{t=1}^{T} p_{\boldpsi}(s_t|s_{t-1})$ be the joint probability of the state sequence $S_{0:T}{=}s_{0:T}$, where $\bfbeta_{0}^{(0)}$ is the prior belief over $S_0$ at $t{=}0$ (e.g., uniform) and $p_\boldpsi$ is the unknown transition model, parameterized by $\boldpsi$.
 Then, we aim to maximize the marginal likelihood of the BT action-observation sequences collected over the episode, $f(\mathcal H_{M_T}^{(T)}|\boldpsi)$, 
 with respect to the parameter $\boldpsi$, stated as\footnote{We use $f(\cdot)$ to denote suitable distributions, as clear from the context.}
\begin{align*}
\max_{\boldpsi} f(\mathcal H_{M_T}^{(T)}|\boldpsi) 
&=\max_{\boldpsi} \sum_{s_{0:T}} 
{\mathcal P}_{\boldpsi}(s_{0:T}) 
f(\mathcal H_{M_T}^{(T)}|s_{0:T}). \numberthis
\label{eq:marginal}
 \end{align*} 
 Yes, this problem is intractable, due to the lack of closed-form and the marginalization over the latent variables. 
Approximation techniques have been proposed to overcome this challenge, 
by employing a surrogate metric instead of \eqref{eq:marginal}. These include expectation-maximization (EM)-based algorithms such as Baum-Welch \cite{baum_welch}, 
which perform an alternating optimization of a non-convex variational objective,
and variational techniques such as the variational autoencoder (VAE) \cite{VAE},
which jointly learn separate posterior and prior state transition models. Thanks to the joint optimization procedure, variational techniques typically outperform EM-based techniques, as verified numerically in \secref{sec:numres}, and will be adopted in this paper.

 VAE is one of the most powerful tools to learn latent variable models \cite{VAE}.
 It comprises two coupled but independently parameterized models:  the \emph{encoder}, which provides the posterior distribution over the latent states given the observations; the \emph{decoder}, which measures the representation quality of the latent states produced by the encoder via the observation model and the prior distribution of the latent state variable, thus forcing the encoder to learn a meaningful representation of the latent states from the observations. The deep recurrent (DR-) VAE  used in this paper extends the VAE to temporally correlated observations, in our case, obtained through the sampling of a POMDP \cite{rvae}.
 The main idea of DR-VAE is to learn two models: a posterior model
 $Q_{\boldnu}(S_{0:T}|\mathcal H_{M_T}^{(T)})$,
parameterized by $\boldnu$, used to infer the state sequence
 $S_{0:T}$ from the history of BT actions and observations;
 the prior model $\mathcal P_{\boldpsi}(S_{0:T})$, parameterized by $\boldpsi$.
 Given $\mathcal H_{M_T}^{(T)}$, the parameter vectors $(\boldnu,\boldpsi)$ are optimized by maximizing the \emph{evidence lower bound} (ELBO), {defined for a single episode as}
 \begin{align*}
 &{\rm ELBO}(\boldnu,\boldpsi|\mathcal H_{M_T}^{(T)})
 {\triangleq}\mathbb E_{Q_{\boldnu}}\!\Bigg[\!\ln\frac{ \mathcal P_{\boldpsi}(S_{0:T}) f(\mathcal H_{M_T}^{(T)}|S_{0:T})}{Q_{\boldnu}(S_{0:T}|\mathcal H_{M_T}^{(T)})}
\Bigg|\mathcal H_{M_T}^{(T)}
 \Bigg]\!,
\end{align*}
where the expectation is computed with respect to the
{state sequence generated through}
the posterior model $Q_{\boldnu}$,
so that the optimization (restricted to a single episode) is stated as
\begin{align}
\label{ELBOmax}
\max_{\boldnu,\boldpsi}\ {\rm ELBO}(\boldnu,\boldpsi|\mathcal H_{M_T}^{(T)}).
\end{align}
Using Jensen's inequality $\mathbb E[\ln(\cdot)]{\leq}\ln(\mathbb E[\cdot])$, it can be shown
\begin{align*}
{\rm ELBO}(\boldnu,\boldpsi|\mathcal H_{M_T}^{(T)})
 & \leq  \ln  \left(\mathbb E_{\mathcal P_\boldpsi} \left[ f(\mathcal H_{M_T}^{(T)}|S_{0:T})\right]\right) \\&\triangleq \ln(f(\mathcal H_{M_T}^{(T)}|\boldpsi)),
\end{align*}
i.e., ELBO is a lower bound to the log-likelihood function that we originally aimed to maximize in \eqref{eq:marginal}.
We now proceed to simplifying the ELBO.
First,
note that, using the product law of conditional probability
and the definition of $\mathcal H_{m}^{(t)}$,
  \begin{align*} 
 \label{eq:joint_obs_act}
 \numberthis
&f(\mathcal H_{M_T}^{(T)}|S_{0:T})
=
 \prod_{t=1}^{T} 
  \prod_{m=1}^{M_t} f(
  \mathcal S_{{\rm BT},m}^{(t)},
  Y_{m}^{(t)}
|S_{0:T},
\mathcal H_{m-1}^{(t)}
),\\&
=\prod_{t=1}^{T}
  \underbrace{\prod_{m=1}^{M_t} \mathbb P_Y(Y_{m}^{(t)}|S_{t},\mathcal S_{{\rm BT},m}^{(t)})}_{\triangleq\exp\{\mathcal L_t(S_t) \}}\prod_{m=1}^{M_t}f(\mathcal S_{{\rm BT},m}^{(t)}|\mathcal H_{m-1}^{(t)}
),
 \end{align*}
where in the last step we used two facts:
 1) given $(S_t,\mathcal S_{{\rm BT},m}^{(t)})$, $Y_{m}^{(t)}$ is independent of
 future states and of
  the past; 2) $\mathcal S_{{\rm BT},m}^{(t)}$ is obtained from policy $\pi$ -- a function of the history of BT actions and observations only.
  We have also defined the sum of log-likelihoods in frame $t$ as
  $\mathcal L_t(s) \triangleq \sum_{m=1}^{M_t}
\ln \mathbb P_Y(Y_{m}^{(t)}|s,\mathcal S_{{\rm BT},m}^{(t)})$, {which can be computed based on the observation model}.
   Next, we express the posterior model using the product law of conditional probability as
  $$
  Q_{\boldnu}(S_{0:T}|\mathcal H_{M_T}^{(T)})=Q_{\boldnu}(S_{0}|\mathcal H_{M_T}^{(T)})\prod_{t=1}^TQ_{\boldnu}(S_t|S_{0:t-1},\mathcal H_{M_T}^{(T)}).
  $$
  For tractability,
in DR-VAE settings, the following structure for $Q_{\boldnu}$ is used \cite{rvae}:
\begin{align}
\nonumber
&Q_{\boldnu}(S_{0}|\mathcal H_{M_T}^{(T)})={\bfbeta}_{{\rm post}}^{(0)}(S_0),\\&
Q_{\boldnu}(S_t|S_{0:t-1},\mathcal H_{M_T}^{(T)})
=q_{\boldnu}(S_t|S_{t-1},\mathbfcal L_t),
\label{eq:post}
\end{align}
where $\mathbfcal{L}_t{\triangleq}[\mathcal L_t(s)]_{s\in \mathcal S}$;
 ${\bfbeta}_{{\rm post}}^{(0)}$ is the posterior belief over $S_0$ after collecting the BT actions and observations in frame $0$,
\begin{align}
\label{betapost}
 {\bfbeta}_{{\rm post}}^{(0)}(s) = \frac{ \exp\left\{\mathcal L_0(s)\right\}{\bfbeta}_{0}^{(0)}(s)}{\sum_{\tilde s \in \mathcal S} \exp\left\{\mathcal L_0(\tilde s)\right\} {\bfbeta}_{0}^{(0)}(\tilde s)}.
 \end{align}  
The structure in \eqref{eq:post} captures the intuitive fact that knowledge of
$S_{0:t-2}$ and $\mathcal H_{M_T}^{(T)}$ is not informative to infer $S_t$, when $(S_{t-1},\mathbfcal L_t)$ is given.
In other words, $S_t$ can be inferred from the previous state $S_{t-1}$ (through the transition model) and the log-likelihood of the observations ($\mathbfcal L_t$).
Using \eqref{eq:joint_obs_act}, we can then express
  the optimization  in \eqref{ELBOmax} in the equivalent form
 \begin{align}
\label{ELBOmax2}
\max_{\boldnu,\boldpsi}
\ \underbrace{\mathbb E_{Q_{\boldnu}} \Bigg[
\sum_{t=1}^T\mathcal L_t(S_t)-\ln\frac{q_{\boldnu}(S_t|S_{t-1},\mathbfcal L_t)}{p_{\boldpsi}(S_t|S_{t-1})}
\Bigg|\mathbfcal L_{0:T}
 \Bigg]}_{\triangleq \widehat{\rm ELBO}(\boldnu,\boldpsi|\mathbfcal L_{0:T})},
\end{align}
where we neglected the terms
independent of the optimization variables $(\boldnu,\boldpsi)$: $f(\mathcal S_{{\rm BT},m}^{(t)}|\mathcal H_{m-1}^{(t)}
)$,
 $\bfbeta_0^{(0)}$ and ${\bfbeta}_{{\rm post}}^{(0)}$.
\\\indent
{By extending the previous analysis to $N{\geq}1$ episodes,
with the $n$th episode
of duration $T^{(n)}{+}1$,
expressed by the log-likelihood sequence  $\mathbfcal L_{0:T^{(n)}}^{(n)}$},
 the overall design is carried out by maximizing the ELBO averaged over the $N$ episodes as
 \begin{align}
 \label{overElBO}
\textbf{[LEARNING]:}\   \max_{\boldnu,\boldpsi}\  
\underbrace{
\frac{1}{N} \sum_{n=1}^{N}
\widehat{\rm ELBO}(\boldnu,\boldpsi|\mathbfcal L_{0:T^{(n)}}^{(n)})
}_{\triangleq \overline {\rm ELBO}(\boldnu,\boldpsi)}
,
  \end{align} 
  which will be the focus of Secs.~\ref{encdecparam} and \ref{sec:RVAEopt}.
\vspace{-5mm}
\subsection{Encoder and Decoder parameterization}
\label{encdecparam}
{For gradient based learning,} 
we encode states using the one-hot encoding, i.e., the $s$th standard basis column vector $\mathbf e_s$ denotes the SBPI $s{\in}\mathcal S$. {We will then use a continuous approximation of the one-hot encoded state in the backpropagation step of the training algorithm.}
We denote the one-hot encoded state in boldface as $\mathbf s_t{=}\mathbf e_{S_t}$, to distinguish it from $S_t{\in}\mathcal S$.

\textbf{Encoder:}
 The encoder models the posterior transition from $S_{t-1}$ to $S_t$ after observing $\mathbfcal L_t$.
Like previous work \cite{rvae}, we choose $\ln q_{\boldnu}( S_t| S_{t-1},\mathbfcal L_t)$ to be a recurrent neural network with weights and biases denoted by $\boldnu$.\footnote{For convenience, we express probabilities in the log domain.} The output of the neural network is produced by the softmax activation.
Consistently with the one-hot encoding of the state, we organize the elements in the matrix 
$[\mathbf {lnQ}_{\boldnu}(\mathbfcal L_t)]_{s',s}=\ln q_{\boldnu}(s'|s,\mathbfcal L_t)$,
 so that $\ln q_{\boldnu}(s'|s,\mathbfcal L_t)=\mathbf e_{s'}^{\top}{\cdot}\mathbf{lnQ}_{\boldnu}(\mathbfcal L_t){\cdot}\mathbf e_s$.

 \textbf{Decoder:}
 The decoder (generative model) represents the
joint distribution of state and observation sequences through the
 log-likelihood $\mathcal L_t(S_t)$ and the SBPI transition distribution $p_{\boldpsi}( S_t| S_{t-1})$, parametrized by $\boldpsi$.
Consistently with the one-hot encoding of the state, we organize the elements in the matrix $[\mathbf{lnP}_{\boldpsi}]_{s',s}{=}\ln p_{\boldpsi}(s'|s)$,
 so that $\ln p_{\boldpsi}(s'|s){=}\mathbf e_{s'}^\top{\cdot}\mathbf{lnP}_{\boldpsi}{\cdot}\mathbf e_s$.
  Since neural networks are universal function approximators and are well suited to gradient-based learning, we choose  $\mathbf{lnP}_{\boldpsi}$ to be a feedforward neural network with learnable parameters $\boldpsi$. The output of the neural network is produced by the softmax activation.
  
  {With this parameterization, the expectation in \eqref{ELBOmax2} can then be conveniently expressed as
  \begin{align}
  \label{1hotELBO}
  &\widehat{\rm ELBO}(\boldnu,\boldpsi|\mathbfcal L_{0:T})
 {=}
  \mathbb E_{Q_{\boldnu}}\Bigg[\sum_{t=1}^T  z_{t}(\boldnu,\boldpsi)\Bigg|\mathbfcal L_{0:T}\Bigg],
\end{align}
where
\begin{align}
\label{zt}
z_{t}(\boldnu,\boldpsi)\triangleq
\mathbf s_t^\top{\cdot}\Big(\mathbfcal{L}_t
{-}\mathbf{lnQ}_{\boldnu}(\mathbfcal L_t){\cdot}\mathbf s_{t-1}
{+}\mathbf{lnP}_{\boldpsi}{\cdot}\mathbf s_{t-1}\Big).
\end{align}
  }
 The optimization details are discussed in the next section. 
\vspace{-4mm}
\subsection{Optimization Algorithm}
\label{sec:RVAEopt}
The DR-VAE autoencoder is trained
{by solving the optimization problem \eqref{overElBO}}
via stochastic gradient ascent (SGA).
{To do so, each iteration of SGA is composed of two steps:
a \emph{forward propagation}, in which
$N_{\rm trg}$
independent trajectories of $\mathbf s_{0:T}$ (one-hot encoded)
are generated by the encoder, using the current posterior model $Q_\boldnu$ (since the expectation that defines \eqref{overElBO} via \eqref{ELBOmax2} is based on $Q_\boldnu$);
a \emph{backward propagation}, which aims to estimate
the gradient of $\overline {\rm ELBO}(\boldnu,\boldpsi)$ in \eqref{overElBO} based on the generated sequences.}
However,
{while gradients with respect to $\boldpsi$ can be done straightforwardly via \eqref{1hotELBO},}
gradient calculations with respect to $\boldnu$ are not tractable since the expectation in  \eqref{ELBOmax2} is taken over the latent variables $\mathbf s_{0:T}{\sim}Q_{\boldnu}$, whose joint distribution depends on $\boldnu$: {hence, $\mathbf s_t$ in \eqref{1hotELBO} is a stochastic function of $\boldnu$.} In the VAE literature, latent variable reparameterization techniques are proposed to
{decouple the expectation from such parametric dependence}.
 This is achieved by sampling {a random vector
 ${\bf R}_t{\in}\mathbb R^{|\mathcal S|}$ from a suitable distribution $f_R$,}
 and by defining a function $\mathbf g_\boldnu(\cdot;\mathbfcal L_t,\mathbf s_{t-1}):{\mathbb R^{|\mathcal S|}}\mapsto\{\mathbf e_i:i\in\mathcal S\}$
 such that 
 \begin{align}
 \label{gumbel}
 q_{\boldnu}(s|S_{t-1},\mathbfcal L_t){=}\mathbb P(\mathbf g_\boldnu({\bf R}_t;\mathbfcal L_t,\mathbf s_{t-1}){=}\mathbf e_s|\mathbfcal L_t,\mathbf s_{t-1}),\forall s.
 \end{align}
In other words,
 {$f_R$ and the function $g_\boldnu$ are designed so as to}
  generate states with the same statistics as $q_{\boldnu}$.
 The expectation $\mathbb E_{Q_{\boldnu}}$ in \eqref{1hotELBO}
 can then be replaced with an expectation $\mathbb E_{R}$ with respect to i.i.d. random variables
 $\{{\bf R}_t:t\geq 0\}$ {independent of $\boldnu$}, with $\mathbf s_t$ generated recursively as $\mathbf s_t=\mathbf g_\boldnu({\bf R}_t;\mathbfcal L_t,\mathbf s_{t-1})$,
 which enables tractable estimation of stochastic gradient estimates of the ELBO. 
We use the Gumbel-max reparameterization technique for this purpose,
proposed in \cite{Gumbel} for non-recurrent VAEs;
 to the best of our knowledge, this is the first paper to adopt it with DR-VAEs.
 Specifically, this techniques generates {a sequence of random vectors $\{{\bf R}_t{\in}\mathbb R^{|\mathcal S|}:t{\geq} 0\}$} from the standard Gumbel distribution,
with $\mathbf R_{t,i}{\sim}\mathcal G, \forall i \in\mathcal S$, i.i.d. across $i$ and $t$, and computes
 \begin{align}
\label{eq:s_gumbel}
&\mathbf s_0=\mathbf g(\mathbf R_0;\mathbfcal L_0)
{\triangleq}\argmax_{\mathbf e_{i}:i\in\mathcal S} \left[\mathbf R_{0,i}{+}
\ln {\bfbeta}_{{\rm post}}^{(0)}(i)\right],\\&
\mathbf s_t{=}\mathbf g_\boldnu(\mathbf R_t;\mathbfcal L_t,\mathbf s_{t-1})
{\triangleq}\argmax_{\mathbf e_i:i\in\mathcal S}\!\!\left[\mathbf R_{t,i}{+}
\mathbf e_{i}^{\top}\!{\cdot}\mathbf{lnQ}_{\boldnu}(\mathbfcal L_t){\cdot}\mathbf s_{t-1}
\right],
\nonumber
\end{align}
(one can show that \eqref{gumbel} holds, using the fact that $-\ln(E){\sim}\mathcal G$ if $E{\sim}\mathcal E(1)$).
\begin{algorithm2e}[t]
\DontPrintSemicolon
\SetNoFillComment
\SetKwFunction{Union}{Union}\SetKwFunction{FindCompress}{FindCompress}
\SetKwInOut{Input}{input}\SetKwInOut{Output}{output}
\caption[Caption for LOF]{DR-VAE training via SGA\footnotemark}
\label{alg:VAE_train}
\Input {initialize $(\boldnu,\boldpsi)=(\boldnu_0,\boldpsi_0)$,
${\rm epoch}=1$}
Sample batch of $N$ episodes $\{\mathbfcal L_{0:T^{(n)}}^{(n)}:n=1,\dots,N\}$ following adaptation policies $\{\pi^{(n)}:n=1,\dots,N\}$\;
\For{each episode $n=1,\cdots,N$}{
\For{$m=1,\cdots,N_{\rm trg}$}{ \label{line:grad1}
Sample $\mathbf R_{0}{\sim}\mathcal G^{|\mathcal S|}$; $\mathbf s_0{=}\mathbf g(\mathbf R_0{;}\mathbfcal L_0)$
via \eqref{eq:s_gumbel}, \eqref{betapost}
\label{line:s_0} \;
{\bf Forward propagation}:
\For {$t = 1,2,\cdots, T$ }{
Sample $\mathbf R_{t}\sim \mathcal G^{|\mathcal S|}$;
$\mathbf s_t=
\mathbf g_\boldnu(\mathbf R_t;\mathbfcal L_t,\mathbf s_{t-1})$ via \eqref{eq:s_gumbel};
$z_{t}(\boldnu,\boldpsi)$ via \eqref{zt}
\label{line:s_sample}
\;
}
Sample estimate of $\widehat{\rm ELBO}$ as ${\widehat{\rm ELBO}_{m}^{(n)}(\boldnu,\boldpsi)=\sum_{t=1}^{T}  z_{t}(\boldnu,\boldpsi) }$ (see \eqref{1hotELBO}) \label{line:ELBO} \;
{\bf Backward propagation}: sample gradient $\nabla_{\boldnu,\boldpsi} \widehat{\rm ELBO}_{m}^{(n)}(\boldnu,\boldpsi)$ via softmax approx.\label{line:grade} \; 
}
}
Batch gradient
$\nabla{=}\frac{1}{N\cdot N_{\rm trg } }\sum_{n,m} \nabla_{\boldnu,\boldpsi} \widehat{\rm ELBO}_{m}^{(n)}(\boldnu,\boldpsi)$  \label{line:avg_grad}\;
SGA step $(\boldnu,\boldpsi) \gets (\boldnu,\boldpsi)+\gamma\cdot\nabla$ \label{line:app_grad}\;
{Use $p_{\boldpsi}$ to compute prior belief updates for the next epoch (see \eqref{eq:prior_up});}
 ${\rm epoch} \gets{\rm epoch} +1$\;
 repeat from step 1 until convergence\;
 \Return{ $q_{\boldnu}, p_{\boldpsi}$ }
\end{algorithm2e}
\footnotetext{We do not express the dependence on $n,m$ for notational convenience, whenever unambiguous}
{Thanks to this reparameterization, 
by reorganizing the vectors $\mathbf g_\boldnu(\mathbf R_t;\mathbfcal L_t,\mathbf e_j)$ into the matrix
$\mathbf G_\boldnu(\mathbf R_t;\mathbfcal L_t)$ with columns
$[\mathbf G_\boldnu(\mathbf R_t;\mathbfcal L_t)]_{:,j}{=}
\mathbf g_\boldnu(\mathbf R_t;\mathbfcal L_t,\mathbf e_j)$,
we can then express the state sequence as an explicit function of $\boldnu$,
\begin{align}
\label{approxSt}
&\mathbf s_0=\mathbf g(\mathbf R_0;\mathbfcal L_0),\ 
\mathbf s_t{=}
\mathbf G_\boldnu(\mathbf R_t{;}\mathbfcal L_t)\cdot\mathbf s_{t-1},\forall t\geq 1.
\end{align}
}
Yet,
{computing the stochastic gradient of $\overline {\rm ELBO}(\boldnu,\boldpsi)$ with respect to $\boldnu$ requires computing the gradient of $\mathbf s_t$ with respect to $\boldnu$ -- a non-differentiable function due to the}
$\argmax$ in \eqref{eq:s_gumbel}. 
Similar to \cite{Gumbel}, we  adopt a hybrid strategy, where for forward-propagation we generate $\mathbf s_t$ via \eqref{approxSt} and for gradient calculation via back-propagation, we
approximate \eqref{eq:s_gumbel} through the (differentiable) softmax function, one-hot encoded as $\tilde{\mathbf g}_\boldnu(\mathbf R_t;\mathbfcal L_t,\mathbf s_{t-1})$ with components,
 $\forall i \in \mathcal S$,
\begin{align}
\label{eq:s_onehot_approx}
\!\![\tilde{\mathbf g}_\boldnu(\mathbf R_t;\mathbf e_j,\mathbfcal L_t)]_i{=}
\frac{\exp\Big\{\frac{\mathbf R_{t,i}{+}\ln q_{\boldnu}(i|j,\mathbfcal L_t)}{\tau}\Big\}}{\sum_{\ell{=}1}^{|\mathcal S|}\!\exp\!\Big\{\!\frac{\mathbf R_{t,\ell}{+}\ln q_{\boldnu}(\ell|j,\mathbfcal L_t)}{\tau}\!\Big\}},
 \end{align} 
where $\tau{>}0$ is a temperature parameter controlling the smoothness of the approximation: as $\tau{\to}0$,  $\tilde{\mathbf g}_\boldnu$ approaches the exact one-hot encoded function $g_\boldnu$ in~\eqref{eq:s_gumbel}.  
{
By replacing $\mathbf G_\boldnu$ with
$\tilde{\mathbf G}_\boldnu{=}
\sum_{j\in\mathcal S}
\tilde{\mathbf g}_\boldnu(\mathbf R_t;\mathbf e_j,\mathbfcal L_t)
\mathbf e_j^\top
$ in \eqref{approxSt},
we have thus defined a
differentiable approximation of $\mathbf s_t$, computed recursively as
$\nabla_{\boldnu}\mathbf s_t{=}
[\nabla_{\boldnu}\tilde{\mathbf G}_\boldnu(\mathbf R_t{;}\mathbfcal L_t)]\mathbf s_{t-1}
+\mathbf G_\boldnu(\mathbf R_t{;}\mathbfcal L_t)[\nabla_{\boldnu}\mathbf s_{t-1}]
$.
}
\begin{table*}[t]
\footnotesize
\begin{center}
\begin{tabular}{|l|c|l| |l|c|l| |l|c|l|}
\hline
  \# of UE beams & \!\!\!$|\mathcal F|$\!\!\! & \!\!\!$16$\!\!\!
  &
  BS height & \!\!\!$h_{\rm BS}$\!\!\! & \!\!\!$10$[m]\!\!\!
  &
  Misalignment to alignment gain ratio:\!\!\! & \!\!\!$\rho$\!\!\! & 
  \\
  \# of BS beams & \!\!\!$|\mathcal C|$\!\!\! & \!\!\!$32$\!\!\!
  &
  BS to road center distance & \!\!\!$D$\!\!\! & \!\!\!$22$[m]\!\!\!
  &
  \ \ \ \ Straight highway, LOS &  & \!\!\!$-10.2$dB\!\!\!
  \\
  \# of UE antennas & \!\!\!$M_{\rm rx}$\!\!\! & \!\!\!$(8{\times}4)$\!\!\!
  & 
  Lane separation & \!\!\!\!$\Delta_{\rm lane}$\!\!\!\! & \!\!\!$3.7$[m]\!\!\!
  &
  \ \ \ \ T-shaped urban, LOS+NLOS &  & \!\!\!$-8.2$dB\!\!\!\textbf{}
  \\
  \# of BS antennas & \!\!\!$M_{\rm tx}$\!\!\! & \!\!\!$(16{\times}8)$\!\!\!
  &
  UE average speed & \!\!\!$\mu_v$\!\!\! & \!\!\!30[m/s]\!\!\!
  &
  Max power ratio:  &  -  &  
  \\
  Bandwidth & \!\!\!$W_{\rm tot}$\!\!\! & \!\!\!$100$[MHz]\!\!\!
  &
  UE speed standard deviation & \!\!\!$\sigma_v$\!\!\! & \!\!\!10[m/s]\!\!\!
  &
  \ \ \ \ Straight highway, LOS  &    &  \!\!\!$3.8$dB\!\!\!
  \\
  Slot duration & \!\!\!$T_{\rm s}$\!\!\! & \!\!\!$400$[$\mu$s]\!\!\!
  &
  UE mobility memory parameter & \!\!\!$\gamma_v$\!\!\! & \!\!\!0.2\!\!\!
  &
  \ \ \ \ T-shaped urban, LOS+NLOS  &    &  \!\!\!$4.5$dB\!\!\!
  \\
  Frame duration [time]\!\!\!& \!\!\!$T_{\rm fr}$\!\!\! & \!\!\!20[ms]\!\!\!
  &
  UE lane change prob. (per frame)\!\!\! & \!\!\!$q$\!\!\! & \!\!\!0.01\!\!\!
  &
  Batch size (episodes per train. epoch)\!\!\!& \!\!\!$N$\!\!\! & \!\!\!5\!\!\!  
  \\
  Frame duration [slots]\!\!\! & \!\!\!$K$\!\!\! & \!\!\!$50$\!\!\!
  &
  BS height & \!\!\!$h_{\rm BS}$\!\!\! & \!\!\!$10$[m]\!\!\!
  &
  \# of state trajectories per episode  & \!\!\!$N_{\rm trg}$\!\!\! & \!\!\!$6$\!\!\!
  \\
  Carrier frequency & \!\!\!$f_c$\!\!\! & \!\!\!$30$[GHz]\!\!\!
  &
       &  & 
  &
  \# of sorted beliefs & \!\!\!$|\tilde{\mathcal B}^{\rm sort}|$\!\!\! & \!\!\!$2000$\!\!\!
  \\
  \hline
\end{tabular}
\normalsize
\caption{Parameters, symbols, and numerical values used in the simulations.
\vspace{-5mm}
}
\label{table1}
\end{center}
\end{table*}

\label{page:vae_train}   
{The overall training of DR-VAE is shown in Algorithm \ref{alg:VAE_train}. 
Starting from an initialization of $(\boldnu,\boldpsi)$, it
solves {\bf [LEARNING]} via SGA
over multiple \emph{epochs}, each corresponding to a batch of $N$ episodes, and
 returns the encoder and transition model trained on the batches upon reaching convergence.
 In each training epoch,
 a batch of $N$ episodes is used at step 1, either selected randomly from an \emph{offline database} or sampled online
 following adaptation policy $\pi$ (possibly, each episode is executed using a different policy).
 For each episode:
 1) the corresponding log-likelihoods
 $\mathbfcal L_{0:T^{(n)}}^{(n)}$ are computed;
 2) $N_{\rm trg}$ independent trajectories of $\mathbf s_{0:T}$ are generated
by the encoder using the current posterior model $q_\boldnu$ via Gumbel-softmax reparameterization (forward propagation of steps 4-6), and a sample estimate of
$\widehat{\rm ELBO}$ is computed in step 7;
3) the gradient of the estimated $\widehat{\rm ELBO}$ is computed via backward propagation, using the softmax approximation  (step 8).
After doing these steps for all episodes,
the batch gradient is computed in step 9,
one SGA step is performed to compute the new parameters, with step-size $\gamma$ (step 10).
The procedure is repeated from step 1 until convergence.
The learned model is then used in \eqref{eq:prior_up} to generate prior beliefs for the POMDP adaptation, in a continuous process of learning and adaptation.}
\vspace{-4mm}
\section{Numerical Results}
\label{sec:numres}
In this section, we present numerical results illustrating the performance of the proposed DR-VAE based learning framework and the proposed PBVI-, MDP- and ER-MDP-based adaptation policies.
Unless otherwise stated, the simulation parameters are summarized in Table \ref{table1}.
\vspace{-4mm}
\subsection{Simulation setup}
\label{sec:setup}
{We consider two mobility and channel scenarios (see Fig.\ref{fig:scenarios}):}

\noindent{\textbf{\emph{Straight highway, LOS}:} this scenario models a straight highway with two lanes separated by $\Delta_{\rm lane}$. The UE changes lanes with probability $q$ (per frame). The UE position along the direction of the road, $\omega_{t}$, evolves according to a Gauss-Markov mobility process with speed $\upsilon_{t}$ \cite{8673556}, 
used also in \cite{GMmobility,8275645,TVT2020} to model vehicular dynamics.
\label{page:GMmotivate}
It is modeled as
\begin{align}
\label{GMmodel}
\begin{cases}
\upsilon_t =\gamma_v \upsilon_{t-1}+(1-\gamma_v)\mu_v + \sigma_v\sqrt{1-\gamma_v^2}\cdot \tilde{\upsilon}_{t-1},\\
\omega_{t}=\omega_{t-1}+T_{\rm fr}\cdot \upsilon_{t-1},
\end{cases}
\end{align}
where $(\mu_v,\sigma_v)$  is the average speed and its standard deviation;  $\gamma_v$ is the memory parameter and
 $\tilde{\upsilon}_{t-1}{\sim}\mathcal N(0,1)$ is i.i.d. over frames. 
 For this scenario, we consider a purely LOS channel:
the UE's position at each frame ($\omega_t$ and current lane position) determines the LOS AoA and AoD pair $(\theta_t^{(0)},\phi_t^{(0)})$, and the pathloss ${\rm PL}_t$; the channel is then generated as in \eqref{eq:channel}, without NLOS components ($N_P=0$).}

\noindent{\textbf{\emph{T-shaped urban, LOS+NLOS}:}
this scenario
\label{page:scenario2}models a more complicated geometry and channel conditions, whereby the BS provides coverage to a T-shaped urban road as shown in Fig.~\ref{fig:scenarios}. The UE's mobility along the road follows the same Gauss-Markov mobility dynamics with lane changes as in the previous scenario, with one distinction: at the T-junction, the UE continues straight with 50\% probability, or
turns right on the same lane; after making the turn, the UE's mobility follows the same Gauss-Markov mobility dynamics with lane changes, until it exits the coverage area. In this scenario, the channel has $N_P{=}2$ NLOS paths in addition to the LOS one, each with $\sigma_{\ell,t}^2{=}0.1/{\rm PL}_t, \ell{=}1,2 $, so that
 the NLOS components contain 20\% the energy of the LOS path, consistent with experimental observations in \cite{channel_model}.
The LOS AoA, AoD and pathloss are functions of the UE's position as in the previous scenario. The NLOS AoA/AoD pair $(\theta_t^{(\ell)},\phi_t^{(\ell)}), \ell{=}1,2$ are generated uniformly over the unit sphere, i.i.d. over frames, i.e., the azimuth component follows ${\rm Uniform}[0,2\pi]$ and the elevation component follows ${\rm Uniform}[0,\pi]$. Then, the channel is found using \eqref{eq:channel} with $N_P{=}2$ (1 LOS + 2 NLOS paths).}

\begin{figure}[t]
	\centering
	\includegraphics[trim = 10 10 10 0,clip,width=\columnwidth]{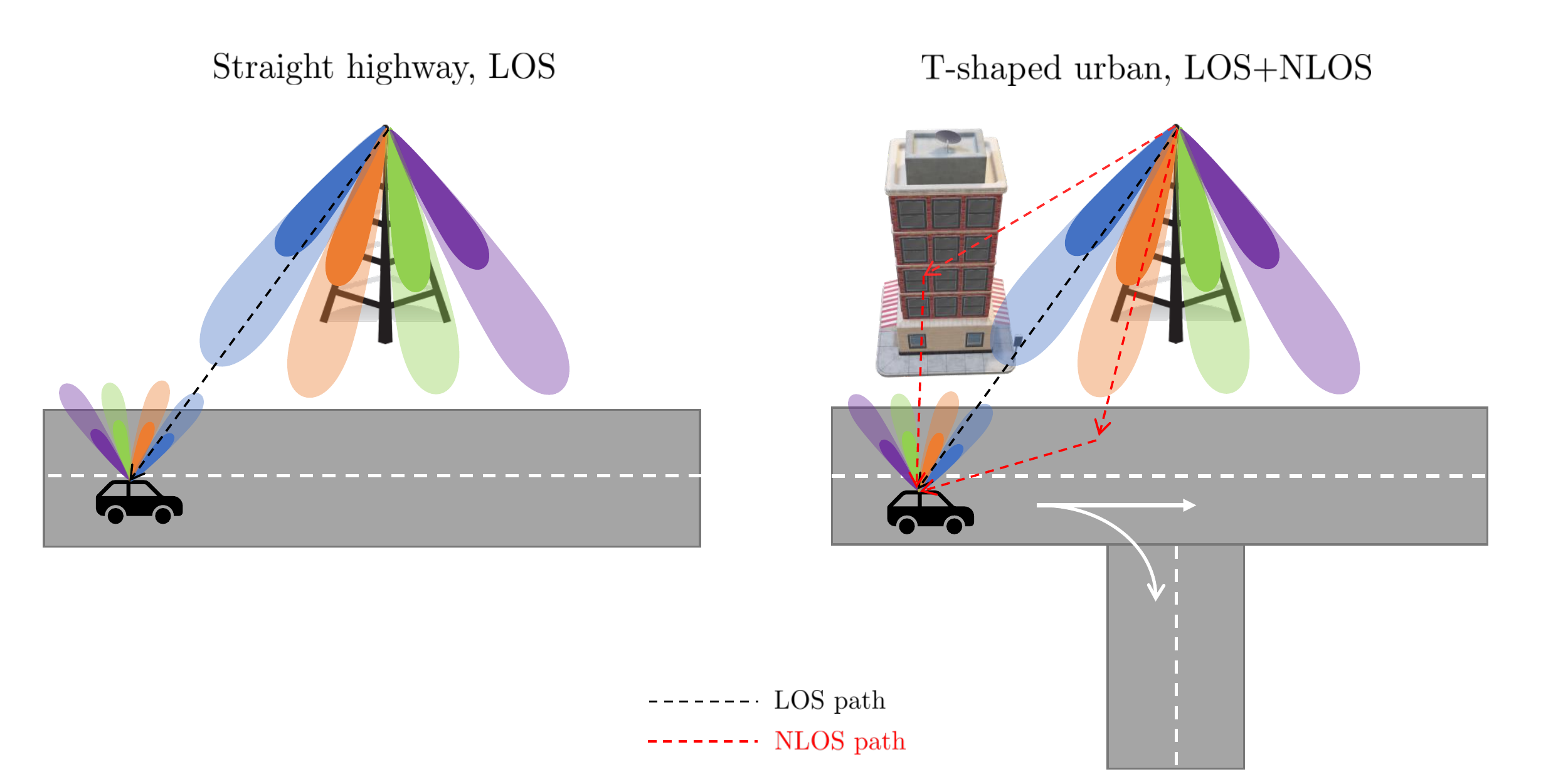}
	\caption{{Two UE's mobility and channel scenarios.}}\label{fig:scenarios}
\end{figure}

{For either scenarios,
the ground truth model of beam dynamics $p(\cdot|\cdot)$ is computed from several sequences of SBPIs $S_{0:T}$, by: 1) generating $10^4$ UE's trajectories;
2) for each trajectory, the SBPI sequence $S_{0:T}$ is computed via \eqref{eq:GLOS} and \eqref{Stdef}. 
 \label{page:GMMt2}
 Although the Gauss-Markov process exhibits the Markov property on the velocity-position pair $(\upsilon_t,\omega_t)$, the SBPI process $\{S_t,t{\geq}0\}$ is non-Markovian, since $S_t$ is a function of the
 lane and position $\omega_t$, and the speed $\upsilon_t$ introduces longer-term correlation that cannot be captured by the SBPI transition probability 
$p(s^{\prime}|s)$. As discussed next, our numerical evaluations reveal that, despite this mismatch, the Markov approximation on $\{S_t,t{\geq}0\}$ accurately represents the system performance.}

The BS and UE use uniform planar arrays with 3D analog beam steering, with codebooks
\begin{align*} 
\begin{cases}
    \mathcal C\!\!\! &= \left\{\mathbf d_{\rm tx}(\phi^{p}):p=1,\cdots,|\mathcal C| \right\}, 
    \\
    \mathcal F\!\!\! &=\left \{\mathbf d_{\rm rx}(\theta^{p}):p=1,\cdots,|\mathcal F| \right\},
    \end{cases}
\end{align*}
where $\theta^p$ and $\phi^p$ are the $p^{\rm th}$ AoA and AoD directions, respectively, {and $\mathbf d_{\rm tx}$, $\mathbf d_{\rm rx}$ are the associated array response vectors \cite{alkhateeb_hybrid}.}
{The set of AoD directions $\{\phi^p,\forall p\}$ are selected so that beam projections on the road corresponding to the half power bandwidth cover both lanes of the road.  The set of AoA directions $\{\theta^p,\forall p\}$ are selected uniformly spaced in $[0,\pi]^2$.}

The encoder $q_{\phi}$ of the DR-VAE is a recurrent neural network with one fully-connected hidden layer with 100 units, each with the relu ($\max(0,x)$) activation function. Similarly, the decoder $p_{\boldpsi}$ is a fully-connected feed-forward neural network with two hidden layers, each with 100 units and relu activation. 
For both encoder and decoder,
a softmax layer produces $|\mathcal S|$ output units. 
 {The DR-VAE is trained \emph{online} with Algorithm \ref{alg:VAE_train}:
 in each training epoch, the DR-VAE is updated using a batch of $N{=}5$ episodes (sampled using either the PBVI or (ER-)MDP policies) and for each episode $N_{\rm trg}{=}6$ trajectories of states are sampled to computed SGA updates;
 the model of beam dynamics is then updated via SGA and
used for the prior belief updates of the next epoch ($5$ episodes),  after which the DR-VAE is updated using the new episodes, and so on.}
 \begin{figure*}[ht]
   \label{figure:VAE_training}
\begin{subfigure}{.45\textwidth}
	\centering
	\includegraphics[trim = 5 30 15 10,clip,width=.9\columnwidth]{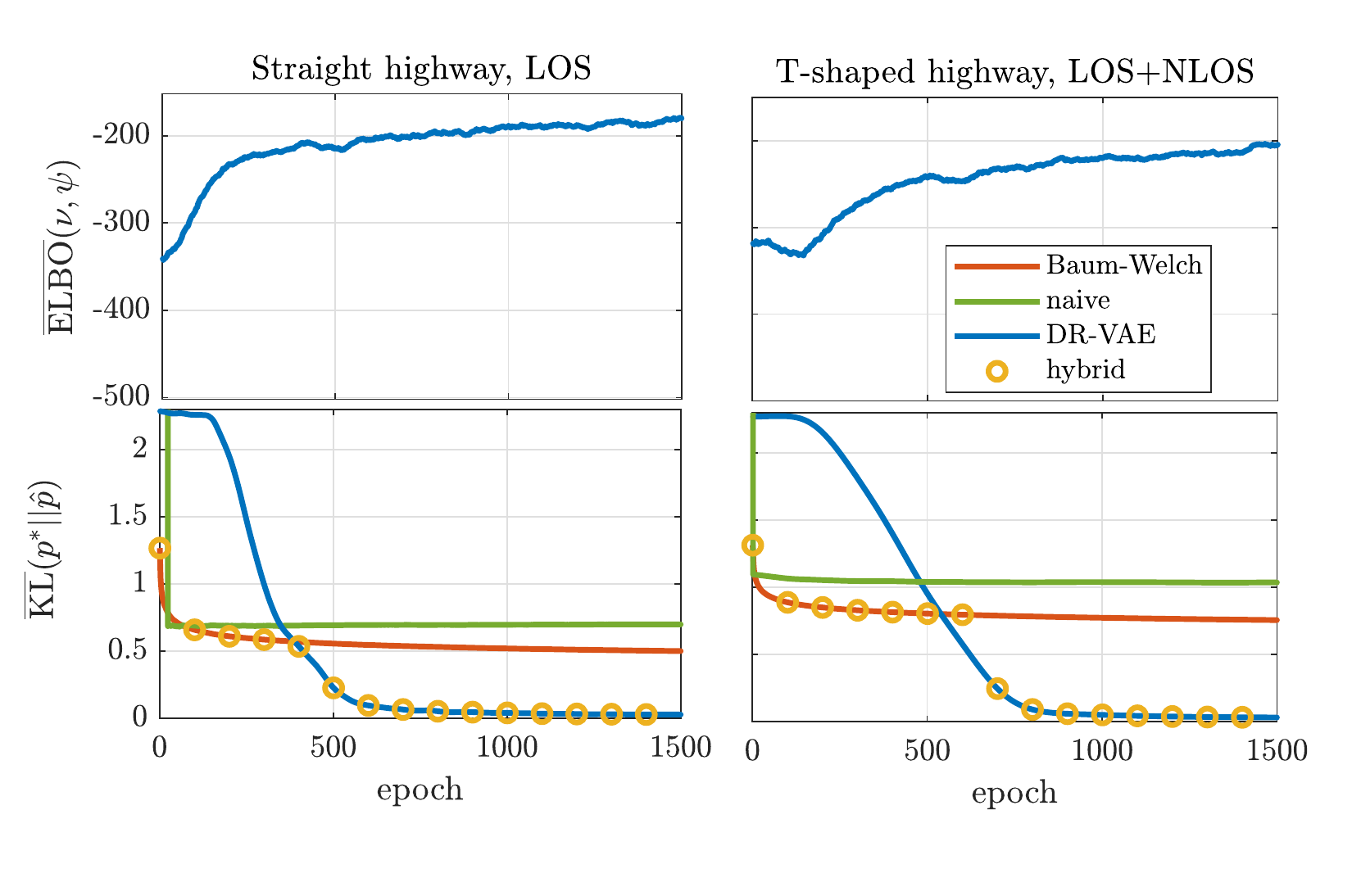}
	\caption{Training progress vs epochs {(1 epoch = 5 episodes)}.}
	\label{figure:VAE_training1}
\end{subfigure}\hspace{.5cm}
\begin{subfigure}{.45\textwidth}
	\centering
	\includegraphics[trim = 5 30 15 10,clip,width=.9\columnwidth]{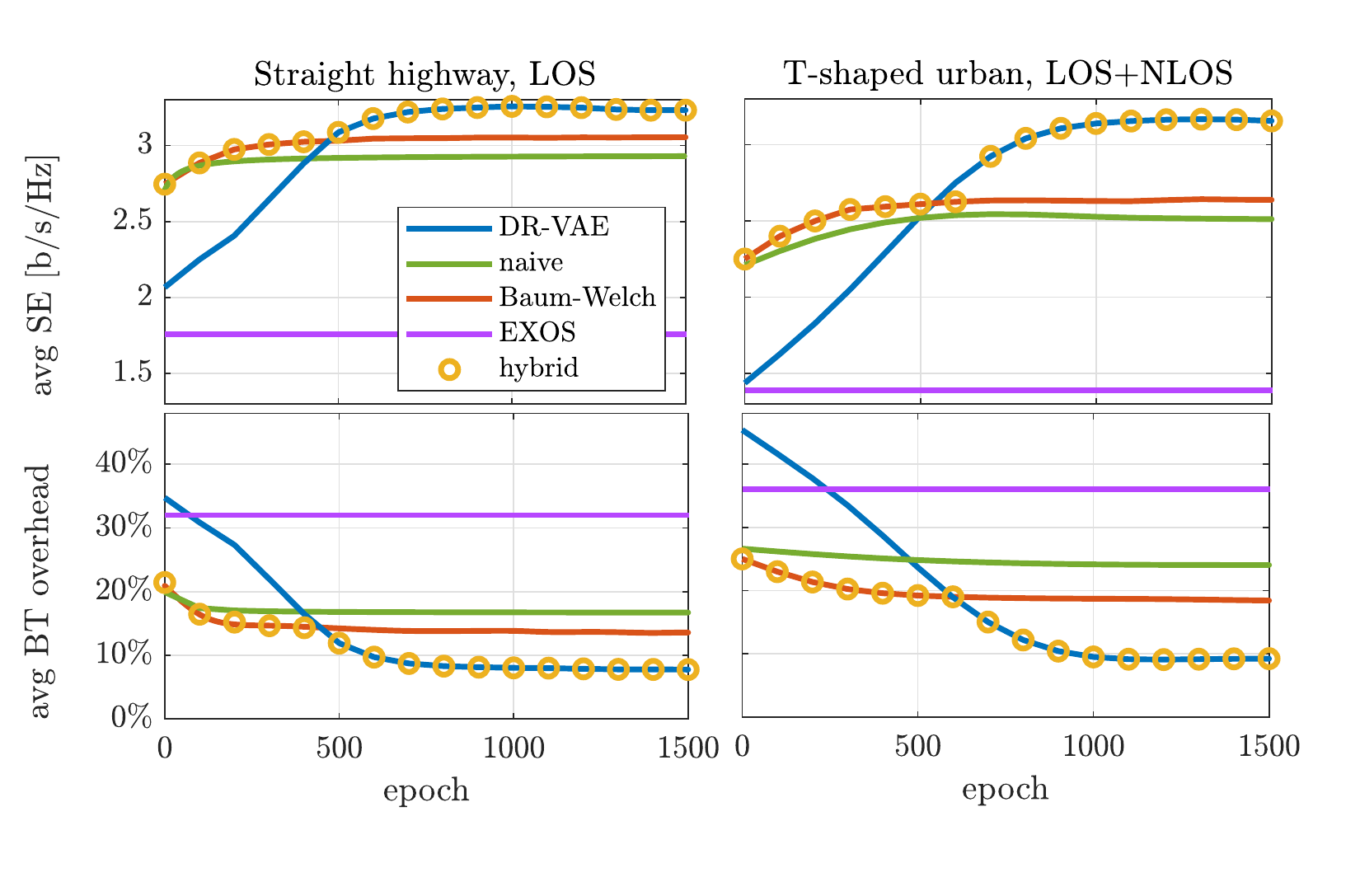}
	\caption{Average spectral efficiency and BT overhead vs epochs.}
	\label{figure:VAE_training2}
\end{subfigure}
\vspace{3mm}
\caption{Training progress of DR-VAE and other algorithms,
based on \emph{Gauss-Markov mobility and 3D analog beamforming}; ${\rm SNR}_{\rm BA} = 20$dB.\vspace{-4mm}}
\end{figure*}

We use the KL divergence  between the ground truth $p(\cdot|\cdot)$  and the learned model of beam dynamics $\hat p(\cdot|\cdot)$ to measure the accuracy of the learned model,
computed as
\begin{align}
 \overline {\rm KL}(p\|\hat p) \triangleq \frac{1}{|\mathcal S|} \sum_{s \in \mathcal S}\sum_{ s^{\prime} \in \mathcal S \cup \{ \bar s\}}  p(s^{\prime}|s) \ln \frac{p(s^{\prime}|s)}{\hat p(s^{\prime}|s)}.
 \end{align} 
 
 We evaluate the communications performance of a given adaptation policy $\pi$ through the
 average BT overhead (percentage of the frame duration used for BT) and
 average spectral efficiency $\bar{\rm SE}^\pi$ [bits/s/Hz]. By Little's law, it is expressed as
$\bar{\rm SE}^\pi = \frac{\bar B_{\rm tot}^{\pi}}{\bar D_{\rm tot} W_{\rm tot}},$
where $\bar B_{\rm tot}^{\pi}$ 
is the expected total number of bits successfully delivered to the UE under policy $\pi$ during the episode, and $\bar D_{\rm tot}$[s]
is the expected episode duration (independent of $\pi$, function only of UE's mobility).

{To evaluate these metrics, we use two types of Monte Carlo simulation approaches:}

\noindent{\emph{\bf Markov SBPI with binary SNR model},
in which we use the ground truth model of beam dynamics $p(\cdot|\cdot)$
to generate the SBPI sequence, and use the binary SNR model developed in \secref{secmodel} to assess the learning and communication metrics.}

\noindent{\emph{\bf 3D analog beamforming model with UE's mobility},
in which UEs' trajectories and the channel matrix are generated based on either scenario,
and the signal model \eqref{eq:signal_model} is used to evaluate 
the beam-training performance and outage conditions.
The purpose of this second approach is to validate the modeling abstractions used in our formulation (Markov SBPI dynamics, sectored antenna  with binary SNR model).}

In addition to the 
PBVI (Algorithm \ref{alg:PBVI_main}), MDP and ER-MDP policies (\secref{sec:MDP}),
we simulate two additional policies:
\label{page:EXOS}
\begin{itemize}[leftmargin=*]
\item Exhaustive search over SBPI (EXOS): it is an adaptation of exhaustive search proposed in \cite{exhaustive}; however,
unlike \cite{exhaustive}, which sweeps the entire AoA and AoD range (in our setting, a total of
$|\mathcal C|\cdot|\mathcal F|=512$ beam pairs, unfeasible within the frame duration), we restrict the exhaustive search to
the SBPIs only (i.e., those beam pairs that maximize the beamforming gain within the coverage area of the BS, see \eqref{sbpi}).
These represent a much smaller set of 15 beam pairs for the \emph{straight highway, LOS} scenario and 17 beam pairs for the \emph{T-shaped urban, LOS+NLOS} scenario. Note that no training is required for EXOS since it does not utilize the estimated beam dynamics.
\item Short-timescale single-shot (STSS) policy  \cite{stss}:
it is a state-of-the-art POMDP policy with a \emph{single-shot} beam training phase of \emph{fixed} duration.
Note that \cite{stss} assumes prior knowledge of the SBPI transition model; we use both the ground truth and the DR-VAE estimated models to evaluate STSS.
\end{itemize}
\subsection{Model training performance}
\label{sec:train_performance}
In \figref{figure:VAE_training1} and \figref{figure:VAE_training2}, we show the training progress of DR-VAE in terms of average ELBO, KL divergence,
spectral efficiency and BT overhead, based on the 
\emph{Gauss-Markov mobility and 3D analog beamforming}, for both scenarios. We compare different frameworks: the proposed DR-VAE (Algorithm \ref{alg:VAE_train}),
the naive approach (see \eqref{eq:naive}) and the Baum-Welch algorithm \cite{baum_welch}.
Observations are sampled following the PBVI-based policy (Algorithm \ref{alg:PBVI_main}).
We note that, as training progresses, the ELBO increases and $\overline {\rm KL}(p\|\hat p)$ decreases simultaneously, indicating an improvement in the accuracy of the learned transition model.
 At the same time, the average spectral efficiency  
 increases and the BT overhead decreases, indicating that the POMDP adaptation framework uses resources more efficiently by leveraging an accurate model of beam dynamics. 
 {Notably, thanks to  its error robustness, the PBVI policy yields better spectral efficiency
than EXOS, even in the early training epochs,
demonstrating that it provides a satisfactory baseline performance even with inaccurate models of beam dynamics;
as learning progresses, its performance improves even further.}
 \\\indent
 {Comparing the two scenarios, 
 we observe a slight performance degradation in
 the \emph{T-shaped urban, LOS+NLOS} scenario, attributed to
 NLOS channel components causing more frequent BT feedback errors (as also reflected by the increased  misalignment to alignment gain ratio $\rho$, see Table~\ref{table1}).}
 \begin{figure*}[h]
\begin{subfigure}{.45\textwidth}
	\centering
	\includegraphics[trim = 25 0 35 15,clip,width=.8\columnwidth]{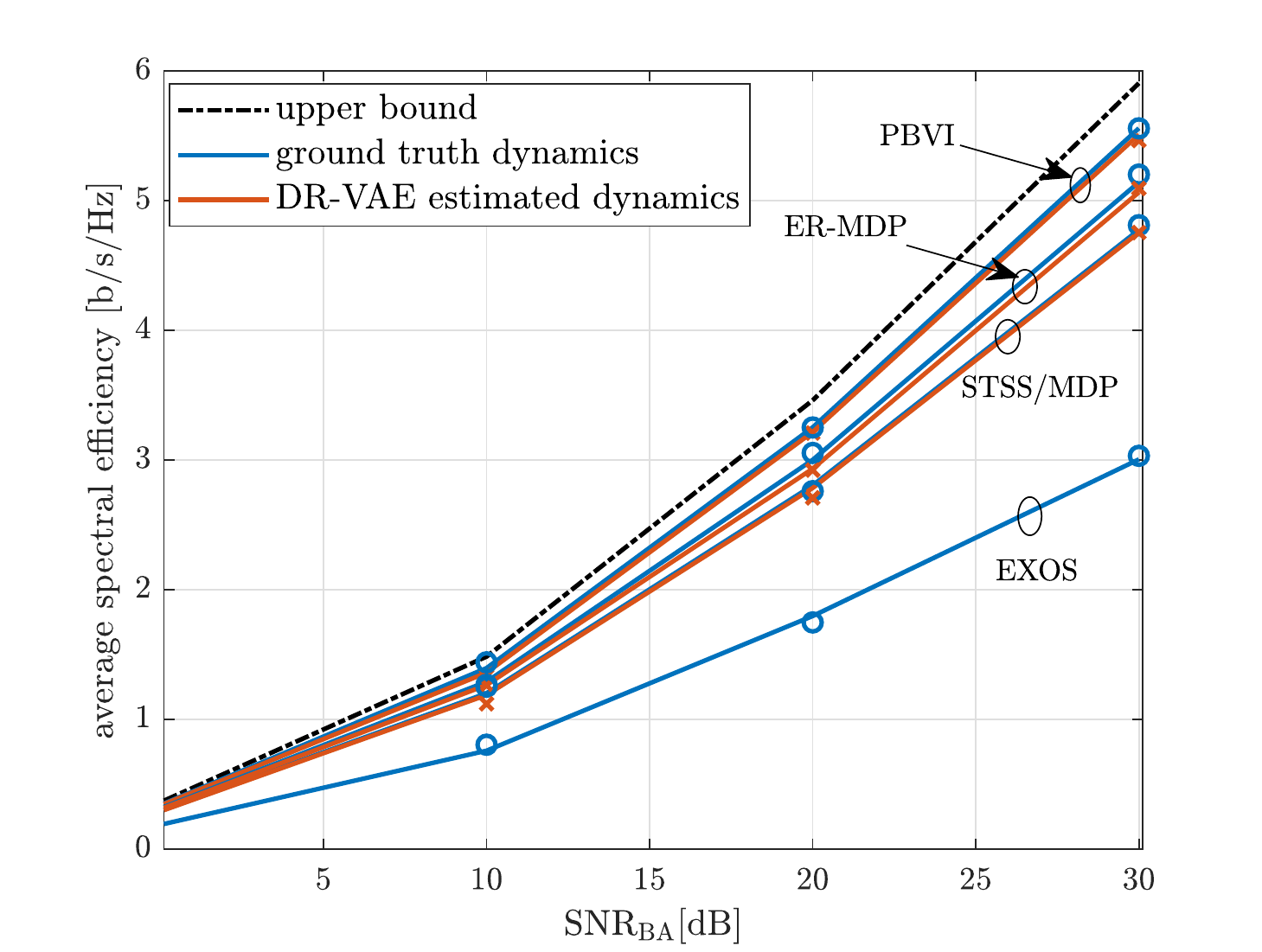}
	\caption{\emph{Straight highway, LOS} scenario}
	\label{figure:SE_vs_SNR}
\end{subfigure}\hspace{1cm}
\begin{subfigure}{.45\textwidth}
	\centering
	\includegraphics[trim = 25 8 30 7,clip,width=.8\columnwidth]{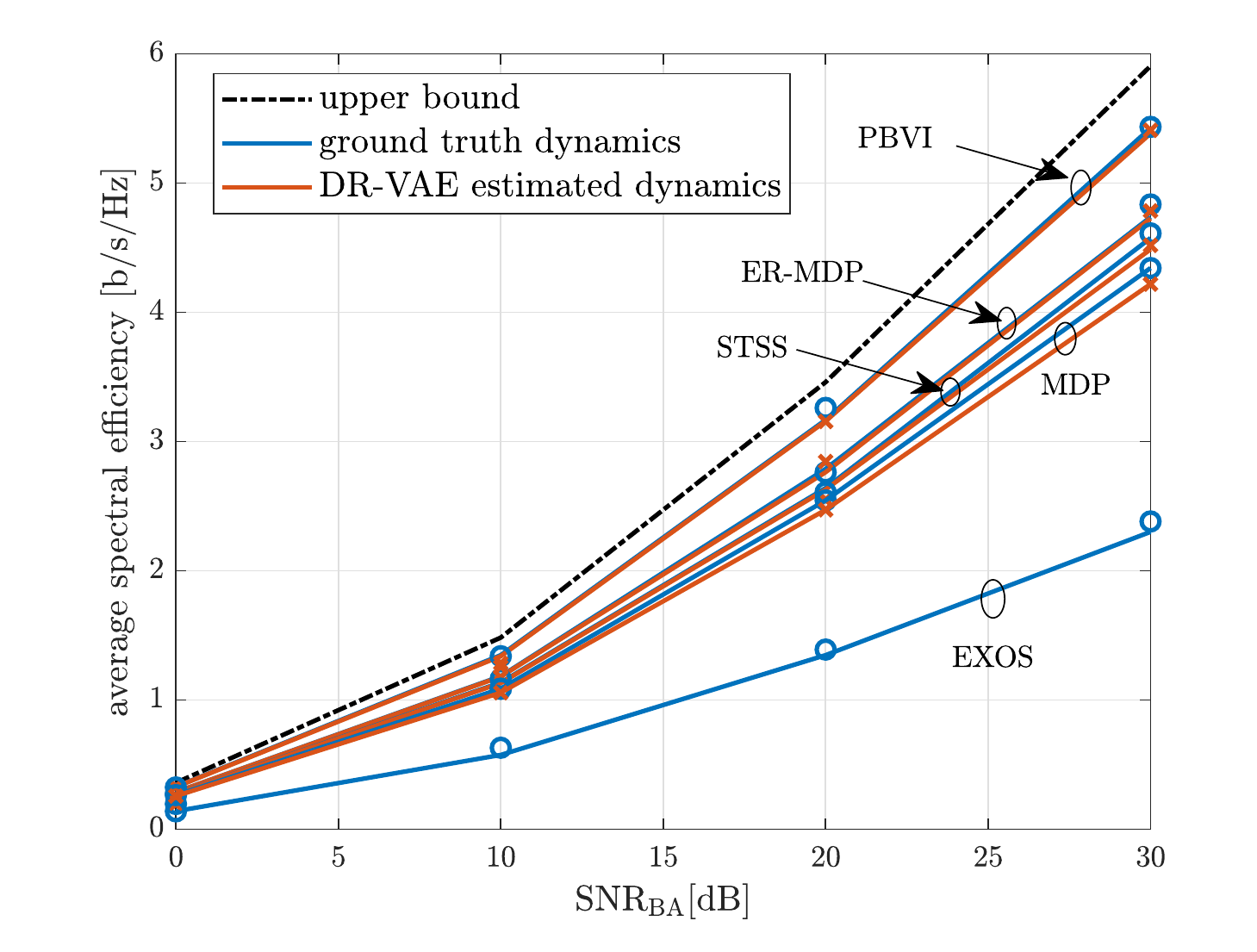}
	\caption{\emph{T-shaped urban, LOS+NLOS} scenario}
	\label{figure:SE_vs_SNR2}
\end{subfigure}
\vspace{3mm}
\caption{Average spectral efficiency versus SNR.
Monte Carlo simulation based on
\emph{Markov SBPI with binary SNR model} (solid lines)
 and
\emph{3D analog beamforming model with UE's mobility} (markers).\vspace{-2mm}}
\label{fig:fig}
\end{figure*}
 \begin{figure*}[h]
\begin{subfigure}{.45\textwidth}
        \centering
        \includegraphics[trim = 25 0 30 15,clip,width=.8\columnwidth]{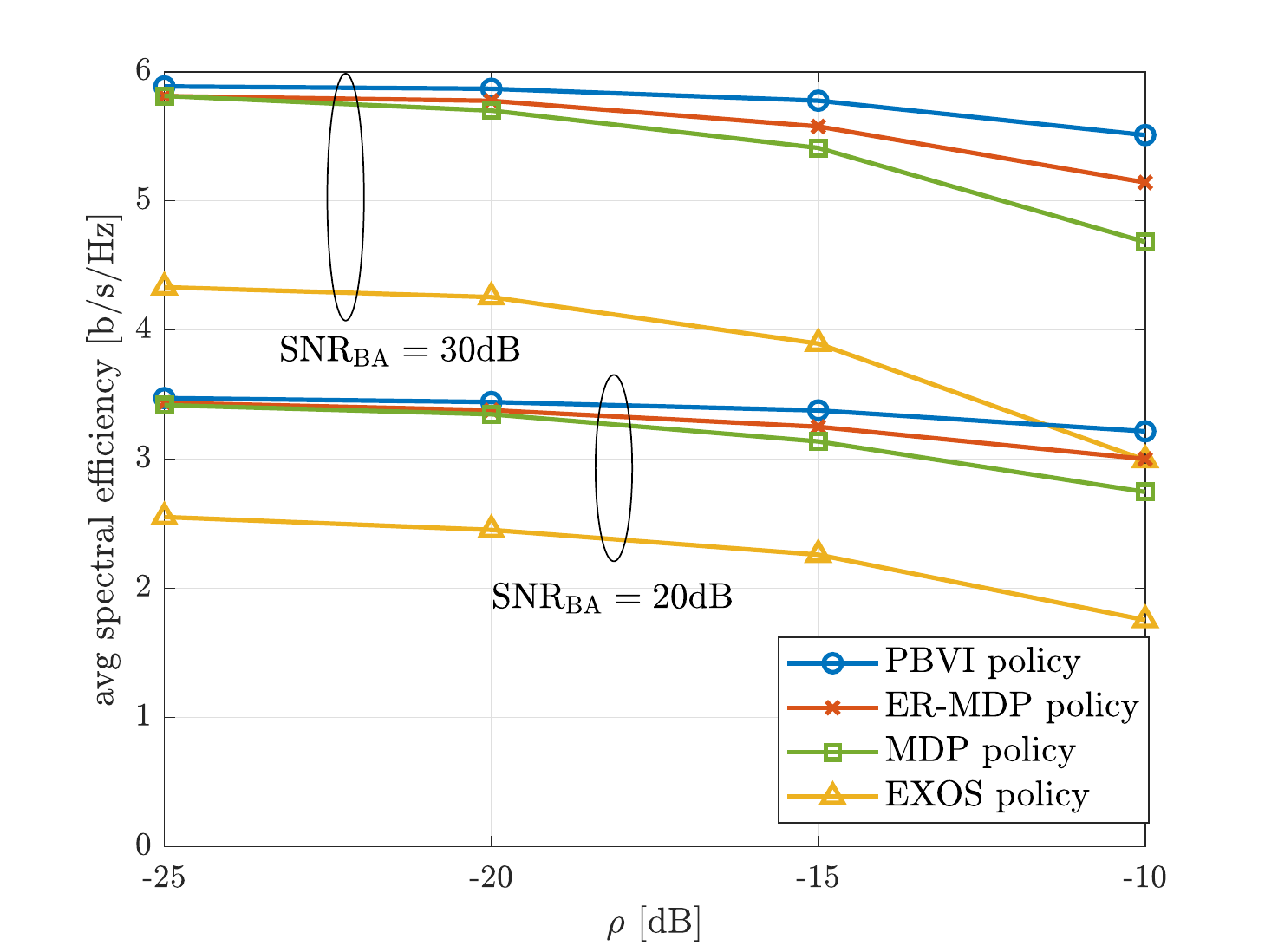}
	\caption{Average spectral efficiency versus $\rho$.}
	\label{figure:SE_vs_rho}
\end{subfigure}\hspace{1cm}
\begin{subfigure}{.45\textwidth}
        \centering
        \includegraphics[trim = 20 0 30 15,clip,width=.8\columnwidth]{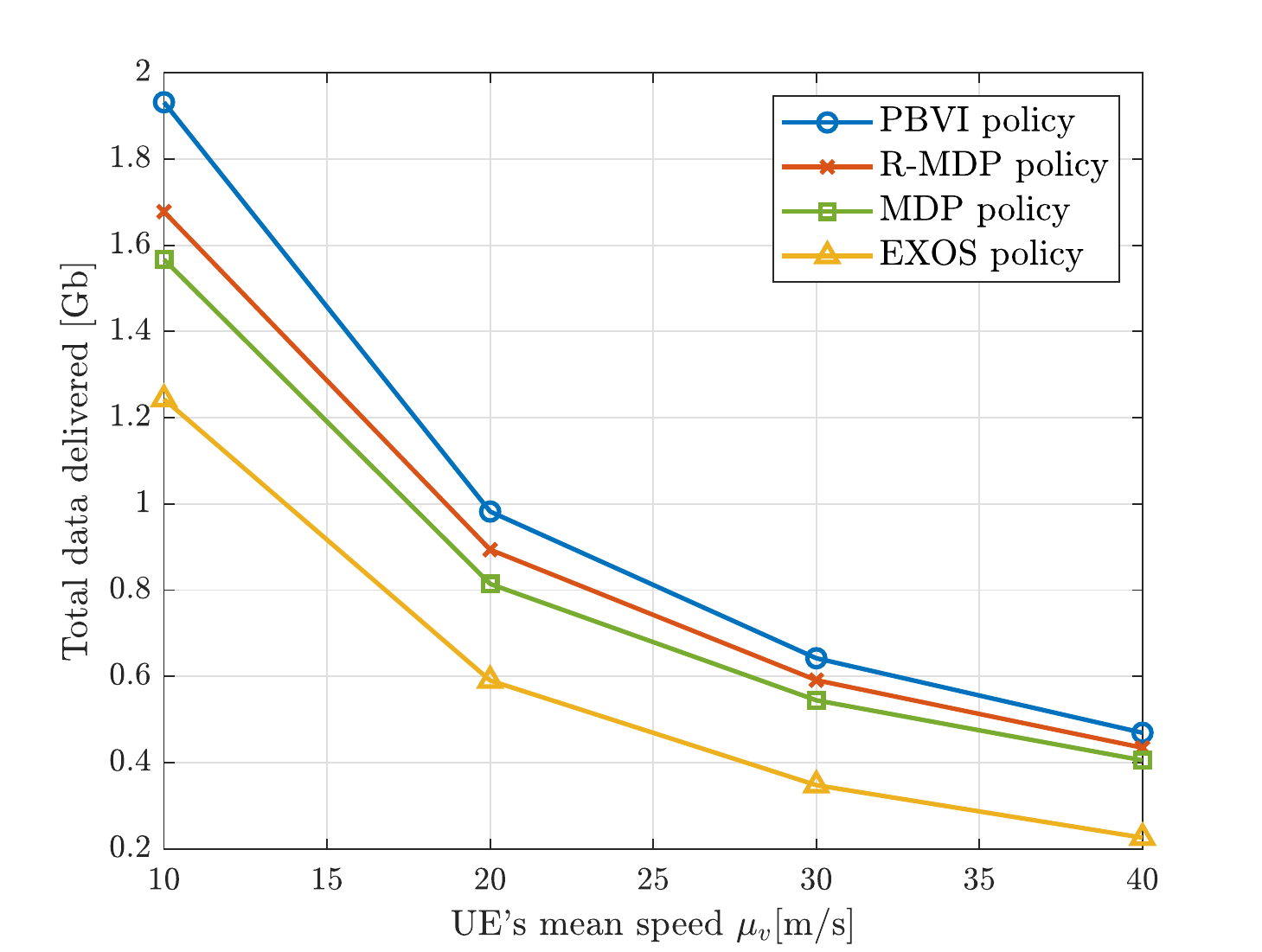}
	\caption{Total data delivered vs mean speed $\mu_v$; ${\rm SNR}_{\rm BA} = 20$dB, $\rho = -10$dB.}
	\label{figure:Thput_vs_rho}
\end{subfigure}
\vspace{2mm}
\caption{
Monte-Carlo simulations based on:
\emph{Markov SBPI with binary SNR model}; ground truth model of beam dynamics; \emph{straight highway, LOS} scenario.\vspace{-4mm}}
\label{fig:fig}
\end{figure*} 

  Comparing DR-VAE with other algorithms, we note that
  Baum-Welch and the naive approach converge faster initially, but get stuck at a suboptimal point as training progresses: upon reaching convergence,
 DR-VAE offers ${\sim}90\%$ (\emph{straight highway, LOS}) and ${\sim} 95\%$ (\emph{T-shaped urban, LOS+NLOS})  reduction in KL divergence compared to the other two methods. 
  This improved accuracy translates into a ${\sim} 10\%$ and ${\sim} 20\%$ 
  spectral efficiency gain 
  in the two scenarios, respectively, and ${\sim} 50\%$ reduction of BT overhead, as depicted in the figure.
  Motivated by the improved early convergence of Baum-Welch, we propose a \emph{hybrid approach}:
in the early learning stages, 
the POMDP adaptation framework uses the
prior beliefs generated based on the model learned by the Baum-Welch algorithm; when the ELBO metric converges, 
it switches to the model provided by the DR-VAE.
   Such a hybrid scheme is possible thanks to the decoupling of policy design and learning of beam dynamics via the dual timescale approach:
   we can thus train multiple models of beam dynamics simultaneously and use the prior belief generated based on any one model.
   As can be seen in the figure (markers), this hybrid approach achieves the best performance
   across all metrics.

\subsection{Spectral Efficiency}
\label{sec:SE}
{In Fig. \ref{figure:SE_vs_SNR} and \ref{figure:SE_vs_SNR2}, we present the spectral efficiency performance as a function of the target SNR for the two scenarios, using}
Monte-Carlo simulation over $10^5$ episodes.
In addition to the three proposed policies (PBVI, MDP and ER-MDP policies), the state-of-the-art STSS \cite{stss} and EXOS, we also evaluate a genie-aided upper bound, computed
using error-free BT feedback and knowledge of the ground truth model of beam dynamics.
 Note that this upper bound is not attainable in practice, due to feedback errors
 and inaccuracies in the learned model.
To assess the impact of the DR-VAE learning algorithm on performance,
all these schemes (except EXOS and the upper bound)
are evaluated using both the DR-VAE estimated model of beam dynamics for the belief updates, computed after  convergence of the learning algorithm (red curves), and also using the ground truth model of beam dynamics (blue curves).

We observe that, for all the schemes, the DR-VAE performs very close to the counterpart using the ground-truth model of beam dynamics, thus confirming the results of \figref{figure:VAE_training1} and \figref{figure:VAE_training2}.
The PBVI policy coupled with DR-VAE yields the best performance, close to the genie-aided upper bound. {For the \emph{straight highway, LOS} scenario, it outperforms the ER-MDP, STSS/MDP (shown together since they exhibit nearly identical performance), and EXOS by up to $8\%$, $16\%$ and $85\%$ in spectral efficiency, respectively. For the \emph{T-shaped urban, LOS+NLOS} scenario, PBVI (coupled with DR-VAE)
shows even bigger gains:
it outperforms ER-MDP, STSS, MDP, and EXOS by up to $14\%$, $20\%$, $28\%$ and $130\%$ in spectral efficiency, respectively.} 

 The gain of PBVI over the other policies is attributed to its enhanced robustness against feedback errors, incorporated via the BT feedback distribution, and its optimized and adaptive BT design.
 Although ER-MDP suffers from a slight performance degradation with respect to PBVI, it outperforms
 all other schemes: 
unlike the MDP-based policy, ER-MDP incorporates error-robustness by using POMDP belief updates, and unlike STSS that uses a fixed BT duration, ER-MDP
adjusts the BT overhead adaptively by leveraging the BT feedback.
This is a remarkable result given that ER-MDP is a low-complexity policy, compared to the POMDP-based STSS.

To assess the validity of the modeling abstractions, we also evaluate the performance based on
the \emph{Markov SBPI with binary SNR model} (solid lines)
 and the
\emph{3D analog beamforming model with UE's mobility} (markers). 
 It can be seen that the values under the two approaches match, thereby verifying the
  accuracy of the proposed modeling abstractions.
  \\\indent
In \figref{figure:SE_vs_rho}, we depict the spectral efficiency 
vs the
misalignment to alignment gain ratio $\rho$ for
the \emph{straight highway, LOS} scenario, under the
\emph{Markov SBPI with binary SNR model}.
We use the ground truth model for this evaluation, since our previous results demonstrated that DR-VAE learns the model accurately.
{Note that larger values of $\rho$ account for the effect of more severe NLOS multipath and sidelobes, causing more frequent feedback errors.}
As expected, the performance of the four policies degrades as $\rho$ increases.
Yet, notably, PBVI degrades the least thanks to its robustness to errors, whereas EXOS degrades the most. Moreover, for small to medium values of $\rho$, ER-MDP performs very close to PBVI, since feedback errors become less frequent.
At the same time, we found that the total optimization and execution time of ER-MDP is ${\sim}5$ times smaller than PBVI, so that ER-MDP offers a low-complexity alternative to PBVI for small values of $\rho$. \\\indent
In \figref{figure:Thput_vs_rho}, we depict the average total data delivered to the UE  successfully  as a function of the mean UE speed $\mu_v$.
We note a monotonically decreasing trend with the mean speed $\mu_v$, attributed to the shorter average episode duration as speed increases, and the exacerbated overhead of BT since SBPIs change more frequently.
The behavior is in line with what we observed previously:
 PBVI outperforms ER-MDP and MDP-based policies, which in turn outperform EXOS. 

 \vspace{-2mm}
\section{Conclusion and Future Work}
\label{sec:conc}
This paper proposed a dual timescale learning and adaptation framework,
in which the beam dynamics are learned to enable predictive beam-tracking, and then exploited to design adaptive beam-training policies.
In the short-timescale,
we developed a POMDP framework to design an approximately optimal policy.
In the long-timescale,
 we designed a deep recurrent variational autoencoder-based learning framework, which uses noisy observations collected under the policy to learn a transition model of beam dynamics. 
Via simulation,
we demonstrated the superior learning performance of the proposed learning framework over the Baum-Welch algorithm and 
a naive learning method, with spectral efficiency gains of $\sim10\%$ and reduced beam-training overhead by $\sim 50\%$.
Our performance evaluation demonstrated that the proposed policy, coupled with the learning framework, yields  near-optimal performance, with
${\sim}16\%$ spectral efficiency gains over a state-of-the-art POMDP policy. 
This work paves the way to future research on beam tracking design in multi-base station and multi-user settings,
by learning and leveraging the joint beam dynamics of multiple users, as well as the extension to hybrid beamforming architectures.

\appendices
\section*{Appendix A: Proof of Theorem \ref{strucPOMDP}}
\begin{proof}
We use induction to prove P1.
P1 holds for $k{=}K$ since $\mathcal Q_K{=}\{\mathbf 0\}{=}\mathcal P(\mathbf 0)$. 
 Assume $\mathcal Q_j$ satisfies P1 for $j{>}k$, for some $k{\in}\mathcal K$.
We show that it implies $\mathcal Q_k$ satisfies P1 as well. Let $\bfalpha{\in}\mathcal Q_k$, with $\mathcal Q_k$ given by \eqref{eq:Q_val_iter}. If $\bfalpha{=}{\rm SE}_{\rm BA}{\cdot}\left(1-\frac{k}{K}\right)\mathbf e_\ell$, then
$
\mathcal P(\bfalpha)=
\cup_{i=1}^{|\mathcal S|}\{{\rm SE}_{\rm BA}{\cdot}(1-k/K)\mathbf e_i\}
\subseteq \mathcal Q_k.
$
Now, consider ${\bfalpha}{=}\sum_{y \in \mathcal Y} \mathbb P_Y(y|\cdot,{\mathcal S}_{\rm BT}){\odot}{\bfalpha}^{(y)}$, 
for some $[\bfalpha^{(y)}]_{y \in \mathcal Y} \in \mathcal Q_{k+n+1}^{n+1}$ and ${\mathcal S}_{\rm BT}\in\mathcal A_{{\rm BT},k}$ with $|{\mathcal S}_{\rm BT}|=n$, so that
 $\bfalpha\in\mathcal Q_k$.
Let ${\rm prm}(\cdot)$ be a generic permutation operator, and $\tilde{\bfalpha}{\triangleq}{\rm prm}(\bfalpha){\in}\mathcal P(\bfalpha)$.
We will show that $\tilde{\bfalpha}{\in}\mathcal Q_k$, hence
 $\mathcal P(\bfalpha)\subseteq\mathcal Q_k$, which proves the induction step and P1.
 
 To prove $\tilde{\bfalpha}{\in}\mathcal Q_k$,
 we need to show that there is a 
 BT action over the BPI set $\tilde{\mathcal S}_{\rm BT}{\in}\mathcal A_{{\rm BT},k}$ and vectors $[\tilde{\bfalpha}^{(\tilde y)}]_{\tilde y \in \tilde{\mathcal Y}}{\in}\mathcal Q_{k+n+1}^{n+1}$ such that $\tilde{\bfalpha}{=}\sum_{\tilde y \in \tilde{\mathcal Y}}  \mathbb P_Y(\tilde y|\cdot,\tilde{\mathcal S}_{\rm BT}){\odot} \tilde{\bfalpha}^{(\tilde y)}$,
where $\tilde{\mathcal Y}{\equiv}\tilde{\mathcal S}_{\rm BT}{\cup}\{0\}$ is the corresponding observation set.
Let $\chi{:}\mathcal S{\mapsto}\mathcal S$ be the function that maps a state $s$ to its permuted state $s'{=}\chi(s)$ through the operator ${\rm prm}(\cdot)$, and let $\Lambda(\cdot)$ be the inverse mapping ($s{=}\Lambda(s')$), so that
for $\tilde{\mathbf a}{=}{\rm prm}(\mathbf a)$ we have 
 $\tilde{\mathbf a}(\chi(s)){=}\mathbf a(s)$  and $\tilde{\mathbf a}(s){=}\mathbf a(\Lambda(s))$.
We define the new BT set $\tilde{\mathcal S}_{\rm BT}{=}\{\chi(s){:}s{\in}{\mathcal S}_{\rm BT}\}$ and new vectors
$\tilde{\bfalpha}^{(0)}{=}{\rm prm}({\bfalpha}^{(0)})$,
$\tilde{\bfalpha}^{(\chi(y))}{=}{\rm prm}({\bfalpha}^{(y)}),\forall y{\in}{\mathcal S}_{\rm BT}$. Clearly, 
$\tilde{\bfalpha}^{(\tilde y)}{\in}\mathcal Q_{k+n+1},\forall \tilde y{\in}\tilde{\mathcal Y}$, from the induction hypothesis, and $\tilde{\mathcal S}_{\rm BT}{\in}\mathcal A_{{\rm BT},k}$,
hence $\sum_{\tilde y \in \tilde{\mathcal Y}} \mathbb P_Y(\tilde y|\cdot,\tilde{\mathcal S}_{\rm BT}){\odot}\tilde{\bfalpha}^{(\tilde y)}{\in}\mathcal Q_k$.
It remains to prove that 
$\tilde{\bfalpha}{=}\sum_{\tilde y \in \tilde{\mathcal Y}} \mathbb P_Y(\tilde y|\cdot,\tilde{\mathcal S}_{\rm BT}){\odot} \tilde{\bfalpha}^{(\tilde y)}$, i.e.,
$\sum_{\tilde y \in \tilde{\mathcal Y}} \mathbb P_Y(\tilde y|s,\tilde{\mathcal S}_{\rm BT}) \tilde{\bfalpha}^{(\tilde y)}(s){=}\bfalpha(\Lambda(s)),\forall s$, since $\tilde{\bfalpha}{=}{\rm prm}(\bfalpha)$. 
In fact, $\forall s\in\mathcal S$, and letting $\Lambda(0)=\chi(0)=0$,
\begin{align*}
&\sum_{\tilde y \in \tilde{\mathcal Y}} \mathbb P_Y(\tilde y|s,\tilde{\mathcal S}_{\rm BT})\tilde{\bfalpha}^{(\tilde y)}(s)
\stackrel{(a)}{=}
\sum_{\tilde y \in \tilde{\mathcal Y}} \mathbb P_Y(\tilde y|s,\tilde{\mathcal S}_{\rm BT}){\bfalpha}^{(\Lambda(\tilde y))}(\Lambda(s))
\\&
\stackrel{(b)}{=}
\sum_{y \in {\mathcal S}_{\rm BT}\cup\{0\}}
\underbrace{\mathbb P_Y(\chi(y)|s,\tilde{\mathcal S}_{\rm BT})}_{\stackrel{(c)}{=}\mathbb P_Y(y|\Lambda(s),{\mathcal S}_{\rm BT})}{\bfalpha}^{(y)}(\Lambda(s))
\stackrel{(d)}{=}\bfalpha(\Lambda(s)),
\end{align*}
where (a) follows from  the definition of $\tilde{\bfalpha}^{(\tilde y)}$;
(b) from $\tilde{\mathcal S}_{\rm BT}=\{\chi(y):y{\in}{\mathcal S}_{\rm BT}\}$ and $\Lambda(\chi(y)){=}y$;
(c) from
the symmetry in the observation model;
 (d) by inspection. P1 is thus proved.

P1 implies that $\mathcal Q_{k}{\equiv}\cup_{\bfalpha\in\mathcal Q_{k}^{\rm sort}}\mathcal P(\bfalpha)$.
P2 then follows since
 $V_k^*(\bfbeta){=}
 \max\limits_{\bfalpha \in\mathcal Q_{k}^{\rm sort}}
 \max\limits_{\tilde{\bfalpha}\in\mathcal P(\bfalpha)}\inprod{\bfbeta }{\tilde{\bfalpha}}
=
\max\limits_{\bfalpha \in\mathcal Q_{k}^{\rm sort}}
\inprod{\mathrm{sort}(\bfbeta)}{\bfalpha}$
and
$\mathrm{sort}(\bfbeta'){=}\mathrm{sort}(\bfbeta),\forall\bfbeta'{\in}\mathcal P(\bfbeta)$.
 \end{proof}
 \vspace{-6mm}
\section*{Appendix B: Proof of Theorem \ref{thm:MDP}}
\begin{proof}
We prove the theorem by induction. 
P1-P4 hold trivially for $k{=}K$ since $V_{K}^*(\mathcal U){=}0$.
Assume P1-P4 hold for $j{>}k$, for some $k{<}K$. We will prove they hold for $k$ as well.

\noindent{\bf P1}: when $\mathcal U{=}\{s\}$, using the induction hypothesis P1 we obtain
$V_{k}^{(\mathrm{BT})}(\{s\},{\mathcal S}_{\rm BT}){=}{\rm SE}_{\rm BA}{\cdot}[1{-}(k{+}1)/K]<V_{k}^{(\mathrm{DC})}(\{s\}){=}{\rm SE}_{\rm BA}{\cdot}(1{-}k/K)$.
  Hence, the optimal action is DC with value
  $V_k^*(\{s\})={\rm SE}_{\rm BA}{\cdot}(1-k/K)$.

\noindent{\bf P2}:   
$V_k^*(\mathcal U){\geq}V_{k+1}^*(\mathcal U)$ follows from
the value iteration algorithm, since 
$1{-}k/K{>}1{-}(k{+}1)/K$,
$\mathcal A_{{\rm BT},k}{\supseteq}\mathcal A_{{\rm BT},k+1}$,
and $V_{k+|{\mathcal S}_{\rm BT}|+1}^*(\mathcal U){\geq}V_{k{+}1{+}|{\mathcal S}_{\rm BT}|{+}1}^*(\mathcal U)$ (induction hypothesis).

\noindent{\bf P3}: consider ${\mathcal S}_{{\rm BT}}^{(0)}$ such that $|{\mathcal S}_{{\rm BT}}^{(0)}{\setminus}\mathcal U|{\geq}1$,
and a new BT set ${\mathcal S}_{{\rm BT}}^{(1)}{=}{\mathcal S}_{{\rm BT}}^{(0)}{\cap}\mathcal U$. Since 
$\mathcal U{\cap}{\mathcal S}_{{\rm BT}}^{(0)}{=}\mathcal U{\cap}{\mathcal S}_{{\rm BT}}^{(1)}$, and
$\mathcal U{\setminus}{\mathcal S}_{{\rm BT}}^{(0)}{=}\mathcal U{\setminus}{\mathcal S}_{{\rm BT}}^{(1)}$,
P2 and $|{\mathcal S}_{{\rm BT}}^{(1)}|{<}|{\mathcal S}_{{\rm BT}}^{(0)}|$
imply suboptimality of ${\mathcal S}_{{\rm BT}}^{(0)}$.
  
\noindent  {\bf P4}: 
  consider $\mathcal U{\equiv}\{s_1,s_2,{\cdots},s_{|\mathcal U|}\}$ with $|\mathcal U|{\geq}2$,
  and a BT set ${\mathcal S}_{{\rm BT}}^{(1)}{=} \{s_1,s_2,{\cdots},s_n\}{\subset}\mathcal U$.
  Without loss of generality, assume that $\bfbeta(s_1){\geq}\bfbeta(s_2){\geq}\dots\bfbeta(s_n)$, 
  $\bfbeta(s_{n+1}){>}\bfbeta(s_n)$
  and $\bfbeta(s_{n+1}){\geq}\bfbeta(s_{n+2}){\geq}\dots\bfbeta(s_{|\mathcal U|})$; 
  in other words, ${\mathcal S}_{{\rm BT}}^{(1)}$ does not scan the $n$ most likely BPIs, since 
  $\bfbeta(s_{n+1}){>}\bfbeta(s_n)$.
  Let a new set ${\mathcal S}_{{\rm BT}}^{(0)}{=} \{s_1,s_2,\cdots,s_{n-1}\}{\cup}\{s_{n+1}\}{\subset}\mathcal U$,
 which scans the more likely $s_{n+1}$ rather than $s_n$.
  We want to show that 
  $V_{k}^{(\mathrm{BT})}(\mathcal U,{\mathcal S}_{{\rm BT}}^{(0)}){\geq}V_{k}^{(\mathrm{BT})}(\mathcal U,{\mathcal S}_{{\rm BT}}^{(1)})$, i.e., ${\mathcal S}_{{\rm BT}}^{(1)}$ is suboptimal.
  Using $\mathcal U{\cap}{\mathcal S}_{{\rm BT}}^{(i)}{\equiv}{\mathcal S}_{{\rm BT}}^{(i)}$,
$\mathcal U{\setminus}{\mathcal S}_{{\rm BT}}^{(i)}{\equiv}\{s_{n+i}\}\cup\{s_{n+2},\dots,s_{|\mathcal U|}\}{\triangleq}\mathcal U^{(i)}$,
and the induction hypothesis P1, we obtain
  \begin{align}
  \label{dfgh}
  \nonumber
V_{k}^{(\mathrm{BT})}(\mathcal U,{\mathcal S}_{{\rm BT}}^{(i)})=&
\frac{\sum_{s\in{\mathcal S}_{{\rm BT}}^{(i)}}\bfbeta(s)}{\sum_{s\in\mathcal U}\bfbeta(s)}{\rm SE}_{\rm BA}{\cdot}\Big(1{-}\frac{k{+}n{+}1}{K}\Big)
\\&
+\frac{\sum_{s\in\mathcal U^{(i)}}\bfbeta(s)}{\sum_{s\in\mathcal U}\bfbeta(s)}V_{k+n+1}^*(\mathcal U^{(i)}).
  \end{align}
    Now, consider $V_{k+n+1}^*(\mathcal U^{(1)})$. If it is optimized by DC, then  
\begin{align*}
&  V_{k{+}n{+}1}^*(\mathcal U^{(1)}){=}
V_{k{+}n{+}1}^{(\mathrm{DC})}(\mathcal U^{(1)}){=}
  \frac{\bfbeta(s_{n{+}1})}{\!\!\!\sum\limits_{s\in\mathcal U^{(1)}}\!\!\!\bfbeta(s)}{\rm SE}_{\rm BA}\Big(\!1{-}\frac{k{+}n{+}1}{K}\!\Big)\!,\\&
  V_{k+n+1}^*(\mathcal U^{(0)})
  {\geq}
  V_{k{+}n{+}1}^{(\mathrm{DC})}(\mathcal U^{(0)}){\geq}
  \frac{\bfbeta(s_{n})}{\!\!\!\!\sum\limits_{s\in\mathcal U^{(0)}}\bfbeta(s)}{\rm SE}_{\rm BA}\Big(\!1{-}\frac{k{+}n{+}1}{K}\!\Big)\!,
\end{align*}
where we used
 $\max\limits_{s\in\mathcal U^{(1)}}\bfbeta(s){=}\bfbeta(s_{n+1})$ and
$\max\limits_{s\in\mathcal U^{(0)}}\bfbeta(s){\geq}\bfbeta(s_{n})$;
the inequality holds since choosing DC may be suboptimal for  $V_{k+n+1}^*(\mathcal U^{(0)})$.
Hence, $V_{k}^{(\mathrm{BT})}(\mathcal U,{\mathcal S}_{{\rm BT}}^{(0)})
-V_{k}^{(\mathrm{BT})}(\mathcal U,{\mathcal S}_{{\rm BT}}^{(1)})\geq 0$.

 Now, consider the case when $V_{k+n+1}^*(\mathcal U^{(1)})$ is optimized by the BT action over the BPI set $\mathcal S_{{\rm BT}}^{(1)}{\equiv}\{s_{n+1},\dots,s_{n+m}\}$ (from the induction hypothesis P4, choosing the $m$ most likely BPIs is optimal), 
and let $\mathcal S_{{\rm BT}}^{(0)}{\equiv}\{s_n\}{\cup}\{s_{n+2},\cdots,s_{n+m}\}$.
Since
$\mathcal U^{(i)}{\cap}{\mathcal S}_{{\rm BT}}^{(i)}{\equiv}{\mathcal S}_{{\rm BT}}^{(i)}$,
$\mathcal U^{(1)}{\setminus}{\mathcal S}_{{\rm BT}}^{(1)}{\equiv}\mathcal U^{(0)}{\setminus}{\mathcal S}_{{\rm BT}}^{(0)}{\triangleq}\mathcal U^{(2)}$,
it follows
\begin{align*}
&  V_{k+n+1}^*(\mathcal U^{(i)})
  {\geq}
V_{k{+}n{+}1}^{(\mathrm{BT})}(\mathcal U^{(i)},{\mathcal S}_{{\rm BT}}^{(i)})
{=}
\frac{\sum_{s\in\mathcal U^{(2)}}\bfbeta(s)}{\sum_{s\in\mathcal U^{(i)}}\bfbeta(s)}
\\&
{\times}V_{k+n+m+2}^*(\mathcal U^{(2)})
{+}\frac{\sum_{s\in{\mathcal S}_{{\rm BT}}^{(i)}}\bfbeta(s)}{\sum_{s\in\mathcal U^{(i)}}\bfbeta(s)}
{\rm SE}_{\rm BA}{\cdot}\left(\!1{-}\frac{k{+}n{+}m{+}2}{K}\!\right)\!,
\end{align*}
with equality for $i{=}1$ (since $V_{k+n+1}^*(\mathcal U^{(1)})$ is optimized by BT over the set $\mathcal S_{{\rm BT}}^{(1)}$),
and the inequality for $i{=}0$ holds since choosing BT with set 
${\mathcal S}_{{\rm BT}}^{(0)}$ may be suboptimal for $V_{k+n+1}^*(\mathcal U^{(0)})$.
Using the fact that $\bfbeta(s_{n+1}){-}\bfbeta(s_n){>}0$, it follows
that
$V_{k}^{(\mathrm{BT})}(\mathcal U,{\mathcal S}_{{\rm BT}}^{(0)}){\geq} V_{k}^{(\mathrm{BT})}(\mathcal U,{\mathcal S}_{{\rm BT}}^{(1)})$, and 
${\mathcal S}_{{\rm BT}}^{(1)}$ is suboptimal.
The induction step, hence the theorem, are proved.
\end{proof}
\vspace{-6mm}
\bibliographystyle{IEEEtran}
\bibliography{IEEEabrv,bibliography}

\begin{IEEEbiography}
[{\includegraphics[width=1in,height=1.25in,clip,keepaspectratio]{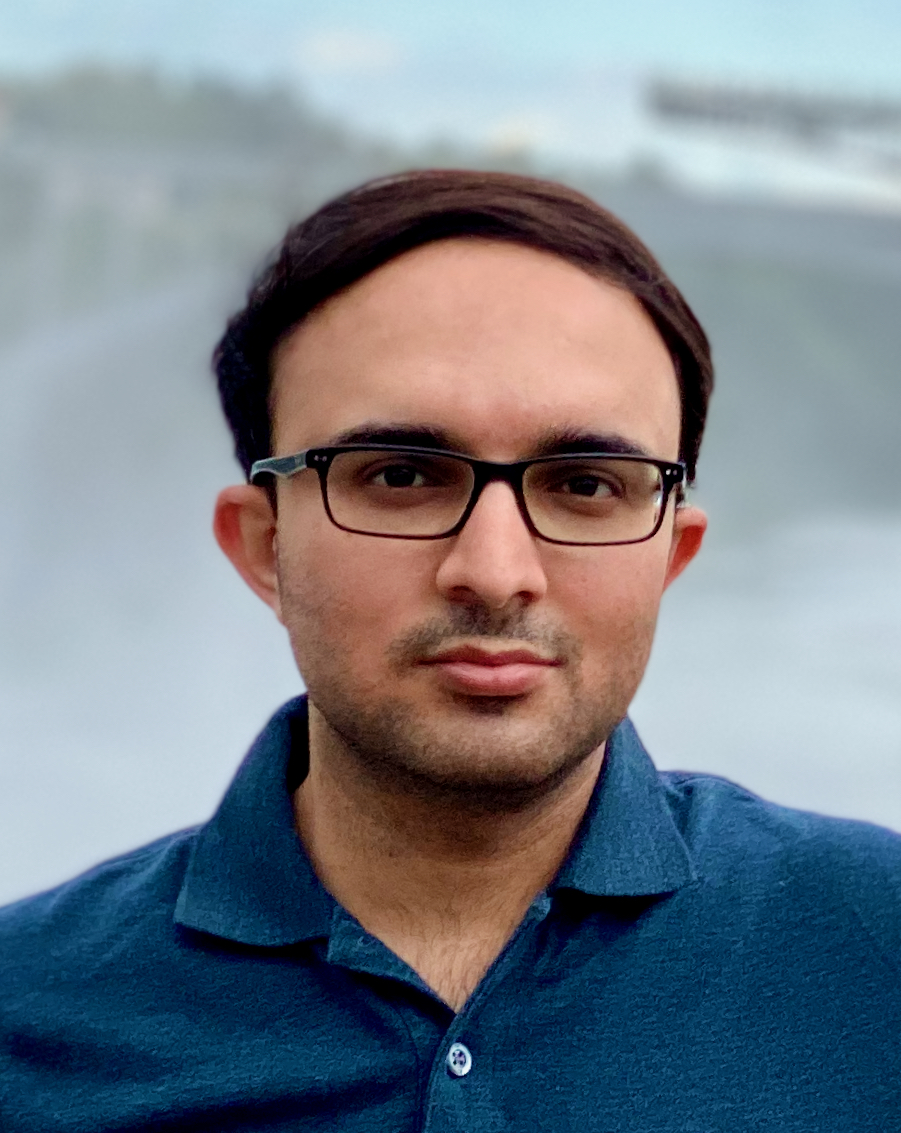}}]{Muddassar Hussain}
received   the   Bachelors in  electrical  engineering from  National University of Sciences and Technology  (NUST),  Islamabad,  Pakistan, in 2013. He received the Master's and PhD degrees in electrical and computer engineering from Purdue University, West Lafayette, IN, USA, in 2019 and 2021, respectively.  Currently, he is a senior systems engineer at Qualcomm Technologies. His research interest lie in the areas of optimization algorithms design,  stochastic optimal control, machine learning and reinforcement learning with applications to 5G wireless communication system design. He is reviewer of several IEEE journals and conferences, including IEEE Transactions on Wireless Communications, IEEE Transactions on Signal Processing, IEEE Transactions on Vehicular Technologies, IEEE Globecom, IEEE ICC. 
\end{IEEEbiography}
\begin{IEEEbiography}[{\includegraphics[width=1in,height=1.25in,clip,keepaspectratio]{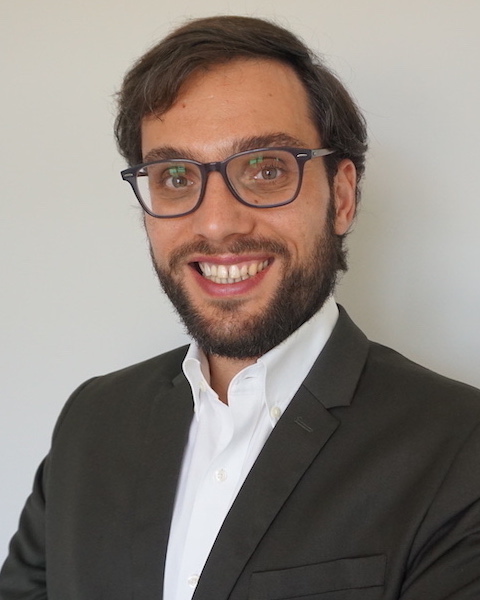}}]{Nicol\`{o} Michelusi}
(Senior Member, IEEE) received the B.Sc. (with honors), M.Sc. (with honors), and Ph.D. degrees from the University of Padova, Italy, in 2006, 2009, and 2013, respectively, and the M.Sc. degree in telecommunications engineering from the Technical University of Denmark, Denmark, in 2009, as part of the T.I.M.E. double degree program. From 2013 to 2015, he was a Postdoctoral Research Fellow with the Ming-Hsieh Department of Electrical Engineering, University of Southern California, Los Angeles, CA, USA, and from 2016 to 2020, he was an Assistant Professor with the School of Electrical and Computer Engineering, Purdue University, West Lafayette, IN, USA. He is currently an Assistant Professor with the School of Electrical, Computer and Energy Engineering, Arizona State University, Tempe, AZ, USA. His research interests include 5G wireless networks, millimeter-wave communications, stochastic optimization, distributed optimization, and federated learning over wireless. He is currently an Associate Editor for the IEEE TRANSACTIONS ON WIRELESS COMMUNICATIONS, and a Reviewer for several IEEE journals. He was the Co-Chair for the Distributed Machine Learning and Fog Network workshop at the IEEE INFOCOM 2021, the Wireless Communications Symposium at the IEEE Globecom 2020, the IoT, M2M, Sensor Networks, and Ad-Hoc Networking track at the IEEE VTC 2020, and the Cognitive Computing and Networking symposium at the ICNC 2018. He received the NSF CAREER award in 2021.
\end{IEEEbiography}

\end{document}